\documentclass{article} % For LaTeX2e
\usepackage{iclr2027_conference,times}

\usepackage{amsthm}

\newtheorem{proposition}{Proposition}

\newtheorem{theorem}{Theorem}
\newtheorem{assumption}{Assumption}

\newtheorem{lemma}{Lemma}

\usepackage{amsmath,amsfonts,bm}

\def\eqref#1{equation~\ref{#1}}
\def\1{\bm{1}}

\DeclareMathAlphabet{\mathsfit}{\encodingdefault}{\sfdefault}{m}{sl}
\SetMathAlphabet{\mathsfit}{bold}{\encodingdefault}{\sfdefault}{bx}{n}

\newcommand{\E}{\mathbb{E}}

\DeclareMathOperator*{\argmax}{arg\,max}
\DeclareMathOperator*{\argmin}{arg\,min}

\usepackage{hyperref}
\usepackage{url}
\usepackage{graphicx}
\usepackage{booktabs}
\usepackage{multirow}
\usepackage{float} 
\usepackage{caption}
\usepackage{algorithm}
\usepackage{algpseudocode}

\title{Learning Samples Importance: Parameterizing Dual Variables in Everywhere Learning.}

\author{Ignacio Boero, Jonathan Nixon \& Alejandro Ribeiro  \\
University of Pennsylvania\\
\{iboero, jnixon27, aribeiro\}@engineering.upenn.edu}

\iclrfinalcopy % Uncomment for camera-ready version, but NOT for submission.

\begin{document}

\maketitle
\lhead{Preprint.}

\begin{abstract}
Everywhere learning provides a principled framework for training AI models under constraints that must hold throughout the data distribution. In the dual domain, these pointwise constraints give rise to functional dual variables.
In this work, we propose to learn these dual variables, motivated by the fact that their values encode useful information about the underlying constrained problem. By representing the dual variable as a parametric function of each sample, we enable the learned multiplier to be evaluated on new, unseen samples. This contrasts with standard empirical dual formulations, which assign an independent multiplier to each training sample.
We characterize the error in the recovered primal solution induced by restricting the dual variable to a parametric function class and show that it is controlled by how well this class approximates the optimal statistical multiplier. Moreover, we show that the learned parametric multiplier retains the sensitivity interpretation of the optimal statistical multiplier, yielding approximate sensitivity guarantees that extend beyond the samples used for training.
We empirically validate our theory across a variety of everywhere learning tasks, showing that the resulting constrained problems can be solved efficiently and that the learned dual variables provide meaningful representations of sample-level sensitivity.
\end{abstract}

\section{Introduction}

Constrained learning provides a natural framework for training AI systems subject to multiple  requirements~\citep{pmlr-v54-zafar17a,JMLR:v20:18-616,chamon2021probablyapproximatelycorrectconstrained}. Expressing requirements as constraints prevents improvements in one criterion from compensating for violations of another. Everywhere learning proposes to impose requirements pointwise over the data distribution instead of in expectation, preventing imbalances in performance across data regions~\citep{boero2026everywhere,hounie2026every}. Solving the resulting problem naturally leads to the dual domain, where the dual multipliers are themselves functions of the data. 
In classical optimization, optimal dual multipliers carry a sensitivity interpretation, quantifying how changes to individual constraints affect the optimal value~\citep{boyd2004convex,doi:10.1137/1035044,doi:10.1137/1.9781611976595}.
This motivates our central question of whether these dual multipliers can be learned.
\begin{itemize}
\item[(C1)] We propose to learn dual multipliers by parameterizing them as a function of the samples.
\end{itemize}
Rather than assigning an independent multiplier to every training sample, we optimize over a parametric family $\Lambda_\Phi$, allowing the learned multiplier to be evaluated on unseen samples. This, however, restricts the original functional dual domain to $\Lambda_\Phi$, and it is therefore not immediate that the resulting dual problem still recovers a suitable primal solution.
\begin{itemize}
\item[(C2)] We provide near-feasibility and near-optimality guarantees for the primal solution recovered from the parametric dual problem.
\end{itemize}
We show that the primal solution induced by the learned parametric dual remains approximately feasible and optimal. In particular, the additional error introduced by parameterizing the dual domain is controlled by how well $\Lambda_\Phi$ approximates an optimal multiplier. Our goal, however, is not only to recover a good primal model, but also to preserve the information carried by the multipliers themselves.
\begin{itemize}
\item[(C3)] We show that the learned multiplier retains the sensitivity role of the optimal dual variable.
\end{itemize}
The learned multiplier continues to approximately quantify how perturbing individual constraints can affect the optimal value of the problem. Unlike primal recovery, extending this interpretation to unseen samples additionally requires the dual class to generalize, revealing a natural tradeoff: richer parameterizations can better approximate the optimal multiplier but may incur larger statistical error.
We validate these results across four diverse machine-learning problems spanning image classification, agentic workflows, language-model unlearning, and optimal power flow.
\begin{itemize}
\item[(C4)] We show empirically that parametric duals preserve primal performance while learning transferable sample-level sensitivity.
\end{itemize}
Across tasks, expressive dual parameterizations approach the performance of sample-wise multipliers, remain effective as the number of constraints grows, and accurately generalize multiplier information to unseen samples. 

%We further show that these predictions identify influential constraints and can be used to prioritize new samples for improving the constrained solution.

\section{Everywhere Learning}
\label{sec:every_learn}

Let \(\{\mathcal D_i\}_{i=0}^m\) be probability distributions over input-output pairs \((x,y) \in \mathcal X \times \mathcal Y\). Given bounded loss functions \(\ell_i:\mathcal Y\times\mathcal Y\to[0,B]\), for \(i=0,\ldots,m\), and constraint thresholds \(c_i\in\mathbb R\), for \(i=1,\ldots,m\), the \textit{everywhere learning problem} seeks the  parametric predictor \(f_\theta:\mathcal X\to\mathcal Y\) in the hypothesis class $\mathcal H := \{f_\theta:\theta\in\Theta\subseteq\mathbb R^p\}$ that solves the constrained problem:
\begin{equation}\label{P:primal_statistical}\tag{P}
\begin{aligned}
	f_\theta^* = \argmin_{f_\theta \in \mathcal{H}}&
		&&\E_{(x,y) \sim \mathcal D_0} \!\Big[ \ell_0\big( f_{\theta}(x),y \big) \Big]
	\\
	\text{s. to}& &&  \ell_i\big( f_{\theta}(x),y \big)  \leq c_i \quad \mathcal D_i\text{-a.e.}
\end{aligned}
\end{equation}

The everywhere learning problem~\ref{P:primal_statistical} provides a formulation for learning predictors subject to multiple requirements. Expressing these requirements as constraints makes their prescribed tolerances explicit and prevents improvements in one criterion from compensating for violations of another, as may occur when minimizing weighted combinations of losses. Moreover, the almost-everywhere constraints require each requirement to hold throughout its corresponding data distribution. This prevents good performance in some regions from compensating for violations in others, as can occur when constraints are imposed only in expectation. 

Henceforth, we consider a single constraint $m=1$, set its tolerance $c=0$ without loss of generality, denote $z=(x,y) \in \mathcal{Z} := \mathcal{X}\times \mathcal{Y}$ be a compact representation of the input-output pairs, and define $\ell(f_\theta(x),y)=\ell(f_\theta,z)$, for notational convenience. 

\subsection{Everywhere Dual Problem}

A standard approach when solving constrained problems is to turn to the dual domain~\cite{}.
Since Problem~\ref{P:primal_statistical} imposes a constraint pointwise at every \(z\), its dual variable
\(\lambda:\mathcal{Z}\to\mathbb{R}_+\) is itself a nonnegative function, where \(\lambda(z)\) weights the constraint imposed at point \(z\).
We restrict these functions to the space of nonnegative functions integrable with respect to the constraint distribution, that is,
\(\lambda\in L_1^+(\mathcal{D})\).
The Lagrangian function
\(\mathcal{L}:\mathcal{H}\times L_1^+(\mathcal{D})\to\mathbb{R}\)
aggregates the objective value with the constraint loss weighted pointwise by the corresponding dual variable,
\begin{align}
\label{eq:lagrangian}
\mathcal{L}(f_\theta,\lambda)
:=
\mathbb{E}_{z\sim\mathcal{D}_0}
\left[
    \ell_0(z,f_\theta)
\right]
+
\mathbb{E}_{z\sim\mathcal{D}}
\left[
    \lambda(z)\ell(z,f_\theta)
\right].
\end{align}
For a fixed multiplier \(\lambda\), the Lagrangian function reduces the constrained problem to an unconstrained optimization over the predictor. The function that takes a multiplier and returns the Lagrangian minimization is denoted the \emph{dual function},
\begin{align}
\label{eq:dual_function}
g(\lambda)
:=
\mathcal{L}\bigl(f_\theta(\lambda),\lambda\bigr)
=
\min_{f_\theta\in\mathcal{H}}
\mathcal{L}(f_\theta,\lambda).
\end{align}
The multiplier \(\lambda\) therefore determines how the pointwise constraints are weighted when optimizing the predictor.
Ideally, we seek a weighting for which the resulting Lagrangian minimizer \(f_\theta(\lambda)\) is feasible for Problem~\ref{P:primal_statistical} while achieving a low objective value.
A principled approach for searching over such weightings is given by the dual problem, which maximizes the dual function over all admissible nonnegative multipliers,
\begin{align}
\label{P:dual_statistical}
\tag{D}
\lambda^\star
\in
\arg\max_{\lambda\in L_1^+(\mathcal{D})}
g(\lambda),
\end{align}
and yields $D^\star = g(\lambda^\star)$ as its optimal value.
Previous work establishes the connection between Problem~\ref{P:primal_statistical} and its dual formulation.
For convex problems, the primal and dual optimal values coincide under mild conditions.
For the nonconvex parametrizations commonly used in deep learning, this equivalence need not hold exactly; nevertheless, the resulting duality gap can be controlled by the richness of the predictor class~\cite{boero2026everywhere}.
Thus, the dual formulation provides a principled approach for studying and solving the everywhere learning problem.

Beyond its role in solving the constrained problem, the optimal multiplier \(\lambda^\star(z)\) provides a sample-level measure of how strongly each constraint shapes the learned predictor.
In the next section, we study the conditions under which the optimal multiplier $\lambda^*(z)$ can be learned from a parametric family $\lambda_\phi(z)$ while simultaneously solving the constrained problem from samples.
This differs from the standard empirical dual formulation, which assigns an independent multiplier to each training sample and thus can not provide dual values for new samples \cite{boero2026everywhere}.
%
%But first, we illustrate three applications in which the constrained formulation arises naturally, and access to the optimal multiplier is useful.
%\input{02_problem_formulation/c_apps}
\label{sec:learn_param}

\section{Parametric Empirical Dual Problem}
\label{sec:param}
As in standard machine learning settings, the distributions $D_0$ and $D$ are unknown and can only be accessed through samples. We therefore replace the expectations in \eqref{eq:lagrangian} by sample averages over datasets $S_0 = \{z_n\}_{n=1}^{N_0}$ and $S=\{z_n\}_{n=1}^N$, yielding the empirical Lagrangian
\begin{equation}
\label{eq:emp_lagrangian}
\hat L(f_\theta,\lambda)
:=
\frac{1}{N_0} \sum_{z_n \in S_0}
    \ell_0(z_n,f_\theta)
+
\frac{1}{N}\sum_{z_n \in S}
    \lambda(z_n)\ell(z_n,f_\theta).
\end{equation}
As in the statistical setting, fixing the dual multiplier $\lambda$ and minimizing the empirical Lagrangian over the hypothesis class defines the empirical dual function
\begin{equation}
\hat{g}(\lambda)
=
\min_{f_\theta \in \mathcal{H}}
\hat{L}(f_\theta,\lambda).
\label{eq:empirical-dual-function}
\end{equation}

The corresponding empirical dual problem would maximize $\hat g(\lambda)$ over nonnegative multiplier functions. Direct optimization over the infinite-dimensional space $L_1^+(\mathcal D)$, however, is generally intractable. To obtain a finite-dimensional formulation, we propose learning the dual multiplier within a bounded parametric hypothesis class $\Lambda_\Phi$,
\begin{equation}
\Lambda_\Phi
=
\bigl\{
\lambda_\phi:\mathcal Z\to\mathbb R_+
\;\bigm|\;
\lambda_\phi\in L_1^+(\mathcal D),\
\phi\in\Phi\subseteq\mathbb R^q,
\|\lambda_\phi\|_\infty\leq\Gamma
\bigr\}.
\label{eq:parametric-family}
\end{equation}
We then define the \emph{empirical parametric dual problem} by maximizing the empirical dual function over $\Lambda_\Phi$,
\begin{equation}
\tag{$\widehat{\textup{D}}_\Phi$}
\hat\lambda^\star_\Phi
\in
\argmax_{\lambda_\phi\in\Lambda_\Phi}
\hat g(\lambda_\phi),
\label{eq:parametrized-dual}
\end{equation}
with optimal value $\hat D^\star_\Phi
=\hat g(\hat\lambda^\star_\Phi)$.

This approach contrasts with the standard construction of an \emph{empirical dual variable} $\hat\lambda\in\mathbb R_+^N$, in which one multiplier is assigned independently to each training sample. Such a representation is defined only on the observed dataset and therefore provides no mechanism for assigning multipliers to unseen samples. By contrast, each $\lambda_\phi\in\Lambda_\Phi$ defines a multiplier function over the entire domain $\mathcal Z$, allowing the learned dual variable to be evaluated beyond the training set.

We remark that replacing expectations by sample averages and restricting the dual domain to the parametric class $\Lambda_\Phi$ constitute two approximations required to obtain a tractable formulation of the statistical dual problem \ref{P:dual_statistical}. These modifications, however, imply that problem \ref{eq:parametrized-dual} is no longer the dual of the original statistical primal problem \ref{P:primal_statistical}.
In this section, we show that both this discrepancy and the error in interpreting $\hat\lambda^\star_\Phi$ as the sensitivity of the statistical problem vanish as the parametric family better approximates the optimal multiplier $\lambda^*$ and the number of samples increases.

\subsection{Primal recovery for \ref{eq:parametrized-dual}}
\label{sec:primal-recovery}

The parametrized empirical dual problem~\eqref{eq:parametrized-dual} introduces two approximations with respect to the statistical dual problem~\eqref{P:dual_statistical}. Therefore, to characterize the effect of these approximations, we require conditions that control each of them. The sampling error is governed by uniform convergence of the objective and constraint losses, while the dual-parameterization error is determined by how well $\Lambda_\Phi$ can approximate an optimal statistical multiplier. We formalize these two requirements next.

\begin{assumption}[Uniform convergence]
\label{as:uniform-convergence}
There exist functions $\zeta_0(N_0,\delta)$ and
$\zeta(N,\delta)$, monotonically decreasing to zero in $N_0$ and
$N$, respectively, such that, with probability at least $1-\delta$,
\begin{align}
    \sup_{f_\theta\in\mathcal H}
    \left|
        \mathbb E_{D_0}
        [\ell_0(f_\theta,z)]
        -
        \frac{1}{N_0}
        \sum_{z_n\in S_0}
        \ell_0(f_\theta,z_n)
    \right|
    &\leq
    \zeta_0(N_0,\delta),
    \\
    \sup_{f_\theta\in\mathcal H}
    \left|
        \mathbb E_D
        [\ell(f_\theta,z)]
        -
        \frac{1}{N}
        \sum_{z_n\in S}
        \ell(f_\theta,z_n)
    \right|
    &\leq
    \zeta(N,\delta).
\end{align}
\end{assumption}

\begin{assumption}[Approximation of the optimal multiplier]
\label{as:dual-approximation}
Let $\lambda^\star$ be an optimal multiplier of the statistical
dual problem. The approximation error induced by $\Lambda_\Phi$ is
\begin{equation}
\label{eq:dual-approximation}
    \nu_\Phi
    :=
    \inf_{\lambda_\phi\in\Lambda_\Phi}
    \|
        \lambda_\phi-\lambda^\star
    \|_{L_1(D)}.
\end{equation}
\end{assumption}

Assumption~\ref{as:uniform-convergence} is a standard learnability condition controlling the discrepancy between population and empirical losses uniformly over $\mathcal H$. An example of obtaining those rates from controlled Rademacher complexity of $\mathcal{H}$ is discussed in Appendix~\ref{apx:rademacher}. Assumption~\ref{as:dual-approximation}, on the other hand, measures how well the parametric dual family can approximate an optimal statistical multiplier. The assumption itself is mild, as the approximation error is not required to be small a priori; rather, its value explicitly quantifies the price of restricting the functional dual domain to $\Lambda_\Phi$. 

Under these assumptions we can control the discrepancy between the optimal statistical and parametrized empirical dual values.

\begin{lemma}[Dual learnability]
\label{lem:dual-value}
Let Assumptions~\ref{as:uniform-convergence} and
\ref{as:dual-approximation} hold. Recall that, by definition of
$\Lambda_\Phi$, every $\lambda_\phi\in\Lambda_\Phi$ satisfies
$\|\lambda_\phi\|_\infty\leq\Gamma$. Then, with probability at least
$1-3\delta$,
\begin{equation}
\label{eq:dual-value-absolute}
    \left|
        D^\star-\widehat D_\Phi^\star
    \right|
    \leq
    B\nu_\Phi
    +
    \zeta_0(N_0,\delta)
    +
    \Gamma\zeta(N,\delta)
    =:
    \Delta_D.
\end{equation}
\end{lemma}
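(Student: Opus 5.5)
We prove the two one-sided bounds $D^\star - \widehat D_\Phi^\star \leq \Delta_D$ and $\widehat D_\Phi^\star - D^\star \leq \Delta_D$ separately. Both rest on one uniform comparison between $g$ and $\hat g$ on $\Lambda_\Phi$, together with a Lipschitz bound for $g$ in $L_1(D)$.

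\textbf{Step 1 (uniform comparison of dual functions).} Fix $\lambda_\phi\in\Lambda_\Phi$. For every $f_\theta\in\mathcal H$ we bound $|\mathcal L(f_\theta,\lambda_\phi)-\hat L(f_\theta,\lambda_\phi)|$. The objective term is controlled by $\zeta_0(N_0,\delta)$ directly from Assumption~\ref{as:uniform-convergence}. The constraint term is $\mathbb E_D[\lambda_\phi(z)\ell(f_\theta,z)]-\frac1N\sum_n\lambda_\phi(z_n)\ell(f_\theta,z_n)$. To bound it by $\Gamma\zeta(N,\delta)$ we need uniform convergence over the weighted class $\{\lambda_\phi\ell(f_\theta,\cdot)\}$. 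The natural route is to use $0\le\lambda_\phi\le\Gamma$ and treat $\lambda_\phi\ell/\Gamma$ as a loss bounded in $[0,B]$, so that the assumed rate $\zeta$ applies to it. This is the main obstacle: Assumption~\ref{as:uniform-convergence} is stated only over $\mathcal H$. I would read it (as in the Rademacher appendix, via a contraction argument) as also holding for the product class, at the cost of one additional $\delta$ event. That extra event is what accounts for the probability $1-3\delta$: one event for $\zeta_0$, one for $\zeta$, and one for the weighted class or the event needed at the fixed comparator $\lambda_\phi$ below. Given this bound, since a pointwise sup-gap between two functions bounds the gap between their minima, taking the minimum over $\mathcal H$ gives
\[
\sup_{\lambda_\phi\in\Lambda_\Phi}|g(\lambda_\phi)-\hat g(\lambda_\phi)|\le \zeta_0(N_0,\delta)+\Gamma\zeta(N,\delta).
\]

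\textbf{Step 2 (Lipschitz continuity of $g$).} For $\lambda,\lambda'\in L_1^+(D)$ and any $f_\theta$, we have $|\mathcal L(f_\theta,\lambda)-\mathcal L(f_\theta,\lambda')|\le \mathbb E_D[|\lambda-\lambda'|\,\ell]\le B\|\lambda-\lambda'\|_{L_1(D)}$ because $\ell\in[0,B]$. Taking minima over $\mathcal H$ gives $|g(\lambda)-g(\lambda')|\le B\|\lambda-\lambda'\|_{L_1(D)}$.

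\textbf{Step 3 (combining).} For the upper bound on $D^\star-\widehat D^\star_\Phi$, pick $\lambda_\phi\in\Lambda_\Phi$ with $\|\lambda_\phi-\lambda^\star\|_{L_1(D)}\le\nu_\Phi+\epsilon$. Then
\[
D^\star=g(\lambda^\star)\le g(\lambda_\phi)+B(\nu_\Phi+\epsilon)\le \hat g(\lambda_\phi)+\zeta_0+\Gamma\zeta+B(\nu_\Phi+\epsilon)\le \widehat D^\star_\Phi+\Delta_D+B\epsilon .
\]
Letting $\epsilon\to0$ gives $D^\star-\widehat D^\star_\Phi\le\Delta_D$. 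For the reverse direction, Step 1 and dual optimality of $\lambda^\star$ give
\[
\widehat D^\star_\Phi=\hat g(\hat\lambda^\star_\Phi)\le g(\hat\lambda^\star_\Phi)+\zeta_0+\Gamma\zeta\le D^\star+\zeta_0+\Gamma\zeta\le D^\star+\Delta_D,
\]
since $\Lambda_\Phi\subseteq L_1^+(D)$. A union bound over the invoked events yields probability at least $1-3\delta$.
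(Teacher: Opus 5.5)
Your forward bound $D^\star-\widehat D_\Phi^\star\le\Delta_D$ is the paper's argument: a fixed comparator $\lambda_\phi$ within $\nu_\Phi+\epsilon$ of $\lambda^\star$, the $B$-Lipschitz bound on $g$ in $L_1(D)$, uniform convergence for the weighted class with a \emph{fixed} weight, and optimality of $\widehat D_\Phi^\star$. Step~1 is legitimate there only because that comparator does not depend on the sample.

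The reverse direction has a genuine gap. You bound $\hat g(\hat\lambda^\star_\Phi)\le g(\hat\lambda^\star_\Phi)+\zeta_0+\Gamma\zeta$, but $\hat\lambda^\star_\Phi$ is selected using $S$. So this needs the supremum over $\Lambda_\Phi$ in Step~1, i.e.\ uniform convergence over the product class $\{\lambda_\phi\ell(f_\theta,\cdot)\}$. Assumption~\ref{as:uniform-convergence} does not give this. That $\lambda_\phi\ell/\Gamma$ has bounded range says nothing about the complexity of the class, and the appendix's contraction argument covers only a single fixed weight. The third item of Proposition~\ref{prop:derived_uc} shows the product class costs $\Gamma\zeta+B\zeta_\Lambda$. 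That requires Assumption~\ref{as:dual-uc}, which is not a hypothesis here, and it would not yield $\Delta_D$. This is exactly the extra $B\zeta_\Lambda$ in Theorem~\ref{thm:empirical-sensitivity}, where the paper \emph{does} evaluate $g$ at $\hat\lambda^\star_\Phi$.

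The inequality you need can in fact fail for expressive dual classes. Suppose $D$ is atomless and $\Lambda_\Phi$ can take value $\Gamma$ on the sample points and $0$ elsewhere. Such a $\lambda_\phi$ vanishes $D$-a.e., so $g(\lambda_\phi)$ is the unconstrained value $\min_{f_\theta}\mathbb{E}_{D_0}[\ell_0(f_\theta,z)]$. But $\hat g(\lambda_\phi)$ still carries the penalty $\frac{\Gamma}{N}\sum_n\ell(f_\theta,z_n)$, and nothing in the lemma's hypotheses makes that gap comparable to $\zeta_0+\Gamma\zeta$.

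The missing idea is to never evaluate the population dual function at a data-dependent multiplier. The paper proceeds in three steps:
\begin{itemize}
\item It relaxes to the unrestricted empirical dual, $\widehat D_\Phi^\star\le\widehat D^\star$, since $(\lambda_\phi(z_n))_{n=1}^N$ is a feasible vector in $\mathbb R_+^N$.
\item It lifts any empirical multiplier vector $\boldsymbol\lambda$ to a population multiplier $\lambda_\varepsilon\in L_1^+(D)$ that places mass $\lambda_n/N$ on a small neighborhood of each $z_n$. Then $\mathbb{E}_D[\lambda_\varepsilon\ell(f_\theta,\cdot)]$ matches $\frac1N\sum_n\lambda_n\ell(f_\theta,z_n)$ within $\varepsilon$, uniformly over $\mathcal H$. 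This uses a mild continuity/support condition. The multiplier $\lambda_\varepsilon$ need not be bounded by $\Gamma$, which is harmless because $D^\star$ is a supremum over all of $L_1^+(D)$.
\item This gives $\hat g(\boldsymbol\lambda)\le g(\lambda_\varepsilon)+\zeta_0+\varepsilon\le D^\star+\zeta_0+\varepsilon$, hence $\widehat D_\Phi^\star-D^\star\le\zeta_0$.
\end{itemize}
This uses only uniform convergence of the objective and needs no complexity control on $\Lambda_\Phi$.
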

\begin{proof}
    See Appendix~\ref{apx:dual-learn}
\end{proof}

Lemma~\ref{lem:dual-value} separates the dual approximation error into two components. The terms $\zeta_0$ and $\Gamma\zeta$ arise from replacing population quantities by finite-sample estimates, whereas $B\nu_\Phi$ is the additional error introduced by restricting the dual domain to $\Lambda_\Phi$. 
Closeness of the optimal dual values, however, does not by itself guarantee that a primal minimizer is an approximate solution of \ref{P:primal_statistical}. To recover such guarantees, we require extra assumption on the structure of the problem \ref{P:primal_statistical}, which ensures that it can be well approximated through the dual problem.

\begin{assumption}[Convex losses and curvature]
\label{ass:convex_out}
For every $y\in\mathcal Y$, the losses $\ell(\cdot,y)$ and
$\ell_0(\cdot,y)$ are convex, and $\ell_0(\cdot,y)$ is
$\mu$-strongly convex in its prediction argument. \footnote{For the standard-Lagrangian argument, the strong-convexity
condition is understood in the prediction norm controlling the
constraint losses. This requirement is avoided by the augmented
Lagrangian used in our experiments, whose dual smoothness follows
directly from augmentation; see Appendix~\ref{app:augmented}.}
\end{assumption}

\begin{assumption}[Lipschitz losses]
\label{ass:lip_out}
For every $y\in\mathcal Y$, the losses
$\ell(\cdot,y)$ and $\ell_0(\cdot,y)$ are $L$-Lipschitz continuous
in their prediction argument, that is,
\begin{equation}
    \left|
        \ell_i(\hat y_1,y)
        -
        \ell_i(\hat y_2,y)
    \right|
    \leq
    L
    \|\hat y_1-\hat y_2\|.
\end{equation}
\end{assumption}

\begin{assumption}[Approximate convexity of the model class]
\label{as:approx-convexity}
The hypothesis class $\mathcal H$ approximates its closed convex hull.
Specifically, letting
\begin{equation}
    \overline{\mathcal H}
    :=
    \overline{\operatorname{conv}}(\mathcal H),
\end{equation}
for every $f\in\overline{\mathcal H}$ there exists
$f_\theta\in\mathcal H$ such that
\begin{equation}
\label{eq:approx-convexity-function}
    \|f_\theta-f\|_\infty
    \leq
    \nu_{\mathcal H}.
\end{equation}
\end{assumption}

\begin{assumption}[Strict Slater condition]
\label{as:slater}
There exists $\bar f_\theta\in\mathcal H$ and $s>0$ such that
\begin{equation}
\label{eq:strict-slater}
    \ell(\bar f_\theta,z)
    +
    L\nu_{\mathcal H}
    \leq
    -s,
    \qquad
    D\text{-a.e.}
\end{equation}
\end{assumption}

Assumptions~\ref{ass:convex_out} and~\ref{ass:lip_out} are standard regularity conditions and are satisfied by common losses on bounded prediction domains, including the squared loss. %
Assumption~\ref{as:approx-convexity} measures how closely the original hypothesis class can represent predictors in its convex hull. It holds with $\nu_{\mathcal H}=0$ for convex hypothesis classes. For neural networks, finite combinations of predictors can be represented by increasing model width and combining the corresponding subnetworks at the output layer, consistent with the view of wide neural networks as linear combinations of learned features~\citep{NIPS2005_0fc170ec}. Thus, $\nu_{\mathcal H}$ can be interpreted as a capacity-dependent approximation error which can decrease as the model class becomes richer.
Finally, Assumption~\ref{as:slater} is a strengthened Slater condition. It requires the existence of a model which can satisfy the constraints with a margin $L\nu_{\mathcal H}$, and is a mild strengthening of requiring the existence of a feasible predictor.

Under these assumptions, the primal minimizer associated with the learned parametric multiplier is both approximately feasible and approximately optimal for the original statistical problem \ref{P:primal_statistical}.

\begin{theorem}[Statistical primal recovery]
\label{thm:primal-recovery}
Let Assumptions~\ref{as:uniform-convergence}--\ref{as:slater} hold,
and let $\widehat f_{\theta_\Phi}$ be a primal minimizer associated
with the optimal parametrized empirical multiplier,
\begin{equation}
\label{eq:recovered-primal}
    \widehat f_{\theta_\Phi}
    \in
    \arg\min_{f_\theta\in\mathcal H}
    \widehat L
    (f_\theta,\widehat\lambda_\Phi^\star).
\end{equation}
Then, with probability at least $1-4\delta$,
$\widehat f_{\theta_\Phi}$ is near-feasible and near-optimal for
problem~\eqref{P:primal_statistical}. In particular,
\begin{align}
&\E_{z\sim D}
\left[
    [\ell(\widehat f_{\theta_\Phi},z)]_+
\right]
\leq\;
\sqrt{
    \frac{2L^2}{\mu}
    \left[
        \Delta_D
        +
        \zeta_0
        +
        L(1+\Gamma)\nu_{\mathcal H}
    \right]
}
+
\sqrt{
    \frac{
        2L^3(1+\Gamma)\nu_{\mathcal H}
    }{\mu}
}
+
\zeta
=:
\Delta_F ,
\label{eq:statistical-feasibility}
\\
&\left|
    \E_{z\sim D_0}
    \left[
        \ell_0(\widehat f_{\theta_\Phi},z)
    \right]
    -
    P^\star
\right|
\leq\;
\zeta_0
+
\Delta_D
+
L
\left(
    1+
    \frac{B_0}{s}
\right)
\nu_{\mathcal H}
+
\left(
    \Gamma
    +
    \frac{B\mu}{L^2}
\right)
\Delta_F
=:
\Delta_O.
\label{eq:statistical-objective}
\end{align}
\end{theorem}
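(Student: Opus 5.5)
The plan is to pass to a convexified problem, extract feasibility from strong concavity of its dual function, and read off optimality from the saddle-point structure.

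\textbf{Step 1: convex relaxation.} Consider the relaxation of \eqref{P:primal_statistical} over $\overline{\mathcal H}$, with Lagrangian $\bar{\mathcal L}$, dual function $\bar g(\lambda)=\min_{f\in\overline{\mathcal H}}\bar{\mathcal L}(f,\lambda)$ and minimizer $\bar f(\lambda)$. The losses are convex (Assumption~\ref{ass:convex_out}) and Assumption~\ref{as:slater} gives a strictly feasible point. Hence this relaxation has zero duality gap, and its optimal multiplier satisfies the standard Slater bound $\|\bar\lambda^\star\|_{L_1}\le B_0/s$.

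By Assumptions~\ref{ass:lip_out} and~\ref{as:approx-convexity}, for any $\lambda$ with $\|\lambda\|_\infty\le\Gamma$ we have
\[
\bar g(\lambda)\le g(\lambda)\le \bar g(\lambda)+L(1+\Gamma)\nu_{\mathcal H}.
\]
The Slater margin $L\nu_{\mathcal H}$ makes the rounded point $\bar f_\theta$ feasible. This yields $P^\star\le \bar P^\star+L(1+B_0/s)\nu_{\mathcal H}$, which explains that term in $\Delta_O$.

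\textbf{Step 2: feasibility via strong concavity.} Strong convexity of $\ell_0$ with parameter $\mu$ and $L$-Lipschitzness of $\ell$ make $\bar g$ behave like a function with $\tfrac{L^2}{\mu}$-smooth concavity. Concretely, perturbing $\lambda$ by $\epsilon\,\mathbf 1\{\ell(\bar f(\lambda),z)>0\}$ raises $\bar g$ by at least
\[
\epsilon\,\E_D\bigl[[\ell(\bar f(\lambda),z)]_+\bigr]-\frac{L^2\epsilon^2}{2\mu}.
\]
This comes from the usual descent-lemma argument: the minimizer moves Lipschitzly in $\lambda$ with constant $L/\mu$ in prediction norm. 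Take $\lambda=\widehat\lambda_\Phi^\star$. Its suboptimality in $\bar g$ relative to $D^\star$ is at most $\Delta_D+\zeta_0+L(1+\Gamma)\nu_{\mathcal H}$, using Lemma~\ref{lem:dual-value}, one more uniform-convergence application to pass from $\hat g$ to $g$, and Step 1. Optimizing over $\epsilon$ then gives
\[
\E\bigl[[\ell(\bar f(\widehat\lambda^\star_\Phi))]_+\bigr]\le\sqrt{\tfrac{2L^2}{\mu}\bigl[\Delta_D+\zeta_0+L(1+\Gamma)\nu_{\mathcal H}\bigr]}.
\]

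Next we transfer this to $\widehat f_{\theta_\Phi}$. Both $\widehat f_{\theta_\Phi}$ and $\bar f$ nearly minimize a $\mu$-strongly convex Lagrangian, up to gap $L(1+\Gamma)\nu_{\mathcal H}$. Their prediction distance is therefore at most $\sqrt{2L(1+\Gamma)\nu_{\mathcal H}/\mu}$, and Lipschitzness adds the second square root. The remaining empirical-to-population mismatch adds $\zeta$, giving $\Delta_F$. A union bound over four events accounts for the $1-4\delta$ probability.

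\textbf{Step 3: optimality.} Write
\[
\E\ell_0(\widehat f_{\theta_\Phi})=g(\widehat\lambda)-\E[\widehat\lambda\,\ell(\widehat f_{\theta_\Phi})]
\]
up to $\zeta_0$ and $\Gamma\zeta$ sampling terms.

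For the upper bound, use $\widehat\lambda\ge0$ and bound the negative part of the constraint term. Together with $g(\widehat\lambda)\le D^\star\le P^\star$, this yields the $\zeta_0+\Delta_D$ contribution.

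For the lower bound, $-\E[\widehat\lambda\ell]\le\Gamma\,\E[\ell]_+\le\Gamma\Delta_F$. Combined with $g(\widehat\lambda)\ge D^\star-\Delta_D$ and $D^\star\ge P^\star-L(1+B_0/s)\nu_{\mathcal H}$ from Step 1, this gives the remaining terms. The term $B\mu\Delta_F/L^2$ should come from converting the objective change induced by the constraint violation back through the curvature relation of Step 2.

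\textbf{Main obstacle.} The hard step is Step 2: rigorously establishing the smoothness/strong-concavity growth of $\bar g$ in the $L_1(D)$ dual geometry, with the perturbation direction given by the indicator of the violation set. This requires a stability bound $\|\bar f(\lambda)-\bar f(\lambda')\|\le \frac{L}{\mu}\|\lambda-\lambda'\|$ in the prediction norm. I would first prove that bound from first-order optimality of the strongly convex Lagrangian on $\overline{\mathcal H}$.
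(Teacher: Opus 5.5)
\paragraph{Gap 1: population versus empirical dual.} Your Step~2 works on the \emph{population} convexified dual $\bar g$ evaluated at $\widehat\lambda_\Phi^\star$, and this does not go through under Assumptions~\ref{as:uniform-convergence}--\ref{as:slater}. Because $\widehat\lambda_\Phi^\star$ is chosen from $S$, ``one more uniform-convergence application to pass from $\hat g$ to $g$'' is not available. The weighted bound of Proposition~\ref{prop:derived_uc} needs a multiplier fixed independently of the sample; the paper applies it only to the fixed $\bar\lambda_\phi$ inside Lemma~\ref{lem:dual-value}. Controlling $\sup_{f_\theta}\bigl|\E_D[\widehat\lambda_\Phi^\star\ell(f_\theta,z)]-\frac1N\sum_n\widehat\lambda_\Phi^\star(z_n)\ell(f_\theta,z_n)\bigr|$ requires the dual-class complexity of Assumption~\ref{as:dual-uc}. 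That assumption is not part of this theorem, and it would put $\Gamma\zeta+B\zeta_\Lambda$ inside the square root. Even for a fixed multiplier, the cost would be $\zeta_0+\Gamma\zeta$, not $\zeta_0$.

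Your transfer step has the same defect. $\widehat f_{\theta_\Phi}$ minimizes the empirical Lagrangian, while $\bar f(\widehat\lambda_\Phi^\star)$ minimizes the population one. So they are not near-minimizers of a common strongly convex functional up to $L(1+\Gamma)\nu_{\mathcal H}$.

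The paper avoids this by working with the empirical convexified dual $\widehat g_{\mathcal C}$ on $\mathbb R_+^N$, which sees $\widehat\lambda_\Phi^\star$ only through its values on $S$. The $\zeta_0$ inside the root comes from the lifting bound $\widehat D_{\mathcal C}^\star\le\widehat D^\star\le D^\star+\zeta_0$. The population enters only at the end, through the positive-part loss class over $\mathcal H$, which gives the additive $\zeta$.

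In that empirical geometry, your indicator-direction descent-lemma argument works verbatim. It bounds $\frac1N\sum_n[\ell(\widehat f_{\mathcal C}(\widehat\lambda_\Phi^\star),z_n)]_+$ by the first square root directly. This is a genuinely cleaner alternative to the paper's cocoercivity step: it needs no optimal multiplier $\lambda_{\mathcal C}^\star$, no feasibility of $\widehat f_{\mathcal C}^\star$, and no Cauchy--Schwarz.

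\paragraph{Gap 2: complementary slackness in Step~3.} The lower bound on the objective is the easy direction: use $\widehat\lambda_\Phi^\star\ell\le\Gamma[\ell]_+$ plus feasibility. Your displayed inequality there has the sign reversed.

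The upper bound, however, requires controlling $-\frac1N\sum_n\widehat\lambda_\Phi^\star(z_n)\ell(\widehat f_{\theta_\Phi},z_n)$. This is multiplier mass on constraints that are strictly \emph{slack}, and nothing in your proposal bounds it. Feasibility controls only $[\ell]_+$. Nonnegativity of $\widehat\lambda_\Phi^\star$ does not help if $\widehat\lambda_\Phi^\star(z_n)$ is large exactly where $\ell<0$. This approximate complementary slackness is the source of the $(\Gamma+B\mu/L^2)\Delta_F$ term; it does not come from a curvature conversion of the violation.

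The paper obtains it from the projected gradient mapping $G=\frac{L^2}{\mu}\bigl[\Pi_{\mathbb R_+^N}\bigl(\widehat\lambda_\Phi^\star+\frac{\mu}{L^2}\nabla\widehat g_{\mathcal C}(\widehat\lambda_\Phi^\star)\bigr)-\widehat\lambda_\Phi^\star\bigr]$. Its empirical $\ell_2$ norm obeys the same square-root bound, by dual suboptimality. Write $\widehat\lambda_n=\widehat\lambda_\Phi^\star(z_n)$ and $\ell_n=\ell(\widehat f_{\mathcal C}(\widehat\lambda_\Phi^\star),z_n)$ and argue coordinatewise:
\begin{itemize}
\item If the projection is inactive, then $G_n=\ell_n$ and $|\widehat\lambda_n\ell_n|\le\Gamma|G_n|$.
\item If it clips at zero, then $\widehat\lambda_n=\frac{\mu}{L^2}|G_n|$ and $|\widehat\lambda_n\ell_n|\le\frac{B\mu}{L^2}|G_n|$.
\end{itemize}
In effect, your perturbation argument must be run along the full projected-gradient step, not only along the violation indicator.

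This bound is then combined with the exact empirical identity $\widehat D_\Phi^\star=\frac1{N_0}\sum_{z_n\in S_0}\ell_0(\widehat f_{\theta_\Phi},z_n)+\frac1N\sum_{z_n\in S}\widehat\lambda_\Phi^\star(z_n)\ell(\widehat f_{\theta_\Phi},z_n)$. Your population identity through $g(\widehat\lambda)$ cannot be used instead, because it again needs uniform convergence at a data-dependent multiplier.
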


\begin{proof}
See Appendix~\ref{apx:primal-recovery}.
\end{proof}

Theorem~\ref{thm:primal-recovery} shows that solving the parametrized
empirical dual recovers a primal model whose statistical constraint
violation and objective error are proportional to either the statistical,
dual-parameterization, or model-approximation errors. In particular, whenever these three approximations simultaneously vanish, that is $\zeta_0,\zeta,\nu_\Phi,\nu_{\mathcal H}\to0$, both the feasibility bound $\Delta_F$ and the optimality bound $\Delta_O$ vanish to zero.
Moreover, we remark that the explicit
approximation penalty introduced by restricting the functional dual
domain to $\Lambda_\Phi$ is only the error $B\nu_\Phi$ from approximating the optimal multiplier. This error propagates to
primal feasibility and optimality only through $\Delta_D$. The
remaining terms arise from finite-sample estimation and from recovering a primal solution in the original, possibly non-convex, hypothesis class. This implies that as the parametric dual family becomes expressive enough to approximate the optimal statistical multiplier, its primal recovery guarantees approach those obtained from the corresponding unrestricted empirical dual formulation.

\subsection{Sensitivity of the learned parametric multiplier}
\label{sec:sensitivity}

The primal recovery guarantees of Theorem~\ref{thm:primal-recovery} establish that the parametrized empirical dual remains a meaningful surrogate for the statistical primal problem~\ref{P:primal_statistical}. Our main motivation for learning a parametric multiplier, however, is to retain the sensitivity interpretation of the optimal statistical multiplier $\lambda^\star$ while being able to evaluate it on new, unseen samples. We formalize this interpretation next.

Let $u:\mathcal Z\to\mathbb R$ assign a pointwise perturbation to the constraint threshold. We define the perturbation function $P(u)$ as the optimal value of the perturbed primal problem,
\begin{equation}
\label{P:primal_perturbed}
\tag{$\text{P}_u$}
\begin{aligned}
P(u)=\min_{f_\theta\in\mathcal H}\quad &\E_{z\sim\mathcal D_0}\!\left[\ell_0(f_\theta,z)\right]\\
\text{s. to}\quad &\ell(f_\theta,z)\leq u(z)\qquad \mathcal D\text{-a.e.}
\end{aligned}
\end{equation}
The perturbation $u$ changes the original constraint threshold from $0$ to $u(z)$ at each sample. Since larger values of $u$ relax the constraints, $P(u)$ is nonincreasing with respect to the pointwise order. The optimal statistical multiplier characterizes the sensitivity of this value function.

\begin{lemma}[Sensitivity of the optimal multiplier]
\label{lemma:dual-are-subg}
Let $\lambda^\star$ be the optimal statistical dual multiplier. Then $-\lambda^\star$ is a $\Delta_{\rm gap}$-subgradient of $P(0) = P^*$. 

That is, for every perturbation $u$
\begin{equation}
P(u)\geq P^*-\mathbb E_{z\sim\mathcal D}\!\left[\lambda^\star(z)u(z)\right]-\Delta_{\rm gap},
\end{equation}
where $\Delta_{\rm gap}:=|P^\star-D^\star|$ is the duality gap.
\end{lemma}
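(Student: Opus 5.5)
The plan is the standard weak-duality argument for perturbed problems, with the duality gap absorbing the mismatch between $P^\star$ and $D^\star$. Fix any perturbation $u$. If \eqref{P:primal_perturbed} is infeasible, then $P(u)=+\infty$ and there is nothing to prove. We also restrict to $u$ for which $\mathbb E_{\mathcal D}[\lambda^\star u]$ is well defined, for instance $u\in L_\infty(\mathcal D)$, which pairs naturally with $\lambda^\star\in L_1^+(\mathcal D)$. Otherwise, take any $f_\theta\in\mathcal H$ feasible for \eqref{P:primal_perturbed}, i.e., $\ell(f_\theta,z)\le u(z)$ $\mathcal D$-a.e.

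The key step is to evaluate the dual function at the optimal multiplier. By definition of $g$ in \eqref{eq:dual_function}, and since $\lambda^\star\ge 0$ while $\ell(f_\theta,z)-u(z)\le 0$ almost everywhere,
\begin{equation*}
D^\star = g(\lambda^\star)\le \mathcal L(f_\theta,\lambda^\star)
= \E_{\mathcal D_0}[\ell_0(f_\theta,z)] + \E_{\mathcal D}[\lambda^\star(z)(\ell(f_\theta,z)-u(z))] + \E_{\mathcal D}[\lambda^\star(z)u(z)]
\le \E_{\mathcal D_0}[\ell_0(f_\theta,z)] + \E_{\mathcal D}[\lambda^\star(z)u(z)].
\end{equation*}
Taking the infimum over all feasible $f_\theta$ gives $P(u)\ge D^\star-\E_{\mathcal D}[\lambda^\star u]$. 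If the minimum in \eqref{P:primal_perturbed} is not attained, the same bound holds for the infimum.

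It remains to replace $D^\star$ by $P^\star$. Since $D^\star\ge P^\star-|P^\star-D^\star| = P^\star-\Delta_{\rm gap}$, we obtain
\begin{equation*}
P(u)\ge P^\star-\E_{\mathcal D}[\lambda^\star u]-\Delta_{\rm gap},
\end{equation*}
which is exactly the claimed $\Delta_{\rm gap}$-subgradient inequality for $-\lambda^\star$ at $u=0$.

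The only delicate point is measure-theoretic: we need $\mathbb E[\lambda^\star u]$ to be finite and the Lagrangian to split linearly as above. Both hold because $\lambda^\star\in L_1(\mathcal D)$, $\ell$ is bounded by $B$, and $u$ is restricted to be essentially bounded (or at least $\lambda^\star u$ integrable). I would state this integrability restriction on $u$ explicitly as part of the proof.
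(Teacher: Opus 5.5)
Your proof is correct and is essentially the paper's argument. The paper gives no standalone proof of this lemma, but the proof of Theorem~\ref{thm:empirical-sensitivity} uses the same weak-duality bound for the perturbed problem, $P(u)\geq g(\lambda)-\mathbb E_{\mathcal D}[\lambda(z)u(z)]$; evaluating it at $\lambda^\star$ with $g(\lambda^\star)=D^\star\geq P^\star-\Delta_{\rm gap}$ gives the claim exactly as you do, and your explicit integrability caveat on $u$ is a sensible clarification the paper leaves implicit.
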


Lemma~\ref{lemma:dual-are-subg} shows that the multiplier $\lambda^\star(z)$ quantifies how costly it can be to tighten a constraint associated with a particular sample. In particular, a larger multiplier will potentially translate into a larger increase  in the optimal objective when that constraint is tightened, whereas a zero multiplier implies no decrease from relaxing it will happen to the objective. As a learned parametric multiplier can be evaluated on unseen samples, it provides a measure of how much enforcing the constraint on an unseen sample---for instance, by including that sample in the training set---can affect the optimal objective.

To transfer this interpretation to the learned multiplier we require also that the parametric dual class generalizes.

\begin{assumption}[Dual class complexity]
\label{as:dual-uc}
There exists a function $\zeta_\Lambda(N,\delta)$, monotonically decreasing to zero in $N$, such that, with probability at least $1-\delta$,
\begin{equation}
\sup_{\lambda_\phi\in\Lambda_\Phi}\left|\mathbb E_D[\lambda_\phi(z)]-\frac{1}{N}\sum_{z_n\in S}\lambda_\phi(z_n)\right|\leq\zeta_\Lambda(N,\delta).
\end{equation}
\end{assumption}

Assumption~\ref{as:dual-uc} mirrors Assumption~\ref{as:uniform-convergence} for the dual class. As before, we assume that $\zeta_\Lambda(N,\delta)$ arises from vanishing Rademacher complexity and refer to Appendix~\ref{apx:rademacher} for the details. Under the same assumption requiered primal recover, with the dual complexity bound, we can characterize the sensitivity of the learned multiplier.

\begin{theorem}[Empirical approximate sensitivity]
\label{thm:empirical-sensitivity}
Let assumptions~\ref{as:uniform-convergence}-~\ref{as:dual-uc} hold. Then, with probability at least $1-4\delta$, for every $u\in L_1(D)$ it holds that
\begin{equation}
P(u)\geq P(0)-\mathbb E_{z\sim D}\!\left[\widehat\lambda_\Phi^\star(z)u(z)\right]-\left[L\left(1+\frac{B_0}{s}\right)\nu_{\mathcal H}+B\nu_\Phi+2\zeta_0+2\Gamma\zeta+B\zeta_\Lambda\right].
\label{eq:empirical-sensitivity}
\end{equation}
\end{theorem}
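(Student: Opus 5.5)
We prove the bound by chaining weak duality for the perturbed problem with the dual-value comparison of Lemma~\ref{lem:dual-value}. The idea is to compare $P(u)$ with the statistical Lagrangian evaluated at the learned multiplier $\widehat\lambda_\Phi^\star$, rather than at $\lambda^\star$. The first step is a perturbed weak duality. For any feasible $f_\theta$ of~\eqref{P:primal_perturbed} and any $\lambda\ge 0$ we have $\E_D[\lambda(z)(\ell(f_\theta,z)-u(z))]\le 0$. Hence
\begin{equation*}
P(u)\;\geq\;\min_{f_\theta\in\mathcal H}\mathcal L(f_\theta,\widehat\lambda_\Phi^\star)-\E_{D}\bigl[\widehat\lambda_\Phi^\star(z)u(z)\bigr]
= g(\widehat\lambda_\Phi^\star)-\E_{D}\bigl[\widehat\lambda_\Phi^\star(z)u(z)\bigr].
\end{equation*}
It therefore remains to lower bound $g(\widehat\lambda_\Phi^\star)$ by $P(0)$ minus the stated error terms.

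The second step passes from the statistical dual function to the empirical one. For any $f_\theta\in\mathcal H$, we compare $\mathcal L(f_\theta,\widehat\lambda_\Phi^\star)$ with $\widehat L(f_\theta,\widehat\lambda_\Phi^\star)$. The objective part differs by at most $\zeta_0$ by Assumption~\ref{as:uniform-convergence}. The weighted constraint part $\E_D[\lambda\ell]-\frac1N\sum\lambda(z_n)\ell(f_\theta,z_n)$ is a product term, and we handle it by the same argument used in Lemma~\ref{lem:dual-value}. That argument yields the contribution $\Gamma\zeta$ from fluctuations of $\ell$, using $\|\lambda\|_\infty\le\Gamma$. It yields the contribution $B\zeta_\Lambda$ from fluctuations of $\lambda_\phi$, using $\ell\le B$ and Assumption~\ref{as:dual-uc}. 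Minimizing over $f_\theta$ then gives
\begin{equation*}
g(\widehat\lambda_\Phi^\star)\ge \widehat D_\Phi^\star-\zeta_0-\Gamma\zeta-B\zeta_\Lambda.
\end{equation*}
This product term is the main obstacle. The uniform convergence assumptions control $\ell$ over $\mathcal H$ and $\lambda_\phi$ over $\Lambda_\Phi$ separately, not their product $\lambda_\phi\ell$ jointly. I would first try the decomposition $\lambda\ell-\E[\lambda\ell]$ into a term bounded via $\Gamma$ times the $\ell$-deviation plus $B$ times the $\lambda$-deviation, as the constants in the statement suggest. If this fails, I would instead assume uniform convergence of the product class, whose Rademacher complexity is bounded by $\Gamma\,\mathfrak R(\ell\circ\mathcal H)+B\,\mathfrak R(\Lambda_\Phi)$ via Appendix~\ref{apx:rademacher}.

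The third step invokes Lemma~\ref{lem:dual-value}, which gives $\widehat D_\Phi^\star\ge D^\star-B\nu_\Phi-\zeta_0-\Gamma\zeta$. Together with the second step, this accounts for the terms $B\nu_\Phi+2\zeta_0+2\Gamma\zeta+B\zeta_\Lambda$. Finally, we need $D^\star\ge P(0)-L(1+B_0/s)\nu_{\mathcal H}$, a duality-gap bound of the everywhere-learning type. It follows from Assumptions~\ref{ass:convex_out}--\ref{as:slater} by the argument of \cite{boero2026everywhere}, also used in Theorem~\ref{thm:primal-recovery}. That argument solves the convexified problem over $\overline{\mathcal H}$, where strong duality holds under the strict Slater condition, bounds the multiplier mass by $B_0/s$, and projects back to $\mathcal H$ at cost $L\nu_{\mathcal H}(1+\|\lambda\|_{L_1})$. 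A union bound over the uniform-convergence events (objective, constraint, dual class, plus the one in Lemma~\ref{lem:dual-value}) yields the probability $1-4\delta$. Chaining the three inequalities gives~\eqref{eq:empirical-sensitivity}.
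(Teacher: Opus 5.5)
Your proposal follows the paper's proof essentially step for step. It applies perturbed weak duality at $\widehat\lambda_\Phi^\star$, then chains $P(0)-D^\star\le L(1+B_0/s)\nu_{\mathcal H}$, Lemma~\ref{lem:dual-value}, and a comparison of $g$ and $\widehat g$ at $\widehat\lambda_\Phi^\star$ costing $\zeta_0+\Gamma\zeta+B\zeta_\Lambda$. For that last comparison your fallback is the right one and is exactly what the paper uses (the product-class bound in Proposition~\ref{prop:derived_uc}). Your first attempt would not work: separate uniform deviations of $\ell$ and $\lambda_\phi$ do not control the deviation of the product $\lambda_\phi\ell$, and the fixed-multiplier argument of Lemma~\ref{lem:dual-value} does not apply because $\widehat\lambda_\Phi^\star$ depends on the sample.
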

\begin{proof}
See Appendix~\ref{apx:sensitivity-thm}.
\end{proof}

Theorem~\ref{thm:empirical-sensitivity} shows that the learned multiplier is also an approximate subgradient of the perturbation function. In this case, however, additional terms appear in the subgradient approximation. Specifically, the term involving $\nu_{\mathcal H}$ upper-bounds the duality gap, while $B\nu_\Phi$ captures the approximation error induced by the parametric dual family. The remaining terms arise from finite-sample estimation, with $B\zeta_\Lambda$ accounting specifically for the statistical complexity of the learned multiplier class.

The result also reveals a tradeoff in the choice of $\Lambda_\Phi$. A richer dual class may reduce the approximation error $\nu_\Phi$, but at the cost of greater statistical complexity and, consequently, a larger $\zeta_\Lambda$. Sensitivity learning therefore favors a class that is expressive enough to approximate $\lambda^\star$ while remaining statistically learnable. This tradeoff does not arise in the same form in Theorem~\ref{thm:primal-recovery}: primal recovery depends on the dual class through $\nu_\Phi$, whereas the additional complexity term $\zeta_\Lambda$ appears only when the learned multiplier itself must generalize to unseen samples.

In the next section, we empirically study the phenomena predicted by the theory and study how the expressivity of the parametric dual family affects the sensitivity estimation. Our experiments use an augmented Lagrangian formulation, for which the theoretical results admit analogous extensions. To keep the main development concise, we defer these extensions to Appendix~\ref{app:augmented}.
\section{Numerical Experiments}
\label{sec:num-exp}

In this section, we evaluate the proposed parametric dual formulation across four diverse learning problems. We consider image classification, agentic workflow prediction, language-model unlearning, and optimal power flow. We next provide a brief description of each task and its evaluation protocol; complete implementation details are provided in Appendix~\ref{apx:d_add_res}.

\paragraph{MNIST: Image classification.}
We consider robust classification on MNIST~\citep{726791}, with $60{,}000$ handwritten digit images. Each sample is constrained to be correctly classified with a prescribed margin, while the primal objective minimizes the squared $\ell_2$ norm of the model parameters. Both the primal predictor and dual multiplier are parameterized by CNNs.

\paragraph{FLORA-Bench: Agentic workflow prediction.}
We use the Coding-GD dataset from FLORA-Bench~\citep{zhang2025gnnspredictorsagenticworkflow}, where a graph neural network predicts whether a multi-LM workflow solves a coding task. The primal objective optimizes task-wise ranking, while pointwise binary cross-entropy constraints enforce accurate classification for every workflow--task pair. Both the primal predictor and the dual multiplier are parameterized by lightweight graph neural networks with a shared backbone.

\paragraph{TOFU: Language-model unlearning.}
We study constrained language-model unlearning on TOFU~\citep{maini2024tofutaskfictitiousunlearning} using the OpenUnlearning framework~\citep{dorna2026openunlearning}. The primal objective preserves model behavior on a retain set, while pointwise constraints enforce forgetting on each example in the forget set. The primal model is a pretrained language model, while the dual multiplier is parameterized by a lightweight neural network over sample representations.

\paragraph{IEEE 30: Optimal power flow prediction.}
We consider AC optimal power flow prediction on the IEEE~30-bus system following~\cite{owerko2025pointwise}. The primal model minimizes expected generation cost while enforcing power-flow equations and operational limits for each demand realization. Both the primal predictor and dual multiplier are parameterized by graph neural networks over the power-network topology.

\subsection{Primal recovery for the parametric dual}

\begin{figure}[t]
    \centering
    \includegraphics[width=\linewidth]{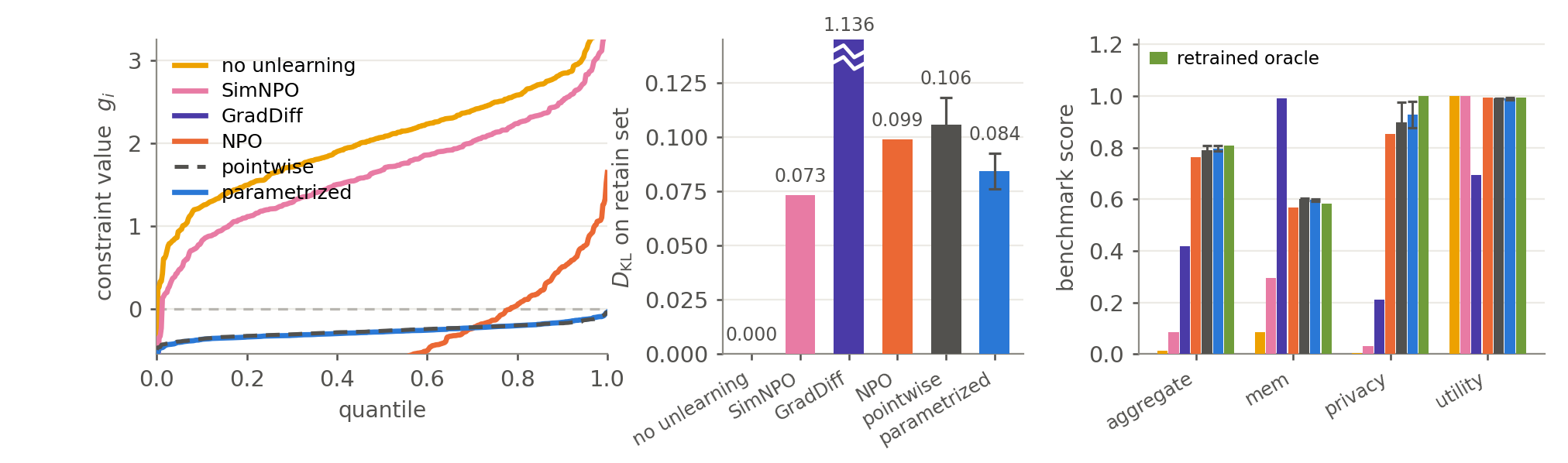}
    \caption{\textbf{Primal recovery on TOFU.}
    \emph{Left:} distribution of forgetting-constraint values, where negative values indicate constraint satisfaction. 
    \emph{Center:} KL divergence on the retain set, measuring deviation from the original model.
    \emph{Right:} aggregate unlearning performance across memorization, privacy, and utility metrics.
    \textit{The parameterized dual closely matches the pointwise in constraint satisfaciton, objective value and downstream performance.}}
    \label{fig:unlearning_primal}
\end{figure}
\paragraph{Dual parameterizations preserve primal performance.}
We first evaluate whether restricting the dual variables to a parametric family degrades the recovered primal solution. Figure~\ref{fig:unlearning_primal} reports results on the TOFU task. Both the pointwise and parameterized dual formulations satisfy the forgetting constraints across essentially the entire forget set, unlike the standard unlearning baselines, which exhibit substantial violations. At the same time, the parameterized formulation preserves the retain distribution particularly well, achieving a retain-set KL divergence of $0.084$, comparable to the pointwise formulation ($0.106$) and lower than the remaining unlearning baselines. This behavior is also reflected in the benchmark metrics, where the parameterized and pointwise formulations achieve nearly identical memorization, privacy, and utility scores. These results show that learning the dual variables through a shared parametric function preserves the primal performance of independent pointwise multipliers while enabling the multiplier to generalize to unseen samples. We observe the same behavior in the other dataset in the additional results provided in Appendix~\ref{app:additional_primal}.

\paragraph{Primal performance remains stable as the number of constraints grows.}
We next evaluate the parameterized dual formulation on the IEEE 30-bus benchmark, a standard power-systems test case with 30 buses, 6 generators, and 41 transmission branches, yielding hundreds of constraints per sample. This setting is particularly challenging for a shared dual parameterization, since different constraint types act on distinct parts of the network and their associated multipliers depend on the complete operating point. Nevertheless, Table~\ref{tab:ieee30-param} shows that the resulting primal GNN remains competitive with standard empirical dual formulations. The parameterized model achieves a negative mean optimality gap of $-3.63\%$, while maintaining constraint violations of the same overall scale as the pointwise-dual baselines. These results suggest that a single learned multiplier function can accommodate hundreds of heterogeneous pointwise constraints without substantially degrading the quality of the recovered primal solution.

\begin{table}[]
\centering
\begin{tabular}{llcccccccc}
\toprule
& & \multicolumn{2}{c}{Optimality Gap (\%)} & \multicolumn{2}{c}{Mean Violation (\%)} & \multicolumn{4}{c}{Max Violation (\%)} \\
\cmidrule(lr){3-4} \cmidrule(lr){5-6} \cmidrule(lr){7-10}
& & Mean & Std & Mean & Std & Mean & Std & P95 & Max \\
\midrule
& Param     & $-3.63$ & 1.15 & 2.70 & 1.26 & 6.26 & 2.61 & 8.04  & 15.80 \\
& Dual-P    & 0.18    & 1.41 & 0.49 & 0.62 & 1.83 & 2.15 & 5.65  & 19.94 \\
& Dual-H    & $-0.08$ & 0.55 & 1.05 & 0.30 & 5.61 & 0.27 & 6.03  & 6.65  \\
& Dual-S    & 2.18    & 1.25 & 1.34 & 0.30 & 8.22 & 1.98 & 11.61 & 13.47 \\
\bottomrule
\end{tabular}
\caption{\textbf{Primal recovery on the IEEE 30-bus benchmark.}
Param replaces the independent per-sample multipliers of Dual-P with a learned dual network conditioned on primal embeddings. Metrics are computed per test operating point and aggregated across the test set. The remaining baselines are reproduced from~\cite{owerko2025pointwise}, Fig.~10.}
\label{tab:ieee30-param}
\end{table}

\subsection{Learnability of the dual multipliers}
\begin{table}[b]
    \centering
    \setlength{\tabcolsep}{7pt}
    \renewcommand{\arraystretch}{1.15}
    \begin{tabular}{lcccc}
        \toprule
        & \multirow{2}{*}{\shortstack{Active-set\\base rate}}
        & \multicolumn{3}{c}{Dual recovery on unseen samples} \\
        \cmidrule(lr){3-5}
        Dataset
        & 
        & Tight-set AUC $\uparrow$
        & NDCG $\uparrow$
        & Spearman $\rho$ $\uparrow$ \\
        \midrule
        MNIST
        & $0.47\%$
        & $0.923 \pm 0.013$
        & $0.836 \pm 0.036$
        & $0.277 \pm 0.026$ \\

        OPF
        & $5.13\%$
        & $0.877 \pm 0.063$
        & $0.902 \pm 0.008$
        & $0.324 \pm 0.035$ \\

        TOFU
        & $80.00\%$
        & $0.713 \pm 0.078$
        & $0.386 \pm 0.040$
        & $-0.048 \pm 0.051$ \\

        FLORA-Bench
        & $54.08\%$
        & ${0.935 \pm 0.001}$
        & ${0.914 \pm 0.001}$
        & $0.211 \pm 0.008$ \\
        \bottomrule
    \end{tabular}
    \caption{\textbf{Generalization of learned dual multipliers to unseen samples.}
    Predicted multipliers are evaluated against an independently computed pointwise dual reference. Tight-set AUC measures identification of active constraints, while NDCG and Spearman's $\rho$ measure recovery of the multiplier ranking. Results are reported as mean $\pm$ standard deviation.}
    \label{tab:dual_generalization}
\end{table}

\paragraph{Learned multipliers generalize to unseen samples.}
We next evaluate whether the learned multiplier preserves dual information on unseen samples. Table~\ref{tab:dual_generalization} compares predicted multipliers with an independently computed pointwise reference solution. Across MNIST, OPF, and FLORA-Bench, the learned multiplier reliably identifies active constraints, with tight-set AUCs between $0.877$ and $0.935$, and ranks the most important constraints well, with NDCG between $0.836$ and $0.914$. Global Spearman correlations are lower, as pointwise multipliers are highly sparse and concentrated near zero for inactive constraints, making their relative ordering poorly conditioned. TOFU presents a different regime, with about $80\%$ of constraints active, leaving fewer inactive samples for separation and less variation across multiplier values. Ranking performance therefore degrades, although active-set identification remains nontrivial, with AUC $0.713$. Overall, the parametric dual transfers meaningful constraint-importance information to unseen samples, particularly through active-set identification and top-ranked sensitivity.

\paragraph{Multiplier magnitude captures sample-level sensitivity.}
We test whether the learned multiplier predicts how strongly an unseen sample constrains the optimization problem. On MNIST, we form held-out sets with different average predicted multipliers and re-solve the problem after imposing their pointwise constraints. Figure~\ref{fig:mnist_sensitivity} shows that the objective increase grows monotonically with the mean predicted multiplier, while random sets produce a nearly constant baseline. This agrees with the sensitivity interpretation in Section~\ref{sec:sensitivity}: larger $\lambda_\phi$ identifies constraints whose enforcement has a greater effect on the optimum. The examples on the right illustrate this behavior: an unambiguous zero receives uniformly small multipliers, whereas an ambiguous seven resembling a one receives a large multiplier for the corresponding class constraint. Such samples require greater classification capacity, explaining their larger effect on the objective. Additional FLORA-Bench results are provided in Appendix~\ref{app:additional_sensitivity}.

\begin{figure}
    \centering
    \includegraphics[width=0.8\linewidth]{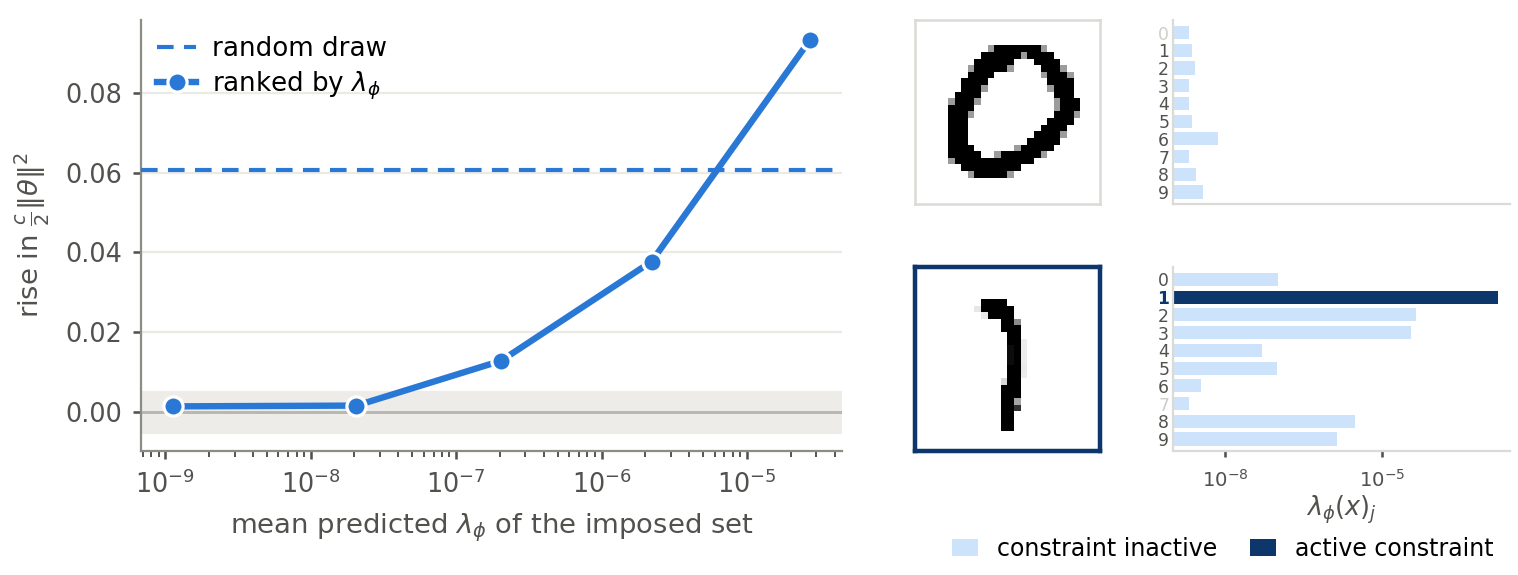}
    \caption{\textbf{Sensitivity estimation on MNIST.}
    \emph{Left:} objective increase after imposing held-out constraints versus their mean predicted multiplier. Larger $\lambda_\phi$ values induce larger increases than random draws. \emph{Right:} representative low- and high-sensitivity samples with their classwise multipliers.}
    \label{fig:mnist_sensitivity}
\end{figure}

\paragraph{Dual capacity balances approximation and generalization.}
We study how dual-model capacity affects sensitivity recovery. Figure~\ref{fig:dual_capacity} shows that increasing the number of dual parameters improves ranking recovery on both training and test sets, indicating reduced approximation error, but the gains saturate beyond roughly $10^7$ parameters. The right panel makes the tradeoff explicit: approximation error decreases sharply, while generalization error initially grows and then levels off. This empirically matches Theorem~\ref{thm:empirical-sensitivity}: richer dual classes better approximate the optimal multiplier but incur greater statistical complexity. Additional MNIST capacity results are provided in Appendix~\ref{app:additional_capacity}.

\begin{figure}
    \centering
    \includegraphics[width=0.8\linewidth]{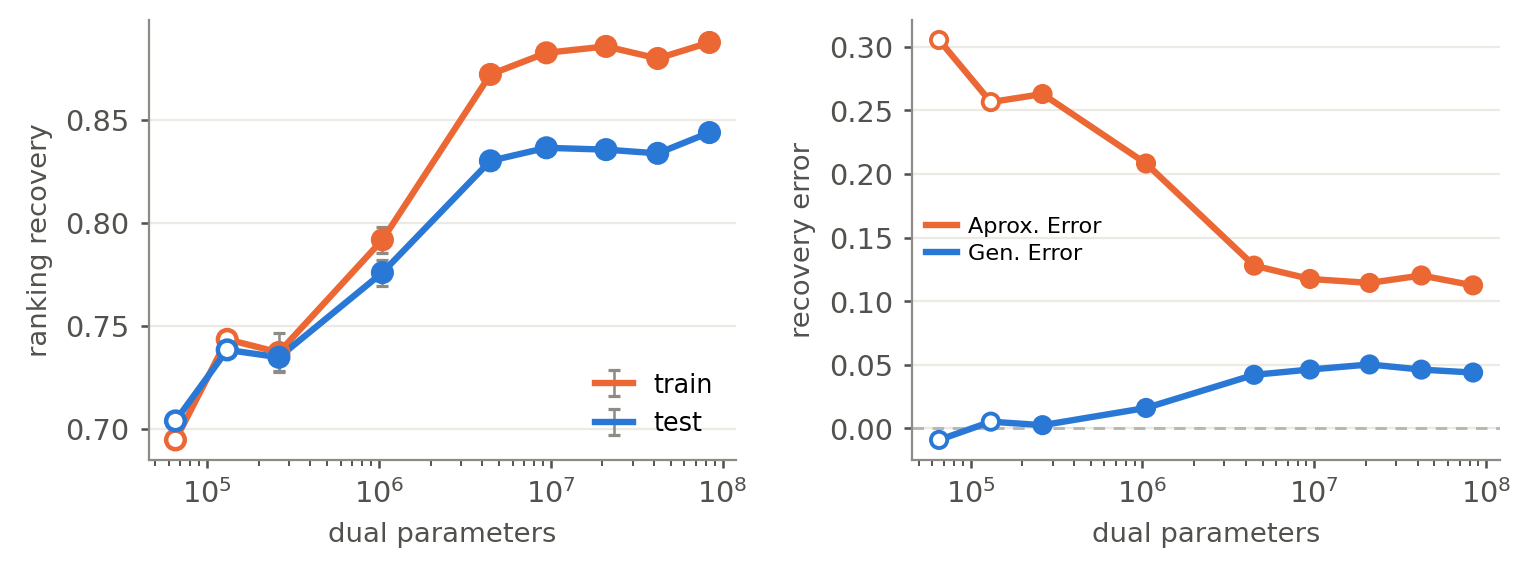}
    \caption{\textbf{Effect of dual parameterization capacity on FLORA-Bench.}
    \emph{Left:} ranking recovery on the training and test sets as the number of dual-network parameters increases.
    \emph{Right:} decomposition into approximation and generalization errors.
    Increasing capacity substantially reduces approximation error and improves sensitivity recovery, while introducing a growing generalization error, illustrating the tradeoff between dual expressivity and statistical complexity.}
    \label{fig:dual_capacity}
\end{figure}

\newpage

\subsection*{AI use statement}

In this work, we used generative AI tools to assist with the implementation of experimental methods, the development and editing of software code, the selection and refinement of experimental parameters, and aspects of experimental methodology. We also used generative AI tools to assist in the verification of mathematical arguments and proofs. AI-assisted mathematical content was independently reviewed and verified by the authors.

We additionally used generative AI tools to assist with improving the clarity, grammar and readability organization of the manuscript, and identifying relevant literature and references. All AI-assisted code was manually reviewed and tested for correctness, and all AI-assisted mathematical claims, proofs, experimental conclusions, and citations were checked by the authors.

We did not use generative AI tools to generate synthetic datasets, formulate the central theoretical models or conceptual framework of the work, formulate the main mathematical claims, propose or refine the primary research hypotheses, translate research materials, clean or reformat datasets, or perform qualitative or thematic data analysis.

The authors have reviewed all AI-assisted work and take responsibility for the final content of this work, including all text, mathematical claims, proofs, code, experimental results, citations, and other artifacts produced with the aid of generative AI.

\subsection*{Ethics statement}

Given the nature of the manuscript, we do not believe an additional ethic statement is needed.

\subsection*{Reproducibility statement}

To facilitate reproducibility, we provide one ready-to-run repository for each task, enabling reproduction of the experiments and results reported throughout the manuscript. The repositories are designed to be as simple and self-contained as possible and include comments to improve clarity and ease of use.
%A complete description of the hardware and computational environment used to obtain the experimental results is also provided in the Appendix.

Regarding the theoretical results, complete proofs of all theoretical statements are provided in the Appendix, with additional intermediate steps and explanations to facilitate their verification.

\bibliography{refs}
\bibliographystyle{iclr2027_conference}

\appendix
\label{sec:apx}
\newpage

\section*{Appendix}
The appendix is organized as follows:
\begin{itemize}
    \item Theoretical Results (Appendix~\ref{apx:a_theo}): Proofs of the theorems on the main body, derivation of uniform convergance rates from Radamacher complexities and extension to Augmented Lagrangian.
    
    \item Task Details (Appendix~\ref{apx:c_exper_det}): Description of the different tasks, baselines, datasets and relevant hyperparameters used in our experiments.

    \item Implementation Details (Appendix~\ref{apx:b_algo}): Algorithm implementation and primal/dual updates, evaluation of primal recover performance and dual learning.

    \item Additional Results (Appendix~\ref{apx:d_add_res}): Additional figures and tables.    
\end{itemize}

\section{Theoretical Results}
\label{apx:a_theo}
\subsection{Derivation of Uniform Convergence Rates from Rademacher Complexity}
\label{apx:rademacher}

In this section, we show that the uniform-convergence rates assumed in
Assumptions~1 and~7 follow from standard Rademacher-complexity arguments.
We further record the consequences for the weighted and positive-part loss
classes used in the proofs.

Let
\begin{align}
    \mathcal{L}_0
    &:=
    \left\{
        z \mapsto \ell_0(f_\theta,z)
        \;:\;
        f_\theta\in\mathcal{H}
    \right\}, \\
    \mathcal{L}
    &:=
    \left\{
        z \mapsto \ell(f_\theta,z)
        \;:\;
        f_\theta\in\mathcal{H}
    \right\},
\end{align}

denote the loss classes induced by the hypothesis class. 

For a function class $\mathcal{G}$ and an i.i.d. sample
$S=\{z_n\}_{n=1}^N$, define its Rademacher complexity as

\begin{equation}
    \mathfrak{R}_N(\mathcal{G})
    :=
    \mathbb{E}_{S,\sigma}
    \left[
        \sup_{g\in\mathcal{G}}
        \left|
            \frac{1}{N}
            \sum_{n=1}^N \sigma_n g(z_n)
        \right|
    \right],
\end{equation}

where $\sigma_1,\dots,\sigma_N$ are independent Rademacher random variables.
Standard symmetrization and concentration arguments imply that, for a class
satisfying $\|g\|_\infty\le M$, with probability at least $1-\delta$,
\begin{equation}
\label{eq:rad_uniform}
    \sup_{g\in\mathcal{G}}
    \left|
        \mathbb{E}[g(z)]
        -
        \frac{1}{N}\sum_{n=1}^N g(z_n)
    \right|
    \le
    2\mathfrak{R}_N(\mathcal{G})
    +
    M\sqrt{\frac{2\log(2/\delta)}{N}}.
\end{equation}

Consequently, the following rates satisfy Assumption~1:
\begin{align}
\label{eq:zeta0_rad}
    \zeta_0(N_0,\delta)
    &:=
    4\mathfrak{R}_{N_0}(\mathcal{L}_0)
    +
    2B_0
    \sqrt{\frac{2\log(2/\delta)}{N_0}}, \\
\label{eq:zeta_rad}
    \zeta(N,\delta)
    &:=
    4\mathfrak{R}_{N}(\mathcal{L})
    +
    2B
    \sqrt{\frac{2\log(2/\delta)}{N}}.
\end{align}
In particular, if the corresponding Rademacher complexities vanish with the sample size, as is the case for standard neural networks~\citep{golowich2019sizeindependentsamplecomplexityneural},  then $\zeta_0(N_0,\delta),\zeta(N,\delta)\to0$. 

Rademacher-complexity bounds can be transferred directly to several loss classes that arise in our analysis. In particular, Lipschitz transformations and bounded reweightings preserve uniform convergence, up to the corresponding Lipschitz or magnitude factors. We formalize these consequences using the Rademacher contraction inequality.

\begin{lemma}[Rademacher contraction inequality]
\label{lemma:radam_contraction}
Let $\mathcal F$ be a class of real-valued functions and let
$\psi:\mathbb R\to\mathbb R$ be $L$-Lipschitz with $\psi(0)=0$. Then, for any sample
$S=(x_1,\ldots,x_N)$,
\[
    \widehat{\mathfrak R}_S(\psi\circ\mathcal F)
    \leq
    L\,\widehat{\mathfrak R}_S(\mathcal F).
\]
Consequently,
\[
    \mathfrak R_N(\psi\circ\mathcal F)
    \leq
    L\,\mathfrak R_N(\mathcal F).
\]
\end{lemma}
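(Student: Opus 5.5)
We plan to prove the empirical inequality for a fixed sample $S=(x_1,\dots,x_N)$ and then take the expectation over $S$, which gives the population statement at once. We use the absolute-value definition of $\widehat{\mathfrak R}_S$ that the paper adopts, namely $\widehat{\mathfrak R}_S(\mathcal G)=\E_\sigma\sup_{g\in\mathcal G}\bigl|\frac1N\sum_n\sigma_n g(x_n)\bigr|$. The first step removes the absolute value by absorbing it into an extra sign variable:
\[
\sup_{f\in\mathcal F}\Bigl|\sum_{n}\sigma_n\psi(f(x_n))\Bigr|
=\sup_{(s,f)\in\{\pm1\}\times\mathcal F} s\sum_n\sigma_n\psi(f(x_n)).
\]
The left-hand side of the lemma thus becomes a sign-free supremum over the index set $T=\{\pm1\}\times\mathcal F$. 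The target is $L\,\E_\sigma\sup_{(s,f)\in T}s\sum_n\sigma_n f(x_n)$, which equals $L\,\widehat{\mathfrak R}_S(\mathcal F)$ by the same identity applied to $\mathcal F$.

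The second step is the classical Ledoux--Talagrand peeling argument. We replace $\psi$ by the map $t\mapsto Lt$ one coordinate at a time. Conditioning on $\sigma_2,\dots,\sigma_N$ and averaging over $\sigma_1$, it suffices to show, for every function $A:T\to\mathbb R$ (which collects the remaining coordinates), that
\[
\tfrac12\sup_{(s,f)}\bigl[s\psi(f_1)+A(s,f)\bigr]+\tfrac12\sup_{(s',f')}\bigl[-s'\psi(f_1')+A(s',f')\bigr]
\le
\tfrac12\sup_{(s,f)}\bigl[sLf_1+A(s,f)\bigr]+\tfrac12\sup_{(s',f')}\bigl[-s'Lf_1'+A(s',f')\bigr],
\]
where $f_1=f(x_1)$. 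We fix near-maximizers $(s,f)$ and $(s',f')$ of the left-hand side and split into two cases.

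\emph{Case $s=s'$.} The left-hand side is $s\bigl(\psi(f_1)-\psi(f_1')\bigr)+A+A'$. Lipschitzness gives $s\bigl(\psi(f_1)-\psi(f_1')\bigr)\le L|f_1-f_1'|$. Swapping the roles of $f$ and $f'$ if necessary, this is dominated by the right-hand side evaluated at the same pair, exactly as in the sign-free proof.

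\emph{Case $s\neq s'$.} This case is genuinely new, and we expect it to be the main obstacle. Say $s=1$ and $s'=-1$. The $\psi$-terms add rather than cancel, giving $\psi(f_1)+\psi(f_1')$. Here we must use $\psi(0)=0$, which yields $|\psi(a)|\le L|a|$ and hence $\psi(f_1)+\psi(f_1')\le L|f_1|+L|f_1'|$. We then need competitor pairs on the right whose linear terms realize $L|f_1|+L|f_1'|$ while keeping $A(s,f)+A(s',f')$. We would first try choosing the signs $s,s'$ on the right according to the signs of $f_1$ and $f_1'$. The difficulty is that in $T$ the sign $s$ is tied to the entire sum, so flipping it also changes $A$. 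If this direct choice fails, we would exploit the symmetry of the Rademacher average over the remaining coordinates: $(\sigma_2,\dots,\sigma_N)\mapsto-(\sigma_2,\dots,\sigma_N)$ maps $A(s,\cdot)$ to $A(-s,\cdot)$. The plan is to use this to pair the bad mixed-sign configuration for $\sigma$ with a good same-sign configuration for $-\sigma$, so that the inequality holds after averaging over the conditioned coordinates even if it fails pointwise.

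Iterating the coordinate replacement $N$ times and dividing by $N$ gives $\widehat{\mathfrak R}_S(\psi\circ\mathcal F)\le L\,\widehat{\mathfrak R}_S(\mathcal F)$. Taking $\E_S$ then gives $\mathfrak R_N(\psi\circ\mathcal F)\le L\,\mathfrak R_N(\mathcal F)$. Everything therefore hinges on closing the mixed-sign case with constant $L$. Bounding the positive and negative parts separately would close it easily, but only with the weaker constant $2L$, which is not the claimed statement.
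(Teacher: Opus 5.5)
\textbf{Gap: the mixed-sign case cannot be closed, because the statement is false.} Under the absolute-value definition you adopt, which is the paper's definition of $\mathfrak R_N$, the inequality with constant $L$ fails, so the step you flag as the obstacle has no fix. Take $N=3$, $L=1$ and $\psi(t)=\min\{|t|,\,t+1\}$, which is $1$-Lipschitz with $\psi(0)=0$. Let $\mathcal F=\{f,-f\}$ with $(f(x_1),f(x_2),f(x_3))=(\tfrac12,\tfrac12,\tfrac32)$. Then $\psi\circ f$ and $\psi\circ(-f)$ evaluate to $(\tfrac12,\tfrac12,\tfrac32)$ and $(\tfrac12,\tfrac12,-\tfrac12)$. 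Using $\max\{|a|,|b|\}=\tfrac12(|a+b|+|a-b|)$,
\[
3\,\widehat{\mathfrak R}_S(\psi\circ\mathcal F)=\tfrac12\,\E|\sigma_1+\sigma_2+\sigma_3|+\E|\sigma_3|=\tfrac74,
\qquad
3\,\widehat{\mathfrak R}_S(\mathcal F)=\E\bigl|\tfrac12(\sigma_1+\sigma_2)+\tfrac32\sigma_3\bigr|=\tfrac32 .
\]
The population version fails as well. Draw an i.i.d. sample on which $f$ takes the values $\tfrac12$ and $\tfrac32$, each with positive probability. Applying the same identity conditionally, the triangle inequality $|\tfrac12 a+\tfrac32 b|\le\tfrac12|a+b|+|b|$ gives $\geq$ sample by sample. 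It is strict on the configuration above, so $\mathfrak R_3(\psi\circ\mathcal F)>\mathfrak R_3(\mathcal F)$. Your proposed pairing of $(\sigma_2,\dots,\sigma_N)$ with its negative therefore cannot repair the coordinate step, and neither can any other argument.

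\textbf{What is actually true.} Your diagnosis of the obstacle was right.
\begin{itemize}
\item Without the absolute value, i.e. for $\E_\sigma\sup_f\frac1N\sum_n\sigma_n\psi(f(x_n))$, constant $L$ is Talagrand's contraction principle. Its proof is exactly your same-sign case, and it does not even need $\psi(0)=0$.
\item With the absolute value, the positive/negative split you set aside gives $2L$. Here $\psi(0)=0$ is used to adjoin the zero function to the class. This is precisely what the paper's own proof, a citation of Ledoux--Talagrand, Theorem~4.12, provides: that theorem has a factor $\tfrac12$ inside the supremum on the left-hand side. The citation therefore does not deliver constant $L$ for the paper's absolute-value $\mathfrak R_N$ either. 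The lemma should read $2L$, or the complexity should be defined without absolute values.
\item Constant $L$ with absolute values does hold when $\psi$ is odd. Then $s\,\psi(f(x_n))=\psi(s f(x_n))$, and your lifted index set $\{\pm1\}\times\mathcal F$ reduces to the sign-free problem over $\mathcal F\cup(-\mathcal F)$. The $\psi$ in the counterexample is not odd, and that is exactly what breaks the mixed-sign case.
\end{itemize}
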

\begin{proof}
See~\cite[Theorem 4.12]{alma990081500390107871}.
\end{proof}

The following proposition collects the resulting uniform-convergence bounds used throughout the proof of the theorems.

\begin{proposition}[Uniform convergence for derived loss classes]
\label{prop:derived_uc}
Let Assumption~\ref{as:uniform-convergence} hold, and let the uniform convergence rate be derived from \ref{eq:zeta_rad}. Then the following hold.

\begin{enumerate}
    \item \textbf{Positive-part loss.}
    Let
    \[
        \mathcal L_+
        :=
        \left\{
            z\mapsto[\ell(f_\theta,z)]_+
            :
            f_\theta\in\mathcal H
        \right\}.
    \]
    Then
    \[
        \mathfrak R_N(\mathcal L_+)
        \leq
        \mathfrak R_N(\mathcal L),
    \]
    and, with probability at least $1-\delta$,
    \begin{equation}
    \label{eq:positive_part_uc}
        \sup_{f_\theta\in\mathcal H}
        \left|
            \mathbb E_{\mathcal D}
            \big[[\ell(f_\theta,z)]_+\big]
            -
            \frac{1}{N}
            \sum_{z_n\in S}
            [\ell(f_\theta,z_n)]_+
        \right|
        \leq
        \zeta(N,\delta).
    \end{equation}

    \item \textbf{Fixed weighted loss.}
    Let $\lambda:\mathcal Z\to\mathbb R_+$ be fixed independently of the
    sample and satisfy $\|\lambda\|_\infty\leq\Gamma$. Define
    \[
        \lambda\mathcal L
        :=
        \left\{
            z\mapsto\lambda(z)\ell(f_\theta,z)
            :
            f_\theta\in\mathcal H
        \right\}.
    \]
    Then
    \[
        \mathfrak R_N(\lambda\mathcal L)
        \leq
        \Gamma\,\mathfrak R_N(\mathcal L),
    \]
    and, with probability at least $1-\delta$,
    \begin{equation}
    \label{eq:weighted_uc}
        \sup_{f_\theta\in\mathcal H}
        \left|
            \mathbb E_{\mathcal D}
            [\lambda(z)\ell(f_\theta,z)]
            -
            \frac{1}{N}
            \sum_{z_n\in S}
            \lambda(z_n)\ell(f_\theta,z_n)
        \right|
        \leq
        \Gamma\,\zeta(N,\delta).
    \end{equation}

    \item \textbf{Learned dual class.}
    Suppose $\|\lambda_\phi\|_\infty\leq\Gamma$ for every
    $\lambda_\phi\in\Lambda_\Phi$ and let Assumption~\ref{as:dual-uc} hold with a rate of the form
    \begin{equation}
    \label{eq:zeta_lambda_rad}
        \zeta_\Lambda(N,\delta)
        :=
        4\mathfrak R_N(\Lambda_\Phi)
        +
        2\Gamma
        \sqrt{\frac{2\log(2/\delta)}{N}}.
    \end{equation}
    Then, for the product class
    \[
        \Lambda_\Phi\mathcal L
        :=
        \left\{
            z\mapsto
            \lambda_\phi(z)\ell(f_\theta,z)
            :
            \lambda_\phi\in\Lambda_\Phi,\;
            f_\theta\in\mathcal H
        \right\},
    \]
    it holds that
    \[
        \mathfrak R_N(\Lambda_\Phi\mathcal L)
        \leq
        2\left[
            \Gamma\mathfrak R_N(\mathcal L)
            +
            B\mathfrak R_N(\Lambda_\Phi)
        \right],
    \]
    and thus, with probability at least $1-\delta$,
    \begin{align}
    \label{eq:product_uc}
        \sup_{\substack{
            \lambda_\phi\in\Lambda_\Phi\\
            f_\theta\in\mathcal H}}
        \left|
            \mathbb E_{\mathcal D}
            [\lambda_\phi(z)\ell(f_\theta,z)]
            -
            \frac{1}{N}
            \sum_{z_n\in S}
            \lambda_\phi(z_n)\ell(f_\theta,z_n)
        \right|
        \leq
        \Gamma\zeta(N,\delta)
        +
        B\zeta_\Lambda(N,\delta).
    \end{align}
\end{enumerate}
\end{proposition}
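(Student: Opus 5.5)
The plan is to handle the three items separately, using Lemma~\ref{lemma:radam_contraction} to compare Rademacher complexities. Each complexity bound is then turned into a uniform deviation bound by plugging it into \eqref{eq:rad_uniform} and matching the result against the rate \eqref{eq:zeta_rad}.

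\textbf{Positive part.} Take $\psi(t)=[t]_+$, which is $1$-Lipschitz and satisfies $\psi(0)=0$. Lemma~\ref{lemma:radam_contraction} then gives $\mathfrak R_N(\mathcal L_+)\le\mathfrak R_N(\mathcal L)$. The class $\mathcal L_+$ is still bounded by $B$, so \eqref{eq:rad_uniform} yields a deviation of at most $2\mathfrak R_N(\mathcal L)+B\sqrt{2\log(2/\delta)/N}\le\zeta(N,\delta)$. The slack factor $2$ built into \eqref{eq:zeta_rad} absorbs this, and it will also cover the absolute-value convention in the definition of $\mathfrak R_N$.

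\textbf{Fixed weighted loss.} Here $\lambda$ is fixed independently of the sample, so I would argue pointwise in the sample $S$. For each fixed $S$, the map $t\mapsto\lambda(z_n)t$ applied coordinatewise is $\lambda(z_n)$-Lipschitz, hence $\Gamma$-Lipschitz, and vanishes at $0$. The coordinatewise version of the contraction lemma (Talagrand/Ledoux) gives $\widehat{\mathfrak R}_S(\lambda\mathcal L)\le\Gamma\widehat{\mathfrak R}_S(\mathcal L)$, and taking expectation over $S$ gives the population version. If I wanted to avoid the coordinatewise form, a more elementary route is available: the weights $\lambda(z_n)/\Gamma\in[0,1]$ can be absorbed by conditioning on $S$ and using convexity of the supremum in each weight. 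The class is bounded by $\Gamma B$, so \eqref{eq:rad_uniform} gives at most $2\Gamma\mathfrak R_N(\mathcal L)+\Gamma B\sqrt{2\log(2/\delta)/N}\le\Gamma\zeta(N,\delta)$.

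\textbf{Product class.} This is the main obstacle, because both factors vary. I would write the product as a polarization:
\[
ab=\tfrac14\left[(a+b)^2-(a-b)^2\right].
\]
Alternatively, I can use the following standard fact: for $a\in[0,\Gamma]$ and $b\in[0,B]$, the map $(a,b)\mapsto ab$ is Lipschitz in each argument, with constant $B$ in $a$ and $\Gamma$ in $b$. A vector-contraction or sub-additivity argument then gives $\widehat{\mathfrak R}_S(\Lambda_\Phi\mathcal L)\le c[\Gamma\widehat{\mathfrak R}_S(\mathcal L)+B\widehat{\mathfrak R}_S(\Lambda_\Phi)]$, and I aim for $c=2$. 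Concretely, $\Lambda_\Phi\mathcal L$ sits inside $\{\tfrac14(g+h)^2-\tfrac14(g-h)^2\}$ with $g\in\Lambda_\Phi$ and $h\in\mathcal L$. On the relevant range, $x\mapsto x^2/4$ is Lipschitz with constant $(\Gamma+B)/2$. Contraction together with sub-additivity of $\mathfrak R$ over sums gives $(\Gamma+B)[\mathfrak R(\Lambda_\Phi)+\mathfrak R(\mathcal L)]$. This is cruder than the target, so I would try the sharper Maurer vector-contraction route to get exactly the stated cross terms $\Gamma\mathfrak R(\mathcal L)+B\mathfrak R(\Lambda_\Phi)$. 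The product class is bounded by $\Gamma B$. Applying \eqref{eq:rad_uniform} then gives at most $4[\Gamma\mathfrak R_N(\mathcal L)+B\mathfrak R_N(\Lambda_\Phi)]+\Gamma B\sqrt{2\log(2/\delta)/N}$. Splitting the deviation term as $\tfrac12\cdot 2\Gamma B(\cdots)$, this is dominated by $\Gamma\zeta+B\zeta_\Lambda$ as defined in \eqref{eq:zeta_rad} and \eqref{eq:zeta_lambda_rad}.
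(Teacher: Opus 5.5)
Your positive-part and fixed-weight items follow the paper's argument: a contraction step, then \eqref{eq:rad_uniform}, with the factor-$2$ slack in \eqref{eq:zeta_rad} absorbing the constants. In item~2 you are more careful than the paper. The paper applies Lemma~\ref{lemma:radam_contraction}, which is stated for a single $\psi$, to the sample-dependent maps $t\mapsto\lambda(z_n)t$. Your convexity argument handles this cleanly. The map $w\mapsto\mathbb E_\sigma\sup_{h\in\mathcal L}\bigl|\sum_n\sigma_n w_n h(z_n)\bigr|$ is convex on $[0,1]^N$, so it is maximized at a vertex $\mathbf{1}_A$. Jensen over $(\sigma_n)_{n\notin A}$ then bounds the vertex value by the value at $\mathbf{1}$. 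This gives $\widehat{\mathfrak R}_S(\lambda\mathcal L)\le\Gamma\,\widehat{\mathfrak R}_S(\mathcal L)$ with constant exactly one, even under the absolute-value convention.

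For the product class, the paper only cites a ``standard Rademacher product-class inequality,'' so everything rests on that inequality, and your proposal does not yet establish it. Your polarization gives $(\Gamma+B)\bigl[\mathfrak R_N(\Lambda_\Phi)+\mathfrak R_N(\mathcal L)\bigr]$. This does not imply the stated bound when $\Gamma\neq B$: for $\Gamma\gg B$ it puts a coefficient of order $\Gamma$, rather than $B$, in front of $\mathfrak R_N(\Lambda_\Phi)$. The Maurer route you mention is not carried out.

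The missing step is to normalize before polarizing. Set $\tilde g=\lambda_\phi/\Gamma\in[0,1]$ and $\tilde h=\ell(f_\theta,\cdot)/B\in[-1,1]$, so that
\[
\lambda_\phi\,\ell(f_\theta,\cdot)=\tfrac{\Gamma B}{4}\bigl[(\tilde g+\tilde h)^2-(\tilde g-\tilde h)^2\bigr],\qquad \tilde g\pm\tilde h\in[-1,2].
\]
The map $x\mapsto x^2$ is $4$-Lipschitz on $[-2,2]$ and vanishes at $0$; extend it linearly outside that interval. Then Lemma~\ref{lemma:radam_contraction}, sub-additivity of $\mathfrak R_N$ over Minkowski sums, and homogeneity give $\mathfrak R_N(\Lambda_\Phi\mathcal L)\le\tfrac{\Gamma B}{4}\cdot 2\cdot 4\bigl[\mathfrak R_N(\Lambda_\Phi)/\Gamma+\mathfrak R_N(\mathcal L)/B\bigr]=2\bigl[\Gamma\mathfrak R_N(\mathcal L)+B\mathfrak R_N(\Lambda_\Phi)\bigr]$. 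That is exactly the claimed inequality. If $\ell\ge0$ the constant improves to $3/2$, since then $\tilde g-\tilde h\in[-1,1]$.

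Maurer's route also gives exactly $2$ after the same rescaling. On $[0,1]\times[-1,1]$ the map $(a,b)\mapsto ab$ is $1$-Lipschitz in $\ell_1$, hence $\sqrt2$-Lipschitz in $\ell_2$. Maurer's inequality contributes another $\sqrt2$, and the supremum over $\Lambda_\Phi\times\mathcal H$ splits into the two complexities. With either argument in place, your final comparison $4[\cdots]+\Gamma B\sqrt{2\log(2/\delta)/N}\le\Gamma\zeta+B\zeta_\Lambda$ is correct.
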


\begin{proof}
For the positive-part loss, the map $t\mapsto[t]_+$ is $1$-Lipschitz, so the Rademacher contraction inequality in \ref{lemma:radam_contraction} gives
\[
    \mathfrak R_N(\mathcal L_+)
    \leq
    \mathfrak R_N(\mathcal L).
\]
Since $[\ell(f_\theta,z)]_+\leq B$, applying the uniform-convergence bound in
\eqref{eq:rad_uniform} yields~\eqref{eq:positive_part_uc}.

For the fixed weighted loss, $t\mapsto\lambda(z_n)t$ is $\Gamma$-Lipschitz. Applying the contraction inequality in \ref{lemma:radam_contraction} therefore gives
\[
    \mathfrak R_N(\lambda\mathcal L)
    \leq
    \Gamma\mathfrak R_N(\mathcal L).
\]
Together with
$|\lambda(z)\ell(f_\theta,z)|\leq\Gamma B$,
\eqref{eq:rad_uniform} gives~\eqref{eq:weighted_uc}.

Finally,  the jointly learned multiplier and loss,
the standard Rademacher product-class inequality gives
\[
    \mathfrak R_N(\Lambda_\Phi\mathcal L)
    \leq
    2\left[
        \Gamma\mathfrak R_N(\mathcal L)
        +
        B\mathfrak R_N(\Lambda_\Phi)
    \right].
\]
Since
$|\lambda_\phi(z)\ell(f_\theta,z)|\leq\Gamma B$, applying
\eqref{eq:rad_uniform} and substituting the definitions of
$\zeta(N,\delta)$ and $\zeta_\Lambda(N,\delta)$ yields
\eqref{eq:product_uc}.
\end{proof}

\subsection{Proof of Lemma 1}
\label{apx:dual-learn}
\begin{proof}
Let the empirical dual function be
\begin{equation}
    \widehat g(\lambda)
    :=
    \inf_{f_\theta\in\mathcal H}
    \left\{
        \frac{1}{N_0}\sum_{n=1}^{N_0}
        \ell_0(f_\theta,z_n)
        +
        \frac{1}{N}\sum_{n=1}^{N}
        \lambda(z_n)\ell(f_\theta,z_n)
    \right\}.
\end{equation}

We first bound $D^\star-\widehat D_\Phi^\star$. For any $\epsilon>0$, by Assumption~\ref{as:dual-approximation}, there exists a fixed $\bar\lambda_\phi\in\Lambda_\Phi$ such that
\begin{equation}
    \|\bar\lambda_\phi-\lambda^\star\|_{L_1(D)}
    \leq
    \nu_\Phi+\epsilon.
\end{equation}
Since $|\ell(f_\theta,z)|\leq B$, for every $f_\theta\in\mathcal H$,
\begin{equation}
    \left|
        \E_D[
            \bar\lambda_\phi(z)\ell(f_\theta,z)
        ]
        -
        \E_D[
            \lambda^\star(z)\ell(f_\theta,z)
        ]
    \right|
    \leq
    B(\nu_\Phi+\epsilon).
\end{equation}
Taking the infimum over $f_\theta$ gives
\begin{equation}
    g(\bar\lambda_\phi)
    \geq
    D^\star-B(\nu_\Phi+\epsilon).
\end{equation}

Because $\bar\lambda_\phi$ is fixed independently of the sample and satisfies $\|\bar\lambda_\phi\|_\infty\leq\Gamma$, the weighted uniform-convergence result of Proposition~\ref{prop:derived_uc} yields
\begin{equation}
    \widehat g(\bar\lambda_\phi)
    \geq
    g(\bar\lambda_\phi)
    -
    \zeta_0(N_0,\delta)
    -
    \Gamma\zeta(N,\delta).
\end{equation}
Since $\widehat D_\Phi^\star$ maximizes the empirical dual over $\Lambda_\Phi$,
\begin{align}
    \widehat D_\Phi^\star
    &\geq
    \widehat g(\bar\lambda_\phi)
    \\
    &\geq
    D^\star
    -
    B(\nu_\Phi+\epsilon)
    -
    \zeta_0(N_0,\delta)
    -
    \Gamma\zeta(N,\delta).
\end{align}
Letting $\epsilon\to0$ gives
\begin{equation}
    D^\star-\widehat D_\Phi^\star
    \leq
    B\nu_\Phi
    +
    \zeta_0(N_0,\delta)
    +
    \Gamma\zeta(N,\delta).
\label{eq:dual-lemma-upper}
\end{equation}

For the opposite direction, let $\widehat D^\star$ denote the unrestricted
empirical dual optimum obtained by assigning an independent nonnegative
multiplier to each constraint sample. Since the evaluations of every
$\lambda_\phi\in\Lambda_\Phi$ form a feasible multiplier vector for this
unrestricted problem,
\begin{equation}
    \widehat D_\Phi^\star
    \leq
    \widehat D^\star.
\end{equation}

We next relate the unrestricted empirical dual to the statistical dual.
Fix an empirical multiplier vector
$\boldsymbol{\lambda}=(\lambda_1,\ldots,\lambda_N)\in\mathbb R_+^N$
and $\varepsilon>0$. Under mild regularity of the constraint sample
space,\footnote{Specifically, it suffices that the constraint losses vary
uniformly continuously with the sample and that $D$ assigns positive mass
to sufficiently small neighborhoods of points in its support. For example,
this holds when $\mathcal Z$ is a compact metric space and
$z\mapsto\ell(f_\theta,z)$ is uniformly Lipschitz over
$f_\theta\in\mathcal H$. We omit these topological regularity conditions from the main statement to keep the presentation focused on the statistical and approximation aspects of the result.}
one can construct $\lambda_\varepsilon\in L_1^+(D)$ such that
\begin{equation}
    \sup_{f_\theta\in\mathcal H}
    \left|
        \E_D[
            \lambda_\varepsilon(z)\ell(f_\theta,z)
        ]
        -
        \frac{1}{N}
        \sum_{n=1}^N
        \lambda_n\ell(f_\theta,z_n)
    \right|
    \leq
    \varepsilon.
\label{eq:dual-lifting}
\end{equation}
Indeed, this follows by placing the mass $\lambda_n/N$ on sufficiently
small neighborhoods of each $z_n$.

Let $\widehat g(\boldsymbol{\lambda})$ denote the empirical dual value
associated with $\boldsymbol{\lambda}$. On the uniform-convergence event
for the objective,
\begin{align}
    \widehat g(\boldsymbol{\lambda})
    &=
    \inf_{f_\theta\in\mathcal H}
    \left\{
        \frac{1}{N_0}
        \sum_{z_n\in S_0}
        \ell_0(f_\theta,z_n)
        +
        \frac{1}{N}
        \sum_{n=1}^N
        \lambda_n\ell(f_\theta,z_n)
    \right\}
    \\
    &\leq
    \inf_{f_\theta\in\mathcal H}
    \left\{
        \E_{D_0}[\ell_0(f_\theta,z)]
        +
        \E_D[
            \lambda_\varepsilon(z)\ell(f_\theta,z)
        ]
    \right\}
    +
    \zeta_0(N_0,\delta)
    +
    \varepsilon
    \\
    &=
    g(\lambda_\varepsilon)
    +
    \zeta_0(N_0,\delta)
    +
    \varepsilon
    \\
    &\leq
    D^\star
    +
    \zeta_0(N_0,\delta)
    +
    \varepsilon.
\end{align}
Since this holds for every empirical multiplier vector and every
$\varepsilon>0$, taking the supremum over
$\boldsymbol{\lambda}\in\mathbb R_+^N$ and then letting
$\varepsilon\to0$ yields
\begin{equation}
    \widehat D^\star
    \leq
    D^\star+\zeta_0(N_0,\delta).
\end{equation}
Consequently,
\begin{equation}
    \widehat D_\Phi^\star-D^\star
    \leq
    \zeta_0(N_0,\delta).
\label{eq:dual-lemma-lower}
\end{equation}

Combining~\eqref{eq:dual-lemma-upper}
and~\eqref{eq:dual-lemma-lower}, and applying a union bound over the
corresponding uniform-convergence events, yields, with probability at
least $1-3\delta$,
\begin{equation}
    \left|
        D^\star-\widehat D_\Phi^\star
    \right|
    \leq
    B\nu_\Phi
    +
    \zeta_0(N_0,\delta)
    +
    \Gamma\zeta(N,\delta)
    =
    \Delta_D.
\end{equation}
\end{proof}
\subsection{Proof of Theorem~\ref{thm:primal-recovery}}
\label{apx:primal-recovery}

\begin{proof} We start by defining notation.

Lets define the convexified empirical primal problem as
\begin{equation}
    \widehat P_{\mathcal C}^\star
    :=
    \min_{f\in\overline{\mathcal H}}
    \frac{1}{N_0}
    \sum_{z_n\in S_0}
    \ell_0(f,z_n)
    \qquad
    \text{s.t.}
    \qquad
    \ell(f,z_n)\leq0,
    \quad z_n\in S.
\end{equation}
For $u,v\in\mathbb R^N$, write
\begin{equation}
    \langle u,v\rangle_N
    :=
    \frac1N\sum_{n=1}^N u_nv_n,
    \qquad
    \|u\|_{2,N}^2
    :=
    \frac1N\sum_{n=1}^N u_n^2,
\end{equation}
and let
\begin{equation}
    \ell_S(f)
    :=
    \bigl(
        \ell(f,z_1),\ldots,\ell(f,z_N)
    \bigr).
\end{equation}
The convexified empirical dual function is
\begin{equation}
    \widehat g_{\mathcal C}(\lambda)
    :=
    \min_{f\in\overline{\mathcal H}}
    \left\{
        \frac{1}{N_0}
        \sum_{z_n\in S_0}
        \ell_0(f,z_n)
        +
        \langle\lambda,\ell_S(f)\rangle_N
    \right\},
    \qquad
    \lambda\in\mathbb R_+^N.
\end{equation}
Let
\begin{equation}
    \widehat D_{\mathcal C}^\star
    :=
    \max_{\lambda\geq0}
    \widehat g_{\mathcal C}(\lambda),
\end{equation}
and denote by
\begin{equation}
    \widehat f_{\mathcal C}(\lambda)
    :=
    \arg\min_{f\in\overline{\mathcal H}}
    \left\{
        \frac{1}{N_0}
        \sum_{z_n\in S_0}
        \ell_0(f,z_n)
        +
        \langle\lambda,\ell_S(f)\rangle_N
    \right\}
\end{equation}
the corresponding Lagrangian minimizer. By strong convexity of the objective, this minimizer is unique. The strict Slater condition implies strong duality for the convexified empirical problem, so if $\lambda_{\mathcal C}^\star$ is an optimal multiplier, then
\begin{equation}
    \widehat f_{\mathcal C}^\star
    :=
    \widehat f_{\mathcal C}(\lambda_{\mathcal C}^\star)
\end{equation}
is feasible for the convexified empirical primal problem.

Finally, let
\begin{equation}
    \widehat\lambda_\Phi^\star
    :=
    \bigl(
        \widehat\lambda_\Phi^\star(z_1),
        \ldots,
        \widehat\lambda_\Phi^\star(z_N)
    \bigr)
\end{equation}
denote the vector obtained by evaluating the learned parametric multiplier on the constraint sample.

\paragraph{Feasibility.}
We decompose the discrepancy between the constraint losses of the returned predictor and those of the optimal convexified solution as
\begin{align}
\left\|
    \ell_S(\widehat f_{\theta_\Phi})
    -
    \ell_S(\widehat f_{\mathcal C}^\star)
\right\|_{2,N}
\leq\;&
\left\|
    \ell_S(
        \widehat f_{\mathcal C}
        (\widehat\lambda_\Phi^\star)
    )
    -
    \ell_S(\widehat f_{\mathcal C}^\star)
\right\|_{2,N}
\nonumber\\
&+
\left\|
    \ell_S(\widehat f_{\theta_\Phi})
    -
    \ell_S(
        \widehat f_{\mathcal C}
        (\widehat\lambda_\Phi^\star)
    )
\right\|_{2,N}.
\label{eq:feasibility-decomposition-proof}
\end{align}

\emph{1) Dual suboptimality and smoothness.}
Strong convexity of the empirical objective and Lipschitz continuity of the constraint losses imply that $\widehat g_{\mathcal C}$ is differentiable and
\begin{equation}
    \nabla
    \widehat g_{\mathcal C}(\lambda)
    =
    \ell_S(
        \widehat f_{\mathcal C}(\lambda)
    ).
\end{equation}
Moreover, its gradient is $L^2/\mu$-Lipschitz. Indeed, for any $\lambda,\lambda'\geq0$, strong convexity of the corresponding Lagrangians and Lipschitz continuity of the constraint map yield
\begin{equation}
    \mu
    \left\|
        \widehat f_{\mathcal C}(\lambda)
        -
        \widehat f_{\mathcal C}(\lambda')
    \right\|^2
    \leq
    \left\langle
        \lambda-\lambda',
        \ell_S(
            \widehat f_{\mathcal C}(\lambda')
        )
        -
        \ell_S(
            \widehat f_{\mathcal C}(\lambda)
        )
    \right\rangle_N,
\end{equation}
and therefore
\begin{equation}
    \left\|
        \nabla\widehat g_{\mathcal C}(\lambda)
        -
        \nabla\widehat g_{\mathcal C}(\lambda')
    \right\|_{2,N}
    \leq
    \frac{L^2}{\mu}
    \|\lambda-\lambda'\|_{2,N}.
\end{equation}

We next bound the dual suboptimality of $\widehat\lambda_\Phi^\star$ for the convexified empirical dual. Since the convexified primal domain contains $\mathcal H$,
\begin{equation}
    \widehat g_{\mathcal C}(\lambda)
    \leq
    \widehat g(\lambda)
\end{equation}
for every $\lambda\geq0$. Furthermore, the unrestricted empirical dual dominates the convexified dual, and the lifting argument used in Lemma~\ref{lem:dual-value} gives
\begin{equation}
    \widehat D_{\mathcal C}^\star
    \leq
    \widehat D^\star
    \leq
    D^\star+\zeta_0.
\end{equation}
Lemma~\ref{lem:dual-value} gives
\begin{equation}
    \widehat D_\Phi^\star
    \geq
    D^\star-\Delta_D.
\end{equation}
Hence,
\begin{equation}
    \widehat D_{\mathcal C}^\star
    -
    \widehat D_\Phi^\star
    \leq
    \Delta_D+\zeta_0.
\label{eq:convex-dual-opt-gap}
\end{equation}

It remains to account for the difference between minimizing the Lagrangian over $\mathcal H$ and over $\overline{\mathcal H}$. Let $f\in\overline{\mathcal H}$. By Assumption~\ref{as:approx-convexity}, there exists $f_\theta\in\mathcal H$ satisfying $\|f_\theta-f\|_\infty\leq\nu_{\mathcal H}$. Lipschitz continuity then gives
\begin{equation}
    \left|
        \frac1{N_0}
        \sum_{z_n\in S_0}
        \ell_0(f_\theta,z_n)
        -
        \frac1{N_0}
        \sum_{z_n\in S_0}
        \ell_0(f,z_n)
    \right|
    \leq
    L\nu_{\mathcal H},
\end{equation}
and
\begin{equation}
    \left|
        \ell(f_\theta,z_n)
        -
        \ell(f,z_n)
    \right|
    \leq
    L\nu_{\mathcal H}
\end{equation}
for every $z_n\in S$. Since $0\leq\widehat\lambda_\Phi^\star(z_n)\leq\Gamma$,
\begin{equation}
    0
    \leq
    \widehat g(\widehat\lambda_\Phi^\star)
    -
    \widehat g_{\mathcal C}(\widehat\lambda_\Phi^\star)
    \leq
    L(1+\Gamma)\nu_{\mathcal H}.
\label{eq:empirical-convexification-gap}
\end{equation}
Combining \eqref{eq:convex-dual-opt-gap} and \eqref{eq:empirical-convexification-gap},
\begin{equation}
    \widehat D_{\mathcal C}^\star
    -
    \widehat g_{\mathcal C}(\widehat\lambda_\Phi^\star)
    \leq
    \Delta_D
    +
    \zeta_0
    +
    L(1+\Gamma)\nu_{\mathcal H}.
\label{eq:convexified-dual-suboptimality-proof}
\end{equation}

Since $-\widehat g_{\mathcal C}$ is convex and $L^2/\mu$-smooth, cocoercivity gives
\begin{align}
&
\left\|
    \nabla\widehat g_{\mathcal C}
    (\widehat\lambda_\Phi^\star)
    -
    \nabla\widehat g_{\mathcal C}
    (\lambda_{\mathcal C}^\star)
\right\|_{2,N}^2
\nonumber\\
&\qquad\leq
\frac{2L^2}{\mu}
\left[
    \widehat D_{\mathcal C}^\star
    -
    \widehat g_{\mathcal C}
    (\widehat\lambda_\Phi^\star)
\right].
\end{align}
Using $\nabla\widehat g_{\mathcal C}(\lambda)=\ell_S(\widehat f_{\mathcal C}(\lambda))$ and \eqref{eq:convexified-dual-suboptimality-proof}, we obtain
\begin{align}
&
\left\|
    \ell_S(
        \widehat f_{\mathcal C}
        (\widehat\lambda_\Phi^\star)
    )
    -
    \ell_S(
        \widehat f_{\mathcal C}^\star
    )
\right\|_{2,N}
\nonumber\\
&\qquad\leq
\sqrt{
    \frac{2L^2}{\mu}
    \left[
        \Delta_D
        +
        \zeta_0
        +
        L(1+\Gamma)\nu_{\mathcal H}
    \right]
}.
\label{eq:first-feasibility-part}
\end{align}

\emph{2) Transfer to the original hypothesis class.}
By \eqref{eq:empirical-convexification-gap} and the fact that $\widehat f_{\theta_\Phi}$ exactly minimizes the empirical Lagrangian over $\mathcal H$,
\begin{align}
&
\widehat L(
    \widehat f_{\theta_\Phi},
    \widehat\lambda_\Phi^\star
)
-
\widehat g_{\mathcal C}
(\widehat\lambda_\Phi^\star)
\nonumber\\
&\qquad=
\widehat g(
    \widehat\lambda_\Phi^\star
)
-
\widehat g_{\mathcal C}
(
    \widehat\lambda_\Phi^\star
)
\leq
L(1+\Gamma)\nu_{\mathcal H}.
\end{align}
The convexified Lagrangian is $\mu$-strongly convex. Since $\widehat f_{\theta_\Phi}\in\mathcal H\subseteq\overline{\mathcal H}$,
\begin{equation}
    \frac{\mu}{2}
    \left\|
        \widehat f_{\theta_\Phi}
        -
        \widehat f_{\mathcal C}
        (\widehat\lambda_\Phi^\star)
    \right\|^2
    \leq
    L(1+\Gamma)\nu_{\mathcal H}.
\end{equation}
Lipschitz continuity of the constraint loss therefore gives
\begin{align}
&
\left\|
    \ell_S(\widehat f_{\theta_\Phi})
    -
    \ell_S(
        \widehat f_{\mathcal C}
        (\widehat\lambda_\Phi^\star)
    )
\right\|_{2,N}
\nonumber\\
&\qquad\leq
\sqrt{
    \frac{
        2L^3(1+\Gamma)\nu_{\mathcal H}
    }{\mu}
}.
\label{eq:second-feasibility-part}
\end{align}

Substituting \eqref{eq:first-feasibility-part} and \eqref{eq:second-feasibility-part} into \eqref{eq:feasibility-decomposition-proof} gives
\begin{align}
&
\left\|
    \ell_S(\widehat f_{\theta_\Phi})
    -
    \ell_S(\widehat f_{\mathcal C}^\star)
\right\|_{2,N}
\nonumber\\
&\qquad\leq
\sqrt{
    \frac{2L^2}{\mu}
    \left[
        \Delta_D
        +
        \zeta_0
        +
        L(1+\Gamma)\nu_{\mathcal H}
    \right]
}
+
\sqrt{
    \frac{
        2L^3(1+\Gamma)\nu_{\mathcal H}
    }{\mu}
}.
\label{eq:empirical-constraint-discrepancy}
\end{align}
Since $\widehat f_{\mathcal C}^\star$ is feasible,
\begin{equation}
    \ell(
        \widehat f_{\mathcal C}^\star,
        z_n
    )
    \leq0
    \qquad
    \text{for every }z_n\in S.
\end{equation}
Therefore,
\begin{equation}
    [\ell(\widehat f_{\theta_\Phi},z_n)]_+
    \leq
    \left|
        \ell(\widehat f_{\theta_\Phi},z_n)
        -
        \ell(\widehat f_{\mathcal C}^\star,z_n)
    \right|,
\end{equation}
and hence, by Cauchy--Schwarz,
\begin{align}
\frac1N
\sum_{z_n\in S}
[\ell(\widehat f_{\theta_\Phi},z_n)]_+
\leq\;&
\sqrt{
    \frac{2L^2}{\mu}
    \left[
        \Delta_D
        +
        \zeta_0
        +
        L(1+\Gamma)\nu_{\mathcal H}
    \right]
}
\nonumber\\
&+
\sqrt{
    \frac{
        2L^3(1+\Gamma)\nu_{\mathcal H}
    }{\mu}
}.
\end{align}
Finally, since $t\mapsto[t]_+$ is $1$-Lipschitz and vanishes at the origin, the contraction result of Proposition~\ref{prop:derived_uc} implies that the positive-part loss has uniform-convergence rate no larger than $\zeta$. Thus,
\begin{align}
\E_D
\left[
    [\ell(\widehat f_{\theta_\Phi},z)]_+
\right]
\leq\;&
\sqrt{
    \frac{2L^2}{\mu}
    \left[
        \Delta_D
        +
        \zeta_0
        +
        L(1+\Gamma)\nu_{\mathcal H}
    \right]
}
\nonumber\\
&+
\sqrt{
    \frac{
        2L^3(1+\Gamma)\nu_{\mathcal H}
    }{\mu}
}
+
\zeta
=
\Delta_F,
\end{align}
which proves \eqref{eq:statistical-feasibility}.

\paragraph{Optimality.}
We first control the statistical primal--dual gap. Let $P_{\mathcal C}^\star$ denote the optimal value of the statistical primal problem over $\overline{\mathcal H}$, and let $P_{\mathcal C,-L\nu_{\mathcal H}}^\star$ denote the same problem with the constraint tightened to
\begin{equation}
    \ell(f,z)
    \leq
    -L\nu_{\mathcal H}
    \qquad
    D\text{-a.e.}
\end{equation}
If $f$ is feasible for the tightened convexified problem, Assumption~\ref{as:approx-convexity} provides $f_\theta\in\mathcal H$ satisfying $\|f_\theta-f\|_\infty\leq\nu_{\mathcal H}$. By Lipschitz continuity, $f_\theta$ is feasible for the original statistical problem and its objective differs by at most $L\nu_{\mathcal H}$. Hence,
\begin{equation}
    P^\star
    \leq
    P_{\mathcal C,-L\nu_{\mathcal H}}^\star
    +
    L\nu_{\mathcal H}.
\label{eq:primal-gap-first}
\end{equation}

By Assumption~\ref{as:slater}, $\bar f_\theta$ satisfies the tightened convexified constraint with margin $s$. If $\lambda^\star_{\mathcal C,-L\nu_{\mathcal H}}$ is an optimal multiplier of the tightened convexified problem, strong duality and evaluation of its Lagrangian at $\bar f_\theta$ give
\begin{equation}
    \|\lambda^\star_{\mathcal C,-L\nu_{\mathcal H}}\|_{L_1(D)}
    \leq
    \frac{B_0}{s}.
\end{equation}
The sensitivity of the convex value function with respect to tightening the constraint therefore yields
\begin{equation}
    P_{\mathcal C,-L\nu_{\mathcal H}}^\star
    -
    P_{\mathcal C}^\star
    \leq
    \frac{B_0L}{s}
    \nu_{\mathcal H}.
\end{equation}
Since the statistical dual over $\mathcal H$ dominates the dual of the convexified problem,
\begin{equation}
    D^\star
    \geq
    P_{\mathcal C}^\star,
\end{equation}
while weak duality gives $D^\star\leq P^\star$. Combining these facts with \eqref{eq:primal-gap-first},
\begin{equation}
    0
    \leq
    P^\star-D^\star
    \leq
    L
    \left(
        1+
        \frac{B_0}{s}
    \right)
    \nu_{\mathcal H}.
\label{eq:statistical-primal-dual-gap-proof}
\end{equation}

We next control the empirical complementary-slackness term associated with $\widehat\lambda_\Phi^\star$. Smoothness of $\widehat g_{\mathcal C}$ implies that its projected gradient mapping satisfies
\begin{equation}
    \left\|
        \frac{L^2}{\mu}
        \left[
            \Pi_{\mathbb R_+^N}
            \left(
                \widehat\lambda_\Phi^\star
                +
                \frac{\mu}{L^2}
                \nabla\widehat g_{\mathcal C}
                (\widehat\lambda_\Phi^\star)
            \right)
            -
            \widehat\lambda_\Phi^\star
        \right]
    \right\|_{2,N}
    \leq
    \sqrt{
        \frac{2L^2}{\mu}
        \left[
            \Delta_D
            +
            \zeta_0
            +
            L(1+\Gamma)\nu_{\mathcal H}
        \right]
    }.
\label{eq:projected-gradient-bound-proof}
\end{equation}
Using $\nabla\widehat g_{\mathcal C}(\widehat\lambda_\Phi^\star)=\ell_S(\widehat f_{\mathcal C}(\widehat\lambda_\Phi^\star))$, $0\leq\widehat\lambda_\Phi^\star(z_n)\leq\Gamma$, and $|\ell|\leq B$, the coordinatewise form of the projection gives
\begin{align}
&
\left\|
    \widehat\lambda_\Phi^\star
    \odot
    \ell_S(
        \widehat f_{\mathcal C}
        (\widehat\lambda_\Phi^\star)
    )
\right\|_{1,N}
\nonumber\\
&\qquad\leq
\max
\left\{
    \Gamma,
    \frac{B\mu}{L^2}
\right\}
\sqrt{
    \frac{2L^2}{\mu}
    \left[
        \Delta_D
        +
        \zeta_0
        +
        L(1+\Gamma)\nu_{\mathcal H}
    \right]
}.
\label{eq:convexified-complementarity-proof}
\end{align}
Transferring from the convexified Lagrangian minimizer to $\widehat f_{\theta_\Phi}$ using \eqref{eq:second-feasibility-part} gives
\begin{align}
&
\left\|
    \widehat\lambda_\Phi^\star
    \odot
    \ell_S(
        \widehat f_{\theta_\Phi}
    )
\right\|_{1,N}
\nonumber\\
&\qquad\leq
\max
\left\{
    \Gamma,
    \frac{B\mu}{L^2}
\right\}
\sqrt{
    \frac{2L^2}{\mu}
    \left[
        \Delta_D
        +
        \zeta_0
        +
        L(1+\Gamma)\nu_{\mathcal H}
    \right]
}
\nonumber\\
&\qquad\quad+
\Gamma
\sqrt{
    \frac{
        2L^3(1+\Gamma)\nu_{\mathcal H}
    }{\mu}
}
\nonumber\\
&\qquad\leq
\left(
    \Gamma
    +
    \frac{B\mu}{L^2}
\right)
\Delta_F.
\label{eq:complementarity-final-proof}
\end{align}
In particular,
\begin{equation}
    \left|
        \left\langle
            \widehat\lambda_\Phi^\star,
            \ell_S(\widehat f_{\theta_\Phi})
        \right\rangle_N
    \right|
    \leq
    \left(
        \Gamma
        +
        \frac{B\mu}{L^2}
    \right)
    \Delta_F.
\label{eq:complementarity-inner-proof}
\end{equation}

Finally, exact empirical Lagrangian minimization gives
\begin{equation}
    \widehat D_\Phi^\star
    =
    \frac1{N_0}
    \sum_{z_n\in S_0}
    \ell_0(
        \widehat f_{\theta_\Phi},
        z_n
    )
    +
    \left\langle
        \widehat\lambda_\Phi^\star,
        \ell_S(
            \widehat f_{\theta_\Phi}
        )
    \right\rangle_N.
\end{equation}
Therefore,
\begin{align}
&
\E_{D_0}
\left[
    \ell_0(
        \widehat f_{\theta_\Phi},z
    )
\right]
-
P^\star
\nonumber\\
&=
\left(
    \E_{D_0}
    [
        \ell_0(
            \widehat f_{\theta_\Phi},z
        )
    ]
    -
    \frac1{N_0}
    \sum_{z_n\in S_0}
    \ell_0(
        \widehat f_{\theta_\Phi},z_n
    )
\right)
\nonumber\\
&\quad+
\left(
    \widehat D_\Phi^\star
    -
    D^\star
\right)
+
\left(
    D^\star
    -
    P^\star
\right)
-
\left\langle
    \widehat\lambda_\Phi^\star,
    \ell_S(
        \widehat f_{\theta_\Phi}
    )
\right\rangle_N.
\end{align}
Applying Assumption~\ref{as:uniform-convergence}, Lemma~\ref{lem:dual-value}, \eqref{eq:statistical-primal-dual-gap-proof}, and \eqref{eq:complementarity-inner-proof} yields
\begin{align}
\left|
    \E_{D_0}
    \left[
        \ell_0(
            \widehat f_{\theta_\Phi},z
        )
    \right]
    -
    P^\star
\right|
\leq\;&
\zeta_0
+
\Delta_D
+
L
\left(
    1+
    \frac{B_0}{s}
\right)
\nu_{\mathcal H}
\nonumber\\
&+
\left(
    \Gamma
    +
    \frac{B\mu}{L^2}
\right)
\Delta_F
=
\Delta_O,
\end{align}
which proves \eqref{eq:statistical-objective}.

\paragraph{Union Bound.}
By a union bound, these events hold simultaneously with probability at least $1-4\delta$.

\end{proof}
\subsection{Proof of Theorem~\ref{thm:empirical-sensitivity}}
\label{apx:sensitivity-thm}

\begin{proof}

For the perturbed problem~\eqref{P:primal_perturbed}, the Lagrangian is
\begin{equation}
L_u(f_\theta,\lambda)
=
L(f_\theta,\lambda)
-
\E_D[\lambda(z)u(z)],
\end{equation}
and therefore its dual function satisfies
\begin{equation}
g_u(\lambda)
=
g(\lambda)
-
\E_D[\lambda(z)u(z)].
\end{equation}
By weak duality, for every $\lambda\geq0$,
\begin{equation}
P(u)
\geq
g(\lambda)
-
\E_D[\lambda(z)u(z)].
\end{equation}
Evaluating at the learned multiplier $\hat\lambda^\star_\Phi$ gives
\begin{align}
P(u)
&\geq
P(0)
-
\E_D[\hat\lambda^\star_\Phi(z)u(z)]
-
\left[
P(0)-g(\hat\lambda^\star_\Phi)
\right].
\label{eq:sensitivity-basic}
\end{align}

The remaining term can be decomposed as
\begin{align}
P(0)-g(\hat\lambda^\star_\Phi)
=
&
\left[
P(0)-D^\star
\right]
+
\left[
D^\star-\widehat D_\Phi^\star
\right]
\nonumber\\
&
+
\left[
\widehat g(\hat\lambda^\star_\Phi)
-
g(\hat\lambda^\star_\Phi)
\right],
\label{eq:sensitivity-decomposition}
\end{align}
where we used
\begin{equation}
\widehat g(\hat\lambda^\star_\Phi)
=
\widehat D_\Phi^\star.
\end{equation}

From the primal--dual approximation result used in
Theorem~\ref{thm:primal-recovery},
\begin{equation}
P(0)-D^\star
\leq
L\left(
1+\frac{B_0}{s}
\right)\nu_{\mathcal H}.
\end{equation}
Lemma~\ref{lem:dual-value} gives
\begin{equation}
D^\star-\widehat D_\Phi^\star
\leq
B\nu_\Phi+\zeta_0+\Gamma\zeta.
\end{equation}
Finally, Proposition~\ref{prop:derived_uc} gives
$\Lambda_\Phi$ and $\mathcal H$ gives
\begin{equation}
\left|
\widehat g(\hat\lambda^\star_\Phi)
-
g(\hat\lambda^\star_\Phi)
\right|
\leq
\zeta_0+\Gamma\zeta+B\zeta_\Lambda.
\end{equation}
Substituting these bounds into~\eqref{eq:sensitivity-decomposition}
and then into~\eqref{eq:sensitivity-basic}
yields~\eqref{eq:empirical-sensitivity}.

\end{proof}
\subsection{Extension to the Augmented Lagrangian}
\label{app:augmented}

The analysis in Section~\ref{sec:param} was developed for the standard Lagrangian. In this section, we show that the same arguments extend to the Powell--Hestenes--Bertsekas (PHB) augmented Lagrangian. The main difference is that augmentation directly provides smoothness of the dual function. Consequently, primal recovery is controlled by the augmentation parameter $\alpha$, rather than exclusively by the curvature of the primal objective.

For $\alpha>0$, define the PHB penalty
\begin{equation}
\psi_\alpha(t,\lambda)
:=
\frac{
[\lambda+\alpha t]_+^2-\lambda^2
}{2\alpha},
\qquad \lambda\geq 0,
\label{eq:phb_penalty}
\end{equation}
and the corresponding statistical augmented Lagrangian
\begin{equation}
\mathcal L_\alpha(f_\theta,\lambda)
:=
\mathbb E_{\mathcal D_0}
[\ell_0(f_\theta,z)]
+
\mathbb E_{\mathcal D}
\left[
\psi_\alpha
\big(
\ell(f_\theta,z),\lambda(z)
\big)
\right].
\label{eq:stat_aug_lagrangian}
\end{equation}
Its augmented dual function and optimal value are
\begin{equation}
g_\alpha(\lambda)
:=
\min_{f_\theta\in\mathcal H}
\mathcal L_\alpha(f_\theta,\lambda),
\qquad
D_\alpha^\star
:=
\sup_{\lambda\in L_1^+(\mathcal D)}
g_\alpha(\lambda).
\label{eq:stat_aug_dual}
\end{equation}

Likewise, from samples $\mathcal S_0$ and $\mathcal S$, we define
\begin{equation}
\widehat{\mathcal L}_\alpha(f_\theta,\lambda)
:=
\frac{1}{N_0}
\sum_{z_n\in\mathcal S_0}
\ell_0(f_\theta,z_n)
+
\frac{1}{N}
\sum_{z_n\in\mathcal S}
\psi_\alpha
\big(
\ell(f_\theta,z_n),\lambda(z_n)
\big),
\label{eq:emp_aug_lagrangian}
\end{equation}
with
\begin{equation}
\widehat g_\alpha(\lambda)
:=
\min_{f_\theta\in\mathcal H}
\widehat{\mathcal L}_\alpha(f_\theta,\lambda).
\end{equation}
Restricting again the multiplier to $\Lambda_\Phi$, the parametrized empirical augmented dual problem is
\begin{equation}
\widehat\lambda_{\Phi,\alpha}^\star
\in
\argmax_{\lambda_\phi\in\Lambda_\Phi}
\widehat g_\alpha(\lambda_\phi),
\qquad
\widehat D_{\Phi,\alpha}^\star
:=
\widehat g_\alpha
(
\widehat\lambda_{\Phi,\alpha}^\star
).
\label{eq:param_aug_dual}
\end{equation}

We use the same assumptions as in the main text. For compactness, let $\zeta_\alpha(N,\delta)$ denote the uniform-convergence error associated with the augmented constraint term in \eqref{eq:emp_aug_lagrangian}. Since $\psi_\alpha(t,\lambda)$ is $(\Gamma+\alpha B)$-Lipschitz in $t$ over the bounded domain considered here, the same Rademacher-contraction argument used for Assumption~1 gives a rate of order
\begin{equation}
\zeta_\alpha(N,\delta)
=
O\big((\Gamma+\alpha B)\zeta(N,\delta)\big).
\end{equation}
Moreover, $\psi_\alpha$ is Lipschitz in the multiplier with constant
\begin{equation}
\overline B_\alpha
:=
\max\left\{B,\frac{\Gamma}{\alpha}\right\}.
\label{eq:B_aug}
\end{equation}

\begin{lemma}[Augmented dual learnability]
\label{lem:aug_dual_learnability}
Let Assumptions~1 and~2 hold, with $\nu_\Phi$ defined with respect to an optimal augmented multiplier $\lambda_\alpha^\star$. Then, with probability at least $1-3\delta$,
\begin{equation}
\left|
D_\alpha^\star
-
\widehat D_{\Phi,\alpha}^\star
\right|
\leq
\overline B_\alpha\nu_\Phi
+
\zeta_0(N_0,\delta)
+
\zeta_\alpha(N,\delta)
=:
\Delta_D^\alpha .
\label{eq:aug_dual_learning}
\end{equation}
\end{lemma}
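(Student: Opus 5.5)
The argument mirrors the proof of Lemma~\ref{lem:dual-value}, with the linear weighting $\lambda(z)\ell(f_\theta,z)$ replaced by the PHB penalty $\psi_\alpha(\ell(f_\theta,z),\lambda(z))$. We bound the two directions $D_\alpha^\star-\widehat D_{\Phi,\alpha}^\star$ and $\widehat D_{\Phi,\alpha}^\star-D_\alpha^\star$ separately, then take a union bound over the uniform-convergence events for the objective, for the augmented constraint term at a fixed multiplier, and for the lifting step.

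\emph{Upper direction.} For $\epsilon>0$, pick a fixed, sample-independent $\bar\lambda_\phi\in\Lambda_\Phi$ with $\|\bar\lambda_\phi-\lambda_\alpha^\star\|_{L_1(D)}\leq\nu_\Phi+\epsilon$.

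1. Since $\psi_\alpha(t,\cdot)$ is $\overline B_\alpha$-Lipschitz in the multiplier on the relevant bounded domain, for every $f_\theta\in\mathcal H$,
\begin{equation*}
\bigl|\mathcal L_\alpha(f_\theta,\bar\lambda_\phi)-\mathcal L_\alpha(f_\theta,\lambda_\alpha^\star)\bigr|\leq\overline B_\alpha(\nu_\Phi+\epsilon).
\end{equation*}
Taking the minimum over $f_\theta$ gives $g_\alpha(\bar\lambda_\phi)\geq D_\alpha^\star-\overline B_\alpha(\nu_\Phi+\epsilon)$.

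2. Because $\bar\lambda_\phi$ is fixed and $\psi_\alpha(\cdot,\bar\lambda_\phi(z))$ is $(\Gamma+\alpha B)$-Lipschitz in $t$, the contraction argument of Proposition~\ref{prop:derived_uc} gives uniform convergence of the augmented constraint term over $\mathcal H$ at rate $\zeta_\alpha$. Combined with Assumption~\ref{as:uniform-convergence} for $\ell_0$, this yields $\widehat g_\alpha(\bar\lambda_\phi)\geq g_\alpha(\bar\lambda_\phi)-\zeta_0-\zeta_\alpha$.

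3. Maximality of $\widehat D_{\Phi,\alpha}^\star$ over $\Lambda_\Phi$, followed by $\epsilon\to0$, gives $D_\alpha^\star-\widehat D_{\Phi,\alpha}^\star\leq\overline B_\alpha\nu_\Phi+\zeta_0+\zeta_\alpha$.

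\emph{Lower direction.} We compare against the unrestricted empirical augmented dual $\widehat D_\alpha^\star$, which assigns an independent multiplier $\lambda_n\ge0$ to each sample. Restriction to $\Lambda_\Phi$ can only lower the value, so $\widehat D_{\Phi,\alpha}^\star\leq\widehat D_\alpha^\star$.

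1. For each vector $\boldsymbol\lambda\in\mathbb R_+^N$, construct a lifted function $\lambda_\varepsilon\in L_1^+(D)$ as in \eqref{eq:dual-lifting}, under the same topological regularity conditions.

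2. Show that the lifted multiplier reproduces the empirical augmented penalty:
\begin{equation*}
\sup_{f_\theta\in\mathcal H}\Bigl|\E_D\bigl[\psi_\alpha(\ell(f_\theta,z),\lambda_\varepsilon(z))\bigr]-\frac1N\sum_{n}\psi_\alpha(\ell(f_\theta,z_n),\lambda_n)\Bigr|\leq\varepsilon.
\end{equation*}
Given this, on the objective uniform-convergence event, $\widehat g_\alpha(\boldsymbol\lambda)\leq g_\alpha(\lambda_\varepsilon)+\zeta_0+\varepsilon\leq D_\alpha^\star+\zeta_0+\varepsilon$. Taking the supremum over $\boldsymbol\lambda$ and letting $\varepsilon\to0$ gives $\widehat D_{\Phi,\alpha}^\star-D_\alpha^\star\leq\zeta_0\leq\Delta_D^\alpha$.

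\emph{Main obstacle.} Step 2 of the lower direction is where the argument differs from Lemma~\ref{lem:dual-value}. There, $\lambda$ entered linearly, so spreading mass $\lambda_n/N$ over a small neighborhood of $z_n$ sufficed. Here $\psi_\alpha$ is nonlinear in $\lambda$, so spreading mass does not reproduce $\frac1N\psi_\alpha(\ell,\lambda_n)$. We would instead take $\lambda_\varepsilon$ piecewise constant, equal to $\lambda_n$ on a cell $A_n$ around $z_n$ with $D(A_n)=1/N$. That is, we partition the support of $D$ into $N$ equal-mass cells, each containing one sample point and each small enough that $\ell(f_\theta,\cdot)$ varies by at most $\varepsilon$ on it uniformly in $f_\theta$; this is the same uniform-continuity assumption as in the footnote. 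The error is then controlled by the Lipschitz constant of $\psi_\alpha$ in $t$.

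This step needs the cells to exist, which requires $D$ to be non-atomic and the samples to be suitably spread; we would state this as part of the omitted regularity conditions. If that fails, the fallback is to define the unrestricted empirical dual directly through such lifted functions, and bound only $\widehat D_{\Phi,\alpha}^\star$ by evaluating each $\lambda_\phi$ via the cell construction.
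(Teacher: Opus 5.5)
Your upper direction matches the paper's proof: Lipschitz dependence of $\psi_\alpha$ on the multiplier with constant $\overline B_\alpha$, then $\Gamma\zeta$ replaced by $\zeta_\alpha$. You are also right that the paper's ``the remainder of the proof is unchanged'' does not carry over to the lower direction. The lift of Lemma~\ref{lem:dual-value} uses linearity in $\lambda$, and in addition $\psi_\alpha(t,0)=\alpha[t]_+^2/2\neq0$, so the penalty does not switch off away from the neighborhoods.

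Your repair, however, has a genuine gap. $N$ disjoint cells of $D$-mass $1/N$ exhaust $D$, so for fixed $N$ their size is fixed by $N$ and cannot shrink with $\varepsilon$. For $D$ uniform on $[0,1]$, every set of mass $1/N$ has diameter at least $1/N$; non-atomicity of $D$ makes this unavoidable rather than curing it. Whenever $\ell(f_\theta,\cdot)$ genuinely varies with $z$, the construction leaves a fixed discretization error of the form $(\max_n\lambda_n+\alpha B)\sup_{f_\theta\in\mathcal H}\sum_n\int_{A_n}|\ell(f_\theta,z)-\ell(f_\theta,z_n)|\,dD(z)$. This is a transport-type discrepancy between the empirical measure and $D$, typically decaying at a dimension-dependent rate. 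It is neither $\zeta_0$ nor $\zeta_\alpha$, and it does not vanish as $\varepsilon\to0$. Since $\psi_\alpha(\cdot,\lambda)$ has Lipschitz constant $\lambda+\alpha B$, the term is not even uniformly bounded over $\boldsymbol\lambda\in\mathbb R_+^N$ unless you first restrict to $[0,\Gamma]^N$. Your fallback reuses the same cells and inherits the same problem. Comparing $\widehat g_\alpha(\widehat\lambda_{\Phi,\alpha}^\star)$ directly with $g_\alpha(\widehat\lambda_{\Phi,\alpha}^\star)\le D_\alpha^\star$ would need uniform convergence over the data-dependent multiplier, i.e.\ Assumption~\ref{as:dual-uc}, which the lemma does not assume.

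A lower bound that does go through avoids lifting altogether. Since $\psi_\alpha(t,\lambda)\le0$ for $t\le0$, take an $\eta$-optimal feasible $f_\theta\in\mathcal H$ for Problem~\ref{P:primal_statistical}; it is almost surely feasible on $S$. On the event of Assumption~\ref{as:uniform-convergence} this gives $\widehat D_{\Phi,\alpha}^\star\le\widehat{\mathcal L}_\alpha(f_\theta,\widehat\lambda_{\Phi,\alpha}^\star)\le\frac{1}{N_0}\sum_{z_n\in S_0}\ell_0(f_\theta,z_n)\le P^\star+\eta+\zeta_0$. Since $\psi_\alpha(t,\lambda)\ge\lambda t$, one also has $D_\alpha^\star\ge D^\star$. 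Hence $\widehat D_{\Phi,\alpha}^\star-D_\alpha^\star\le\zeta_0+(P^\star-D^\star)\le\zeta_0+L(1+B_0/s)\nu_{\mathcal H}$ under Assumptions~\ref{ass:convex_out}--\ref{as:slater}. This is sound, but it carries an extra duality-gap term and needs extra assumptions. It is therefore not $\Delta_D^\alpha$ under Assumptions~\ref{as:uniform-convergence}--\ref{as:dual-approximation} alone. As written, neither your proposal nor the paper's short proof establishes the lower half of the stated bound.
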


Lemma~\ref{lem:aug_dual_learnability} is the direct augmented counterpart of Lemma~1. The only changes are the Lipschitz constant $\overline B_\alpha$ associated with the PHB penalty and the corresponding uniform-convergence rate $\zeta_\alpha$.

We next state the primal recovery result. Define
\begin{equation}
C_\alpha
:=
1+\Gamma+\alpha B
\end{equation}
and
\begin{equation}
\varepsilon_\alpha
:=
\Delta_D^\alpha
+
\zeta_0
+
L C_\alpha\nu_{\mathcal H}.
\label{eq:eps_aug}
\end{equation}
Further define
\begin{equation}
\Delta_R^\alpha
:=
\sqrt{
\frac{2}{\alpha}
\varepsilon_\alpha
}
+
\sqrt{
\frac{
2L^3C_\alpha\nu_{\mathcal H}
}{\mu}
}.
\label{eq:aug_residual}
\end{equation}

\begin{theorem}[Statistical primal recovery with augmentation]
\label{thm:aug_primal_recovery}
Let the assumptions of Theorem~1 hold, and let
\begin{equation}
\widehat f_{\theta_{\Phi,\alpha}}
\in
\argmin_{f_\theta\in\mathcal H}
\widehat{\mathcal L}_\alpha
\left(
f_\theta,
\widehat\lambda_{\Phi,\alpha}^\star
\right).
\end{equation}
Then, with probability at least $1-4\delta$,
\begin{equation}
\mathbb E_{z\sim\mathcal D}
\left[
[\ell(\widehat f_{\theta_{\Phi,\alpha}},z)]_+
\right]
\leq
\Delta_R^\alpha
+
\zeta(N,\delta)
=:
\Delta_F^\alpha,
\label{eq:aug_feasibility}
\end{equation}
\begin{equation}
\left|
\mathbb E_{\mathcal D_0}
\left[
\ell_0(
\widehat f_{\theta_{\Phi,\alpha}},z
)
\right]
-
P^\star
\right|
\leq
\zeta_0
+
\Delta_D^\alpha
+
L\left(1+\frac{B_0}{s}\right)\nu_{\mathcal H}
\
+
\left(
\Gamma
+
\frac{\alpha}{2}
\max\left\{
B,\frac{\Gamma}{\alpha}
\right\}
\right)
\Delta_R^\alpha .
\label{eq:aug_optimality}
\end{equation}
\end{theorem}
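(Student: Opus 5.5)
The plan is to copy the proof of Theorem~\ref{thm:primal-recovery} step by step. The only change is that the curvature used to convert dual suboptimality into constraint-residual control now comes from the PHB augmentation, and not from $\mu$. I work on the convexified empirical problem over $\overline{\mathcal H}$ and define the convexified augmented dual function
\[
\widehat g_{\alpha,\mathcal C}(\lambda):=\min_{f\in\overline{\mathcal H}}\widehat{\mathcal L}_\alpha(f,\lambda),\qquad \lambda\in\mathbb R_+^N .
\]
For the PHB penalty, $\widehat g_{\alpha,\mathcal C}$ is concave and $(1/\alpha)$-smooth in $\|\cdot\|_{2,N}$, which is the standard Moreau-envelope property of the augmented dual. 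Its gradient is
\[
\nabla\widehat g_{\alpha,\mathcal C}(\lambda)=\tfrac1\alpha\bigl([\lambda+\alpha\ell_S(\widehat f_{\alpha,\mathcal C}(\lambda))]_+-\lambda\bigr),
\]
which is the projected constraint residual. Its maximizers coincide with those of the ordinary convexified dual, so strong duality under Assumption~\ref{as:slater} gives a feasible $\widehat f^\star_{\mathcal C}$ whose residual is zero.

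\textbf{Dual suboptimality.} As in \eqref{eq:convex-dual-opt-gap}, I would use the lifting argument of Lemma~\ref{lem:dual-value} to get $\widehat D^\star_{\alpha,\mathcal C}\le D^\star_\alpha+\zeta_0$, and Lemma~\ref{lem:aug_dual_learnability} to get $\widehat D^\star_{\Phi,\alpha}\ge D^\star_\alpha-\Delta^\alpha_D$. The convexification gap $0\le\widehat g_\alpha-\widehat g_{\alpha,\mathcal C}\le LC_\alpha\nu_{\mathcal H}$ follows from Assumption~\ref{as:approx-convexity}. The factor $C_\alpha$ appears because $\psi_\alpha$ is $(\Gamma+\alpha B)$-Lipschitz in $t$. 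Together these give
\[
\widehat D^\star_{\alpha,\mathcal C}-\widehat g_{\alpha,\mathcal C}(\widehat\lambda^\star_{\Phi,\alpha})\le\varepsilon_\alpha .
\]
Cocoercivity with constant $1/\alpha$ then bounds the gradient gap by $\sqrt{(2/\alpha)\varepsilon_\alpha}$. Because the gradient vanishes at the optimum, this bounds the residual $r:=\frac1\alpha([\widehat\lambda+\alpha\ell_S(\widehat f_{\alpha,\mathcal C})]_+-\widehat\lambda)$ in $\|\cdot\|_{2,N}$. Coordinatewise, $r_n\ge[\ell(\widehat f_{\alpha,\mathcal C},z_n)]_+$, so the empirical positive-part violation of $\widehat f_{\alpha,\mathcal C}(\widehat\lambda^\star_{\Phi,\alpha})$ is at most $\sqrt{2\varepsilon_\alpha/\alpha}$.

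\textbf{Transfer to $\mathcal H$ and to the population.} Next I use the $\mu$-strong convexity of $\ell_0$. The augmented Lagrangian is convex in $f$ and stays $\mu$-strongly convex, and $\widehat f_{\theta_{\Phi,\alpha}}$ is $LC_\alpha\nu_{\mathcal H}$-suboptimal for it. This gives $\|\widehat f_{\theta_{\Phi,\alpha}}-\widehat f_{\alpha,\mathcal C}\|^2\le 2LC_\alpha\nu_{\mathcal H}/\mu$, and Lipschitzness of $\ell$ contributes the second term of $\Delta^\alpha_R$. Proposition~\ref{prop:derived_uc}(1) then adds $\zeta$, which gives \eqref{eq:aug_feasibility}.

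\textbf{Optimality.} I would write
\[
\widehat D^\star_{\Phi,\alpha}=\widehat{\mathcal L}_\alpha(\widehat f_{\theta_{\Phi,\alpha}},\widehat\lambda^\star_{\Phi,\alpha})
\]
and decompose $\E_{D_0}[\ell_0]-P^\star$ exactly as in the proof of Theorem~\ref{thm:primal-recovery}. This produces a $\zeta_0$ term, $\Delta_D^\alpha$, and the gap $P^\star-D^\star_\alpha\le P^\star-D^\star\le L(1+B_0/s)\nu_{\mathcal H}$, using $D^\star_\alpha\ge D^\star$ (the relation $\psi_\alpha\ge\lambda t$). What remains is the averaged penalty $\frac1N\sum\psi_\alpha(\ell_n,\lambda_n)$. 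I would bound it coordinatewise by splitting cases. Where $\lambda_n+\alpha\ell_n\ge0$, $\psi_\alpha=\lambda_n\ell_n+\frac\alpha2\ell_n^2$. Where $\lambda_n+\alpha\ell_n<0$, $\psi_\alpha=-\lambda_n^2/(2\alpha)$. In both cases $|\psi_\alpha|\le(\Gamma+\frac\alpha2\max\{B,\Gamma/\alpha\})|r_n|$. Summing and passing from $\widehat f_{\alpha,\mathcal C}$ to $\widehat f_{\theta_{\Phi,\alpha}}$ with the strong-convexity transfer gives the last term of \eqref{eq:aug_optimality}. A union bound over four events yields probability $1-4\delta$.

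\textbf{Main obstacle.} The hardest step is this penalty-versus-residual case analysis, in particular getting the constant $\frac\alpha2\max\{B,\Gamma/\alpha\}$ exactly. A secondary technical point is checking that $(1/\alpha)$-smoothness holds on the constrained domain $\lambda\le\Gamma$, i.e., that cocoercivity is applied relative to an unconstrained maximizer of $\widehat g_{\alpha,\mathcal C}$.
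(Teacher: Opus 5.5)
Your proposal is correct and follows the paper's proof essentially step for step: the convexified augmented dual is $1/\alpha$-smooth with the PHB residual as its gradient, cocoercivity bounds that residual by $\sqrt{2\varepsilon_\alpha/\alpha}$, the strong-convexity transfer carries the factor $LC_\alpha$, and your case split on the penalty is the identity $\psi_\alpha(t,\lambda)=\lambda r+\frac{\alpha}{2}r^2$ that the paper uses. Two small fixes: (i) since $r_n=\max\{\ell_n,-\lambda_n/\alpha\}$, the correct coordinatewise fact is $[r_n]_+=[\ell_n]_+$, hence $|r_n|\ge[\ell_n]_+$, whereas your $r_n\ge[\ell_n]_+$ fails when $\ell_n<0<\lambda_n$; (ii) the bound $\widehat D^\star_{\alpha,\mathcal C}\le D^\star_\alpha+\zeta_0$ should go through $\widehat D^\star_{\alpha,\mathcal C}\le\widehat P^\star_{\mathcal C}=\widehat D^\star_{\mathcal C}\le D^\star+\zeta_0\le D^\star_\alpha+\zeta_0$, not through a direct lifting of the PHB penalty, because concentrating multiplier mass on small neighborhoods reproduces only the linear part $\frac1N\sum_n\lambda_n\ell_n$ and not the quadratic term of $\psi_\alpha$.
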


The sensitivity result extends in the same way. We additionally assume, as standard for the PHB formulation, that an optimal augmented multiplier is also a Lagrange multiplier of the original constrained problem.

\begin{theorem}[Empirical approximate sensitivity with augmentation]
\label{thm:aug_sensitivity}
Let the assumptions of Theorem~\ref{thm:aug_primal_recovery} and Assumption~7 hold. Then, with probability at least $1-4\delta$, simultaneously for every admissible perturbation $u$,
\begin{equation}
P(u)
\geq
P(0)
-
\mathbb E_{z\sim\mathcal D}
\left[
\widehat\lambda_{\Phi,\alpha}^\star(z)u(z)
\right]
\
-
\Bigg[
L\left(1+\frac{B_0}{s}\right)\nu_{\mathcal H}
+
\overline B_\alpha\nu_\Phi
+
2\zeta_0
+
2\zeta_\alpha
+
\overline B_\alpha\zeta_\Lambda
\Bigg].
\label{eq:aug_sensitivity}
\end{equation}
\end{theorem}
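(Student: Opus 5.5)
The plan is to mirror the proof of Theorem~\ref{thm:empirical-sensitivity}, replacing the standard Lagrangian by the PHB augmented Lagrangian. The key is a weak-duality inequality for the perturbed problem evaluated at the learned augmented multiplier.

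\emph{Step 1: weak duality for the perturbed problem.} Write the perturbed augmented Lagrangian with $\ell(f_\theta,z)-u(z)$ in place of $\ell(f_\theta,z)$. Its dual function $g_{\alpha,u}$ does not shift exactly by $-\E_D[\lambda u]$, because $\psi_\alpha$ is not linear in $t$. I would therefore avoid computing $g_{\alpha,u}$ and work directly from two facts about the PHB penalty:
\begin{itemize}
\item $\psi_\alpha(t,\lambda)\le \lambda t$ whenever $t\le 0$ and $\lambda\ge0$;
\item for any $f_\theta$ feasible for~\eqref{P:primal_perturbed}, $\psi_\alpha(\ell-u,\lambda)\le \lambda(\ell-u)\le -\lambda u$ holds only on the region where $\ell\le 0$, so this route is not enough by itself.
\end{itemize}
Instead I would use the additional hypothesis stated before the theorem: an optimal augmented multiplier $\lambda^\star_\alpha$ is a Lagrange multiplier of the original problem. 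This gives $P(u)\ge P(0)-\E_D[\lambda^\star_\alpha u]-\Delta_{\rm gap}$, by Lemma~\ref{lemma:dual-are-subg} applied with $\lambda^\star_\alpha$, together with $D^\star_\alpha=D^\star$ up to the gap. The duality gap is bounded by $L(1+B_0/s)\nu_{\mathcal H}$, as in the proof of Theorem~\ref{thm:primal-recovery}.

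\emph{Step 2: transfer from $\lambda^\star_\alpha$ to $\widehat\lambda^\star_{\Phi,\alpha}$.} We need to bound
\[
\E_D\big[(\lambda^\star_\alpha-\widehat\lambda^\star_{\Phi,\alpha})u\big].
\]
An alternative that matches the stated constants better is to write the basic inequality as
\[
P(u)\ge P(0)-\E_D[\widehat\lambda u]-\big[P(0)-g_\alpha(\widehat\lambda)\big],
\]
which is justified by the Lagrange-multiplier property. One then decomposes
\[
P(0)-g_\alpha(\widehat\lambda)=[P(0)-D^\star_\alpha]+[D^\star_\alpha-\widehat D^\star_{\Phi,\alpha}]+[\widehat g_\alpha(\widehat\lambda)-g_\alpha(\widehat\lambda)].
\]
The three brackets are controlled as follows.
\begin{itemize}
\item The first bracket is at most $L(1+B_0/s)\nu_{\mathcal H}$, since the augmented and standard dual optima coincide under the multiplier hypothesis.
\item The second bracket is at most $\overline B_\alpha\nu_\Phi+\zeta_0+\zeta_\alpha$, by Lemma~\ref{lem:aug_dual_learnability}.
\item The third bracket needs a uniform-convergence bound over the product class $\{z\mapsto\psi_\alpha(\ell(f_\theta,z),\lambda_\phi(z))\}$. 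Since $\psi_\alpha$ is Lipschitz in $t$ with constant $\Gamma+\alpha B$ and in $\lambda$ with constant $\overline B_\alpha$, the vector-contraction or product argument of Proposition~\ref{prop:derived_uc}, item~3, gives the bound $\zeta_0+\zeta_\alpha+\overline B_\alpha\zeta_\Lambda$.
\end{itemize}
Summing the three contributions gives exactly the bracket in~\eqref{eq:aug_sensitivity}, with probability $1-4\delta$ after a union bound over the events for $\zeta_0$, $\zeta_\alpha$, $\zeta_\Lambda$ and the lifting and approximation events.

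\emph{Main obstacle.} The hardest step is justifying the basic inequality $P(u)\ge g_\alpha(\lambda)-\E_D[\lambda u]$ for an augmented dual function. Weak duality for the perturbed augmented Lagrangian has the wrong shift, because $\psi_\alpha$ is nonlinear in $t$. I would first try to establish $g_\alpha(\lambda)\le g(\lambda)+\text{(gap)}$ by using $\psi_\alpha(t,\lambda)\le\lambda t+\tfrac{\alpha}{2}[t]_+^2$ and evaluating at a near-feasible point. If that fails, I would fall back on the stated standing assumption that $\lambda^\star_\alpha$ is a genuine Lagrange multiplier. In that case the inequality would be proved for the standard $g$, and $g_\alpha$ would be compared with $g$ only through the $\overline B_\alpha$-Lipschitz dependence on the multiplier.
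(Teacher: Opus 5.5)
Your skeleton matches the paper's proof, which reruns the argument of Theorem~\ref{thm:empirical-sensitivity} with Lemma~\ref{lem:aug_dual_learnability} in place of Lemma~\ref{lem:dual-value}. That means a weak-duality inequality at the learned multiplier, followed by exactly your three-bracket decomposition. Your bounds on the three brackets are fine. The first does not even need the multiplier hypothesis: $\psi_\alpha(t,\lambda)\le 0$ for $t\le 0$ gives $D^\star_\alpha\le P(0)$, and $g_\alpha\ge g$ gives $D^\star_\alpha\ge D^\star$.

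\textbf{The gap.} It is the step you flag yourself and then settle by fiat. The inequality
\[
P(u)\ge g_\alpha(\widehat\lambda)-\E_{\mathcal D}[\widehat\lambda u],\qquad \widehat\lambda:=\widehat\lambda^\star_{\Phi,\alpha},
\]
is not justified by the Lagrange-multiplier property. That hypothesis concerns $\lambda^\star_\alpha$, not the learned multiplier. Nothing in the assumptions makes $\E_{\mathcal D}[(\lambda^\star_\alpha-\widehat\lambda)u]$ small uniformly in $u$; this is also why your first route in Step~2 fails, since $u$ can be rescaled. In fact the inequality is false in general:
\begin{itemize}
\item Since $t\mapsto\psi_\alpha(t,\lambda)$ is convex with slope $\lambda$ at $t=0$, one has $\psi_\alpha(t,\lambda)\ge\lambda t$ for all $t$. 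Hence $g_\alpha\ge g$, the opposite of what you need.
\item Concretely, minimize $x^2$ subject to $1-x\le 0$, with $\alpha=2$ and $\lambda=0$. Then $g_\alpha(0)=1/2$ while $P(1/2)=1/4$.
\end{itemize}

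For the same reason your auxiliary inequalities are wrong:
\begin{itemize}
\item $\psi_\alpha(t,\lambda)\le\lambda t$ fails for every $t<0$ when $\lambda>0$.
\item $\psi_\alpha(t,\lambda)\le\lambda t+\frac{\alpha}{2}[t]_+^2$ fails whenever $t<0<\lambda+\alpha t$.
\item No bound $g_\alpha\le g+\text{(gap)}$ holds uniformly in $\lambda$; in the example, $g_\alpha(0)-g(0)=\alpha/(2+\alpha)$.
\item The $\overline B_\alpha$-Lipschitz dependence on the multiplier compares $g_\alpha(\lambda)$ with $g_\alpha(\lambda')$, not $g_\alpha$ with $g$.
\end{itemize}
The paper's short proof makes the same leap (``repeated verbatim''). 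So you have reproduced its argument together with its weak point, rather than closing it.

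\textbf{What does hold.} The PHB structure gives weak duality with a second-order correction. Because $\psi_\alpha$ is nondecreasing in $t$ and $\psi_\alpha(t,\lambda)\le\lambda t+\frac{\alpha}{2}t^2$, every $f_\theta$ feasible for the perturbed problem satisfies $\psi_\alpha(\ell(f_\theta,z),\lambda(z))\le\lambda(z)u(z)+\frac{\alpha}{2}u(z)^2$. Hence
\[
P(u)\ge g_\alpha(\lambda)-\E_{\mathcal D}[\lambda u]-\frac{\alpha}{2}\E_{\mathcal D}[u^2]\qquad\text{for every }\lambda\ge 0 .
\]
With this as the basic inequality, your decomposition goes through unchanged. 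It yields the stated bracket plus $\frac{\alpha}{2}\E_{\mathcal D}[u^2]$, which is a local sensitivity statement rather than a uniform $\epsilon$-subgradient.

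Removing that term requires a lower bound on the ordinary dual value $g(\widehat\lambda)$, not on $g_\alpha(\widehat\lambda)$. When inf and sup can be exchanged (e.g.\ for the convexified problem), use the Moreau-envelope identity $g_\alpha(\lambda)=\sup_{\mu\ge 0}\{g(\mu)-\frac{1}{2\alpha}\|\mu-\lambda\|^2_{L_2(\mathcal D)}\}$. It turns a $g_\alpha$-suboptimality $\epsilon$ into a $g$-suboptimality of at most $\epsilon+B\sqrt{2\alpha\epsilon}$, after which ordinary weak duality applies. The resulting error, however, scales like $\sqrt{\alpha}$ times the square root of the bracket, not linearly as the statement claims.
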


\paragraph{Comparison with the standard Lagrangian.}
The augmented and non-augmented guarantees have the same qualitative structure: the recovery error is determined by statistical estimation, approximation of the optimal multiplier by $\Lambda_\Phi$, and approximation of the convexified predictor by the original hypothesis class. The main difference appears in the conversion from dual suboptimality to primal feasibility.

For the standard Lagrangian, strong convexity of the objective implies that the convexified dual has gradient Lipschitz constant $L^2/\mu$, which produces the term
\begin{equation}
\sqrt{
\frac{2L^2}{\mu}
\varepsilon
}.
\end{equation}
In contrast, the PHB augmented dual is $1/\alpha$-smooth. The corresponding term is therefore
\begin{equation}
\sqrt{
\frac{2}{\alpha}
\varepsilon_\alpha
}.
\end{equation}
Thus, augmentation replaces the dependence on the curvature of the primal objective by a dependence on the augmentation parameter $\alpha$. In particular, larger $\alpha$ directly improves the conversion of dual suboptimality into constraint satisfaction. When $\nu_{\mathcal H}=0$, this removes the dependence on $\mu$ from the feasibility argument entirely.

There is nevertheless a tradeoff: increasing $\alpha$ also increases the Lipschitz constant of the augmented loss, and therefore the terms $C_\alpha$ and $\zeta_\alpha$. Hence, $\alpha$ improves the conditioning of the dual recovery step while potentially increasing approximation and statistical errors associated with the augmented objective.

\subsubsection{Proofs}
\label{app:augmented_proofs}

\paragraph{Proof of Lemma~\ref{lem:aug_dual_learnability}.}
The proof follows that of Lemma~1. The only modification is the comparison between two multipliers. From \eqref{eq:phb_penalty},
\begin{equation}
\left|
\frac{\partial}{\partial\lambda}
\psi_\alpha(t,\lambda)
\right|
=
\left|
\frac{
[\lambda+\alpha t]_+-\lambda
}{\alpha}
\right|
\leq
\overline B_\alpha .
\end{equation}
Hence,
\begin{equation}
\left|
\psi_\alpha(
\ell(f_\theta,z),
\lambda_\phi(z)
)
-
\psi_\alpha(
\ell(f_\theta,z),
\lambda_\alpha^\star(z)
)
\right|
\leq
\overline B_\alpha
|
\lambda_\phi(z)-\lambda_\alpha^\star(z)
|.
\end{equation}
Integrating gives the term $\overline B_\alpha\nu_\Phi$. The population-to-empirical comparison is identical after replacing the weighted constraint-loss uniform-convergence term $\Gamma\zeta$ by $\zeta_\alpha$. The remainder of the proof is unchanged.

\paragraph{Proof of Theorem~\ref{thm:aug_primal_recovery}.}
The proof follows the convexification argument of Theorem~1. The key modification is the dual-smoothness step. Define the PHB residual
\begin{equation}
r_\alpha(f,\lambda)
:=
\frac{
[\lambda+\alpha\ell(f)]_+
-
\lambda
}{\alpha}.
\label{eq:phb_residual}
\end{equation}
For the convexified empirical augmented dual,
\begin{equation}
\nabla
\widehat d_{C,\alpha}(\lambda)
=
r_\alpha(
\widehat f_{C,\alpha}(\lambda),
\lambda
).
\end{equation}
The PHB augmented dual is $1/\alpha$-smooth. Therefore, the cocoercivity argument used in the proof of Theorem~1 now gives
\begin{equation}
\left|
r_\alpha(
\widehat f_{C,\alpha}
(
\widehat\lambda_{\Phi,\alpha}^\star
),
\widehat\lambda_{\Phi,\alpha}^\star
)
\right|_{2,N}
\leq
\sqrt{
\frac{2}{\alpha}
\varepsilon_\alpha
}.
\label{eq:aug_cocoercivity}
\end{equation}
Moreover, coordinatewise,
\begin{equation}
[\ell(f,z)]_+
=
[r_\alpha(f,\lambda)(z)]_+,
\end{equation}
so \eqref{eq:aug_cocoercivity} directly controls empirical constraint violation.

The transfer from $\operatorname{conv}(\mathcal H)$ to $\mathcal H$ proceeds exactly as in Theorem~1. The only change is that the augmented Lagrangian is $LC_\alpha$-Lipschitz with respect to the approximation error, replacing the factor $L(1+\Gamma)$. Strong convexity then yields the second term in \eqref{eq:aug_residual}. Uniform convergence of the positive-part constraint loss completes \eqref{eq:aug_feasibility}.

For optimality, use the identity
\begin{equation}
\psi_\alpha(t,\lambda)
=
\lambda r
+
\frac{\alpha}{2}r^2,
\qquad
r
=
\frac{
[\lambda+\alpha t]_+-\lambda
}{\alpha}.
\end{equation}
The augmented contribution is therefore controlled by the residual bound \eqref{eq:aug_residual}. The remaining population--empirical and convexification terms are identical to those in the proof of Theorem~1.

\paragraph{Proof of Theorem~\ref{thm:aug_sensitivity}.}
The sensitivity argument is unchanged because an optimal PHB multiplier is also a multiplier of the original constrained problem and therefore has the same sensitivity interpretation. The proof of Theorem~2 can consequently be repeated verbatim after replacing Lemma~1 by Lemma~\ref{lem:aug_dual_learnability}. In particular, the terms $B\nu_\Phi+\Gamma\zeta$ appearing in the standard-dual argument are replaced by $\overline B_\alpha\nu_\Phi+\zeta_\alpha$. The dual-class generalization term $B\zeta_\Lambda$ is unchanged.

\section{Tasks Details }
\label{apx:c_exper_det}

\subsection{MNIST image classification}
\label{app:task:mnist}

\paragraph{Problem formulation.}
We first consider constrained image classification on MNIST. Rather than minimizing a conventional classification loss, the objective penalizes only the norm of the model parameters,
\begin{equation}
    f(\theta)
    =
    \frac{c}{2}\|\theta\|_2^2,
\end{equation}
while correct classification is imposed through pointwise margin constraints.

Let $z_j(x_i;\theta)$ denote the logit assigned to class $j$ for image $x_i$, and let $y_i$ denote its ground-truth class. For every incorrect class $j\neq y_i$, we impose
\begin{equation}
    g_{ij}(\theta)
    :=
    \epsilon
    -
    \left(
        z_{y_i}(x_i;\theta)
        -
        z_j(x_i;\theta)
    \right)
    \leq 0.
\end{equation}
Thus, feasibility requires the correct-class logit to exceed every incorrect-class logit by at least the prescribed margin $\epsilon$. We use
\begin{equation}
    \epsilon = 1,
    \qquad
    c = 0.1.
\end{equation}

MNIST contains
\begin{equation}
    N = 60{,}000
\end{equation}
training images. Since each image generates one constraint for each of its nine incorrect classes, the total number of pointwise constraints is
\begin{equation}
    M
    =
    60{,}000 \times 9
    =
    540{,}000.
\end{equation}

\paragraph{Primal model.}
The classifier is a convolutional neural network composed of two $3\times3$ convolutional layers of width $64$, followed by ReLU activations and max pooling, a fully connected hidden layer of width $32$, and a linear output classifier. The resulting primal network contains
\begin{equation}
    p = 138{,}282
\end{equation}
trainable parameters. Batch normalization is not used.

\paragraph{Dual parameterization.}
For MNIST, the parametric multiplier is implemented as a second convolutional
network with the same architecture as the primal classifier. In particular, the
dual network consists of two $3\times3$ convolutional layers of width $64$ with
ReLU activations and max pooling, followed by a fully connected hidden layer of
width $32$ and a linear output layer. Given an image $x_i$, the network produces
one multiplier for each class,
\begin{equation}
    h_\phi(x_i)\in\mathbb{R}^{10}.
\end{equation}
The outputs are mapped to nonnegative bounded multipliers according to
\begin{equation}
    \lambda_\phi(x_i)
    =
    \Gamma\,
    \sigma\!\left(h_\phi(x_i)\right),
\end{equation}
where $\Gamma$ is the multiplier cap and $\sigma$ is the logistic sigmoid. Since
the true class does not define a margin constraint, its corresponding component
is fixed to zero,
\begin{equation}
    [\lambda_\phi(x_i)]_{y_i}=0.
\end{equation}
The output-layer bias is initialized to $-7$, so that the initial multiplier
predictions are close to zero.
\subsection{FLORA-Bench workflow prediction}
\label{app:task:flora}

\paragraph{Problem formulation.}
We next consider the Coding-GD split of FLORA-Bench, where a graph neural network predicts whether a multi-agent workflow successfully solves a coding task. Each observation corresponds to a workflow--task pair.

The objective encourages successful workflows to rank above unsuccessful workflows within each task. Let $\mathcal{P}_t$ and $\mathcal{N}_t$ denote the successful and unsuccessful workflows associated with task $t$, respectively. We use the pairwise ranking objective
\begin{equation}
    f(\theta)
    =
    \frac{1}{|\mathcal{T}|}
    \sum_{t\in\mathcal{T}}
    \frac{1}{|\mathcal{P}_t||\mathcal{N}_t|}
    \sum_{i\in\mathcal{P}_t}
    \sum_{j\in\mathcal{N}_t}
    \log
    \left(
        1+
        \exp\left(
            -\left[
                s_i(\theta)-s_j(\theta)
            \right]
        \right)
    \right),
\end{equation}
where $s_i(\theta)$ denotes the predicted logit for workflow--task pair $i$.

The pointwise constraints additionally require every individual pair to be classified with sufficiently small binary cross-entropy,
\begin{equation}
    g_i(\theta)
    :=
    \operatorname{BCE}_i(\theta)
    -
    \tau
    \leq 0,
\end{equation}
with
\begin{equation}
    \tau = 0.1.
\end{equation}
The training set contains
\begin{equation}
    N = 24{,}489
\end{equation}
observed workflow--task pairs.

\paragraph{Dataset structure.}
The data form a partially observed task--workflow matrix with
\begin{equation}
    N_{\mathrm{task}} = 57,
    \qquad
    N_{\mathrm{workflow}} = 1026.
\end{equation}
The complete grid therefore contains
\begin{equation}
    57\times 1026
    =
    58{,}482
\end{equation}
possible pairs, of which approximately
\begin{equation}
    \frac{24{,}489}{58{,}482}
    \approx
    0.419
\end{equation}
are observed. Each workflow appears under multiple tasks,
\begin{equation}
    14
    \leq
    N_{\mathrm{tasks/workflow}}
    \leq
    51,
\end{equation}
and each task contains between
\begin{equation}
    221
    \leq
    N_{\mathrm{workflows/task}}
    \leq
    616
\end{equation}
workflows.

\paragraph{Primal model.}
The primal predictor is a graph neural network with
\begin{equation}
    p = 26{,}755{,}073
\end{equation}
trainable parameters. The primal architecture is held fixed across all experiments that vary the capacity of the dual model.

\paragraph{Dual parameterization.}
Unlike MNIST, the dual predictor does not operate directly on the raw workflow graph. Instead, it is conditioned on the penultimate representation produced by the primal GNN. Let
\begin{equation}
    h_\theta(w,t)
    \in
    \mathbb{R}^{d_h}
\end{equation}
denote the primal representation associated with workflow $w$ and task $t$. The multiplier predictor takes this representation as input,
\begin{equation}
    \lambda_\phi(w,t)
    =
    q_\phi
    \left(
        h_\theta(w,t)
    \right).
\end{equation}
Since each observed workflow--task pair has a single BCE constraint, the dual output is scalar,
\begin{equation}
    \lambda_\phi(w,t)
    \in
    \mathbb{R}_{+}.
\end{equation}

The same primal representation is used throughout the capacity experiments, so changing the dual-network width modifies only the expressivity of the multiplier class and not the primal model or its features.
\subsection{TOFU language-model unlearning}
\label{app:task:tofu}

\paragraph{Problem formulation.}
We next study constrained language-model unlearning using the TOFU benchmark. The starting model is Llama-3.2-1B, previously finetuned on
\begin{equation}
    N_{\mathrm{FT}} = 4{,}000
\end{equation}
question--answer pairs describing fictitious authors. The forget set contains
\begin{equation}
    N_{\mathrm{forget}} = 400
\end{equation}
question--answer pairs associated with
\begin{equation}
    N_{\mathrm{authors}} = 20
\end{equation}
authors.

The primal objective preserves performance on a fixed retain set by minimizing its negative log-likelihood,
\begin{equation}
    f(\theta)
    =
    \frac{1}{|\mathcal{R}|}
    \sum_{(x,y)\in\mathcal{R}}
    -\log p_\theta(y\mid x),
\end{equation}
using a retain set of
\begin{equation}
    |\mathcal{R}| = 256
\end{equation}
sequences.

For each forget example $i$, we impose four pointwise constraints. Let
$s_i(\theta)$ denote the length-normalized answer log-probability ratio relative
to the reference model. The first two constraints impose upper and lower bounds
on this score,
\begin{equation}
    s_i(\theta) \leq u_i,
    \qquad
    s_i(\theta) \geq l_i,
\end{equation}
where the bounds are constructed from a per-example target with the prescribed
shift and band. In addition, two constraints control the token-level
log-probability profile by imposing an upper bound on the average
log-probability of the most likely answer tokens and a lower bound on the
average log-probability of the least likely answer tokens. Thus, each forget
example contributes four scalar pointwise constraints. The reported experiments
use a shift and band of $0.25$ and an optimization budget of $20$ epochs.

\paragraph{Dual parameterization.}
Each of the four constraints associated with a forget example has its own
nonnegative multiplier. The parametric formulation replaces the independent
tabular multipliers with a shared dual network,
\begin{equation}
    \lambda_\phi(x_i)
    \in
    \mathbb{R}_{+}^{4}.
\end{equation}
The dual network takes a pooled hidden representation of the forget example and
outputs one multiplier for each constraint,
\begin{equation}
    \lambda_\phi(x_i)
    =
    \left[
        \lambda_{\phi,1}(x_i),
        \lambda_{\phi,2}(x_i),
        \lambda_{\phi,3}(x_i),
        \lambda_{\phi,4}(x_i)
    \right]^\top.
\end{equation}
This construction allows the learned dual to predict the corresponding
multipliers for forget examples that were not used to fit the dual network.

\paragraph{Evaluation metrics.}
In addition to the retain objective and pointwise forgetting constraints, we
report the standard TOFU metrics for memorization, privacy, and utility. Their
aggregate is computed as a harmonic mean,
\begin{equation}
    S_{\mathrm{agg}}
    =
    \operatorname{HMean}
    \left(
        S_{\mathrm{mem}},
        S_{\mathrm{priv}},
        S_{\mathrm{util}}
    \right).
\end{equation}
Reporting these components separately is important because similar aggregate
scores can result from qualitatively different unlearning behaviors.
\subsection{IEEE30 Optimal Power flow dispatch}
\label{app:task:opf}

\paragraph{Problem formulation.}
Optimal power flow (OPF) is the central optimization problem underlying economic dispatch in electrical grids. It is repeatedly solved to allocate generation, determine operating points, and support electricity-market and infrastructure decisions~\cite{cain2012history}. We represent the grid as a network of $N$ buses connected by a set $\mathcal{E}$ of $M$ directed branches~\cite{chatzivasileiadis2018lecture}. Each bus $i$ is associated with complex voltage $v_i$, complex generation $s_i$, and demand $r_i$.

Under the standard $\pi$-section branch model~\cite{coffrin2018powermodels,zimmerman2011matpower}, the complex power flows through branch $(i,j)$ are
\begin{equation}
\begin{aligned}
    f_{ij}
    &=
    \left(y_{ij}+y^C_{ij}\right)^{*}
    \left|\frac{v_i}{t_{ij}}\right|^2
    -
    y_{ij}^{*}
    \frac{v_i v_j^{*}}{t_{ij}}, \\
    f_{ji}
    &=
    \left(y_{ij}+y^C_{ji}\right)^{*}
    |v_j|^2
    -
    y_{ij}^{*}
    \frac{v_i^{*}v_j}{t_{ij}^{*}},
\end{aligned}
\label{eq:opf-flows}
\end{equation}
where $t_{ij}$ denotes the transformer ratio, $y_{ij}$ the line admittance, and $y^C_{ij}$ and $y^C_{ji}$ the shunt admittances.

The generation cost is quadratic in the real generated power,
\begin{equation}
    C(s)
    =
    \sum_{i=1}^{N}
    c_{0i}
    +
    c_{1i}\,\mathrm{Re}(s_i)
    +
    c_{2i}\,\mathrm{Re}^2(s_i).
\label{eq:opf-cost}
\end{equation}
Given a demand vector
\begin{equation}
    r
    =
    [r_1;\ldots;r_N],
\end{equation}
the OPF problem minimizes generation cost while enforcing power balance together with generation, voltage, thermal, and phase-angle limits,
\begin{equation}
\begin{aligned}
    \min_{s,v}\quad
    & C(s) \\
    \mathrm{s.t.}\quad
    & s_i-r_i
    =
    \sum_{j\in n(i)} f_{ij}
    -
    \left(y_i^S\right)^*|v_i|^2, \\
    & s^G_{i,\min}
    \leq
    s_i
    \leq
    s^G_{i,\max}, \\
    & v_{i,\min}
    \leq
    |v_i|
    \leq
    v_{i,\max}, \\
    & |f_{ij}|
    \leq
    f_{ij,\max},
    \qquad
    |f_{ji}|
    \leq
    f_{ji,\max}, \\
    & \theta_{ij,\min}
    \leq
    \angle(v_i v_j^*)
    \leq
    \theta_{ij,\max},
\end{aligned}
\label{eq:opf-full}
\end{equation}
with branch flows defined by~\eqref{eq:opf-flows}. Collecting the grid parameters into $\mathcal{W}$, we write the problem compactly as
\begin{equation}
    \min_{s,v}
    \;
    C(s)
    \qquad
    \mathrm{s.t.}
    \qquad
    g(s,v;r,\mathcal{W})\leq 0,
    \qquad
    h(s,v;r,\mathcal{W})=0.
\label{eq:opf-compact}
\end{equation}

The AC-OPF problem is nonconvex~\cite{lesieutre2005convexity} and NP-hard in general~\cite{bienstock2019nphard}. Classical interior-point methods and convex relaxations can provide accurate solutions~\cite{molzahn2016illustrative,low2014convex,kocuk2016socp}, but repeatedly solving OPF as demand changes can be computationally expensive~\cite{castillo2013computational}. This motivates learning an amortized predictor
\begin{equation}
    [s,v]
    =
    \Phi(r;\mathcal{A},\mathcal{W}),
\end{equation}
typically parameterized by a graph neural network exploiting the grid topology~\cite{owerko2020gnn,owerko2024unsupervised}.

\paragraph{Pointwise constrained learning.}
Supervised approaches regress onto solutions produced by an optimization solver~\cite{guha2019ml,owerko2020gnn}, whereas unsupervised methods directly optimize the OPF objective and physical constraints~\cite{owerko2024unsupervised}. Constraint penalties or expectation-based constrained formulations can nevertheless allow violations on individual demand realizations to be compensated by slack on others~\cite{huang2022deepopfv,pan2023deepopf,wang2021gradient,huang2024unsupervised,fioretto2020predicting,chen2022unsupervised,chamon2020pac}.

Following~\cite{owerko2025pointwise}, we instead consider the pointwise-constrained learning problem
\begin{equation}
\begin{aligned}
    \min_{\mathcal{A}}\quad
    &
    \mathbb{E}_{r\sim\rho}
    \left[
        C\left(
            \Phi(r;\mathcal{A},\mathcal{W})
        \right)
    \right] \\
    \mathrm{s.t.}\quad
    &
    g\left(
        \Phi(r;\mathcal{A},\mathcal{W});
        r,\mathcal{W}
    \right)
    \leq 0,
    \\
    &
    h\left(
        \Phi(r;\mathcal{A},\mathcal{W});
        r,\mathcal{W}
    \right)
    =
    0,
    \qquad
    \forall r.
\end{aligned}
\label{eq:opf-pointwise}
\end{equation}
This formulation associates each demand realization with its own inequality and equality multipliers and can be solved through an augmented-Lagrangian primal--dual procedure~\cite{wierzbicki1977projection,owerko2025pointwise}. The resulting multipliers additionally admit a sensitivity interpretation, quantifying how strongly the optimal dispatch cost depends on the corresponding physical constraints~\cite{boyd2004convex}.

\paragraph{Dataset and primal model.}
We evaluate on the IEEE 30-bus system, which contains
\begin{equation}
    N_{\mathrm{bus}}=30,
    \qquad
    N_{\mathrm{gen}}=6,
    \qquad
    N_{\mathrm{branch}}=41.
\end{equation}
The dataset contains
\begin{equation}
    N_{\mathrm{sample}}
    =
    8{,}000
\end{equation}
demand realizations. The primal predictor is the graph neural network used for pointwise OPF learning in~\cite{owerko2025pointwise}, which maps each demand realization to the corresponding generation and voltage variables.

\paragraph{Dual structure.}
In the pointwise formulation, every demand realization has an independent structured vector of multipliers. The complete tabular representation contains five entries per bus, four per generator, and six per branch, resulting in
\begin{equation}
    M_{\mathrm{tab}}
    =
    5N_{\mathrm{bus}}
    +
    4N_{\mathrm{gen}}
    +
    6N_{\mathrm{branch}}
    =
    420
\end{equation}
multipliers per operating condition.

The first
\begin{equation}
    M_{\mathrm{eq}}
    =
    90
\end{equation}
entries correspond to equality constraints and are therefore free-sign, while the remaining entries correspond to inequality constraints and are nonnegative. Storing one such vector independently for every training realization causes the number of dual variables to grow linearly with the size of the dataset~\cite{park2023selfsupervised,owerko2025pointwise}.

\paragraph{Dual parameterization.}
We replace this per-realization multiplier table with a shared dual network. Rather than maintaining an independent multiplier vector for every demand profile, the parametric dual reads the latent node representations produced by the primal GNN and predicts the corresponding multipliers for buses, generators, and transmission branches.

Let
\begin{equation}
    h_\theta(r)
    \in
    \mathbb{R}^{d_h}
\end{equation}
denote the penultimate representation of the primal GNN for demand realization $r$. The parameterized multiplier is obtained as
\begin{equation}
    \lambda_\phi(r)
    =
    q_\phi
    \left(
        h_\theta(r)
    \right).
\end{equation}
The output preserves the physical grouping of the OPF constraints, so that separate predictions are produced for the multiplier components associated with buses, generators, and branches.

The parametric implementation outputs
\begin{equation}
    M_{\mathrm{param}}
    =
    338
\end{equation}
multipliers per operating condition. The first
\begin{equation}
    90
\end{equation}
components correspond to free-sign equality multipliers, while the remaining outputs are constrained to be nonnegative. The parametric representation omits one two-dimensional branch constraint group corresponding to voltage-angle differences. Consequently, comparisons between the tabular and parametric duals are performed on their common inequality subset,
\begin{equation}
    \mathcal{I}
    =
    \{90,\ldots,337\},
\end{equation}
which contains
\begin{equation}
    |\mathcal{I}|
    =
    248
\end{equation}
inequality multipliers per operating condition.

This construction changes the scaling of the dual representation. Whereas the tabular formulation stores a distinct multiplier vector for every training sample, the parametric formulation stores only the parameters $\phi$ of the shared dual network. More importantly for our setting, it defines
\begin{equation}
    \lambda_\phi(r)
\end{equation}
for previously unseen demand realizations, allowing sample-level constraint sensitivities to be evaluated beyond the training set.

\section{Implementation Details}
\label{apx:b_algo}

We now describe how the parametric dual problem is optimized in practice.
Given the constraint samples
\[
S=\{z_n\}_{n=1}^N,
\]
we write the empirical constraints as
\begin{equation}
    g_n(\theta)
    :=
    \ell(f_\theta,z_n),
    \qquad n=1,\ldots,N,
\end{equation}
where the constraint threshold has been absorbed into the definition of
$\ell$, consistently with the formulation in Section~\ref{sec:every_learn}.

For a constraint value $g$, multiplier $\lambda\geq 0$, and penalty
parameter $\rho>0$, we use the Powell--Hestenes--Rockafellar augmented
Lagrangian term
\begin{equation}
    \psi_\rho(g,\lambda)
    :=
    \frac{
        [\lambda+\rho g]_+^2-\lambda^2
    }{2\rho}.
    \label{eq:impl_phr}
\end{equation}
The empirical augmented Lagrangian associated with a parametric multiplier
$\lambda_\phi\in\Lambda_\Phi$ is therefore
\begin{equation}
    \widehat L_\rho(f_\theta,\lambda_\phi)
    :=
    \frac{1}{N_0}
    \sum_{z_n\in S_0}
    \ell_0(f_\theta,z_n)
    +
    \frac{1}{N}
    \sum_{z_n\in S}
    \psi_\rho
    \bigl(
        \ell(f_\theta,z_n),
        \lambda_\phi(z_n)
    \bigr).
    \label{eq:impl_parametric_alm}
\end{equation}

\paragraph{Pointwise and parametric dual updates.}
The standard pointwise formulation maintains an independent multiplier
$\lambda_n$ for every constraint and performs the projected ascent update
\begin{equation}
    \lambda_n^{+}
    =
    \Pi_{[0,\Gamma]}
    \left(
        \lambda_n+\rho\,\ell(f_\theta,z_n)
    \right),
    \label{eq:pointwise_dual_update}
\end{equation}
where $\Gamma$ is the multiplier bound introduced in
Section~\ref{sec:param}.

The parametric formulation replaces the collection
$\{\lambda_n\}_{n=1}^N$ with a single function
$\lambda_\phi\in\Lambda_\Phi$. At outer iteration $k$, we first evaluate
the current multiplier function on the training constraints,
\begin{equation}
    \lambda_{\phi_k}(z_n),
\end{equation}
and form the corresponding pointwise augmented-Lagrangian ascent targets,
\begin{equation}
    \widetilde{\lambda}_{n}^{\,k+1}
    =
    \Pi_{[0,\Gamma]}
    \left(
        \lambda_{\phi_k}(z_n)
        +
        \rho\,\ell(f_{\theta_k},z_n)
    \right).
    \label{eq:parametric_dual_target}
\end{equation}
When smoothing is used, we replace this target by
\begin{equation}
    \bar{\lambda}_{n}^{\,k+1}
    =
    \beta \lambda_{\phi_k}(z_n)
    +
    (1-\beta)
    \widetilde{\lambda}_{n}^{\,k+1},
    \qquad
    0\leq\beta<1.
    \label{eq:parametric_dual_target_smooth}
\end{equation}

The updated dual function is then obtained by approximately projecting these
pointwise ascent values back onto the parametric family,
\begin{equation}
    \phi_{k+1}
    \approx
    \argmin_{\phi}
    \frac{1}{|\mathcal I_{\mathrm{fit}}|}
    \sum_{n\in\mathcal I_{\mathrm{fit}}}
    \left|
        \lambda_\phi(z_n)
        -
        \bar{\lambda}_{n}^{\,k+1}
    \right|^2.
    \label{eq:parametric_dual_projection}
\end{equation}
Thus, the regression step is not used to fit a separately optimized
pointwise multiplier table. Rather, it implements an approximate projection
of an augmented-Lagrangian dual-ascent step back onto
$\Lambda_\Phi$. No persistent sample-wise multipliers are maintained:
the dual state of the parametric method is entirely represented by
$\phi$.

\paragraph{Primal update.}
For fixed $\phi_k$, the primal parameters are updated by approximately
minimizing the empirical augmented Lagrangian
\begin{equation}
    \theta_{k+1}
    \approx
    \argmin_{\theta}
    \widehat L_\rho(f_\theta,\lambda_{\phi_k}),
    \label{eq:parametric_primal_update}
\end{equation}
using stochastic gradient descent. Equivalently, for a minibatch
$\mathcal B\subseteq S$, the constraint contribution is
\begin{equation}
    \frac{1}{|\mathcal B|}
    \sum_{z_n\in\mathcal B}
    \frac{
        \left[
            \lambda_{\phi_k}(z_n)
            +
            \rho\,\ell(f_\theta,z_n)
        \right]_+^2
        -
        \lambda_{\phi_k}(z_n)^2
    }{2\rho}.
    \label{eq:parametric_primal_minibatch}
\end{equation}
Primal and dual updates are alternated throughout training.

\paragraph{Relation to the parametric dual problem.}
Equations~\ref{eq:parametric_dual_target}--\ref{eq:parametric_dual_projection}
can be interpreted as a practical approximate ascent procedure for the
parametric empirical dual problem. The first step performs the standard
augmented-Lagrangian ascent update at the observed constraints, while the
second projects the resulting multiplier values back onto the shared
function class $\Lambda_\Phi$. The projection is approximate because
$\lambda_\phi$ is represented by a neural network and
equation~\ref{eq:parametric_dual_projection} is solved by a finite number
of stochastic-gradient steps. Consequently, the theoretical results,
which are stated for an exact optimizer of the parametric empirical dual,
should be understood as characterizing the limiting optimization problem;
the implementation provides its stochastic approximate realization.

\paragraph{Held-out evaluation of the dual function.}
To evaluate whether $\lambda_\phi$ learns a transferable multiplier rule
rather than memorizing the observed constraints, we optionally partition
the constraint samples into a dual-fitting set
$\mathcal I_{\mathrm{fit}}$ and a held-out set
$\mathcal I_{\mathrm{test}}$. The primal optimization continues to use all
constraints, whereas equation~\ref{eq:parametric_dual_projection} is
optimized only over $\mathcal I_{\mathrm{fit}}$. Multiplier predictions on
$\mathcal I_{\mathrm{test}}$ therefore provide an out-of-sample evaluation
of the learned dual function.

For evaluation only, we compare these predictions against multipliers
obtained from an independently trained pointwise-dual reference. This
reference is not used as a regression target for the parametric method.
.

\begin{algorithm}
\caption{Parametric augmented-Lagrangian optimization}
\label{alg:parametric_dual_alm}
\begin{algorithmic}[1]
\Require Objective sample $S_0$, constraint sample $S=\{z_n\}_{n=1}^N$,
primal parameters $\theta$, dual parameters $\phi$, penalty $\rho$,
multiplier bound $\Gamma$, smoothing parameter $\beta$,
dual-fitting set $\mathcal I_{\mathrm{fit}}$
\Ensure Trained primal predictor $f_\theta$ and parametric multiplier $\lambda_\phi$

\State Initialize $\theta$ and $\phi$
\State Optionally warm up the primal parameters

\For{$k=1,2,\ldots$}

    \State \textbf{Primal update:}
    \For{each minibatch $\mathcal B\subseteq S$}
        \State Evaluate $\lambda_{\phi_k}(z_n)$ for $z_n\in\mathcal B$
        \State Update $\theta$ by descending a stochastic estimate of
        \Statex \hspace{1.5em}
        $\widehat L_\rho(f_\theta,\lambda_{\phi_k})$
    \EndFor

    \State \textbf{Dual ascent targets:}
    \For{$n\in\mathcal I_{\mathrm{fit}}$}
        \State
        $\displaystyle
        \widetilde{\lambda}_{n}^{\,k+1}
        \gets
        \Pi_{[0,\Gamma]}
        \left(
            \lambda_{\phi_k}(z_n)
            +
            \rho\,\ell(f_{\theta_{k+1}},z_n)
        \right)$
        \State
        $\displaystyle
        \bar{\lambda}_{n}^{\,k+1}
        \gets
        \beta\lambda_{\phi_k}(z_n)
        +(1-\beta)\widetilde{\lambda}_{n}^{\,k+1}$
    \EndFor

    \State \textbf{Projection onto $\Lambda_\Phi$:}
    \For{$j=1,\ldots,T_{\mathrm{dual}}$}
        \State Sample $\mathcal B_\lambda\subseteq\mathcal I_{\mathrm{fit}}$
        \State Update $\phi$ by descending
        \Statex \hspace{1.5em}
        $\displaystyle
        \frac{1}{|\mathcal B_\lambda|}
        \sum_{n\in\mathcal B_\lambda}
        \left|
            \lambda_\phi(z_n)
            -
            \bar{\lambda}_{n}^{\,k+1}
        \right|^2$
    \EndFor

    \State Optionally update $\rho$ according to the constraint-violation schedule

\EndFor
\end{algorithmic}
\end{algorithm}

\section{Additional Results}
\label{apx:d_add_res}

This section provides additional experimental results complementing those presented in the main text. We first report primal-recovery results for MNIST, FLORA-Bench, and a disaggregated analysis of the IEEE 30-bus experiment. We then provide additional capacity and sensitivity studies on MNIST and FLORA-Bench, respectively. We focus these more computationally intensive ablations on MNIST and FLORA-Bench because they permit repeated training and evaluation across multiple dual architectures and perturbation experiments at substantially lower computational cost. Together, these experiments further support the two properties studied in the main text: that parametric duals preserve the quality of the recovered primal solution and that their predictions retain meaningful information about sample-level sensitivity on unseen data.

\subsection{Additional Primal Recovery Results}
\label{app:additional_primal}

\paragraph{Primal recovery on MNIST.}
Figure~\ref{fig:mnist_primal} complements the language-model unlearning experiment in the main text with primal-recovery results on MNIST. The pointwise and parameterized dual formulations produce nearly identical constraint profiles, driving the constraint values close to the feasible region across almost the entire test distribution. At the same time, the two formulations attain comparable primal objective values and essentially identical classification performance. These results show that replacing independent sample-wise multipliers with a shared parametric function $\lambda_\phi(z)$ does not substantially alter the recovered primal model on MNIST.

\begin{figure}[t]
    \centering
    \includegraphics[width=\linewidth]{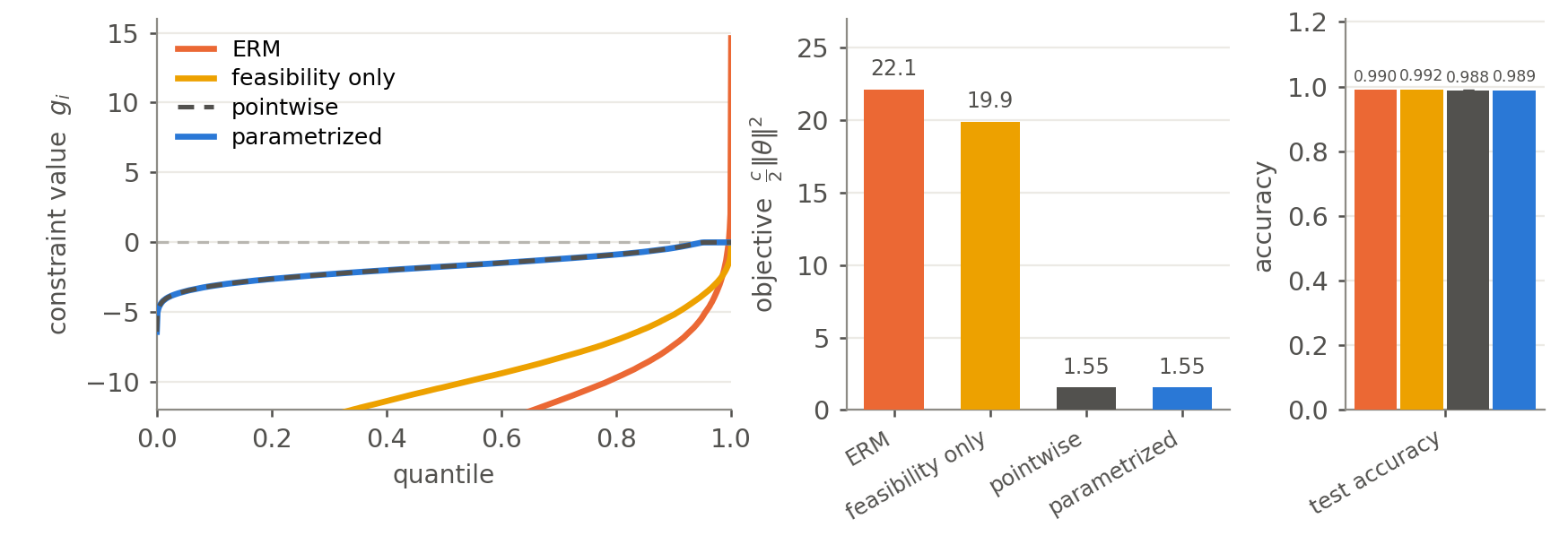}
    \caption{\textbf{Primal recovery on MNIST.}
    Comparison of the parameterized dual formulation with independent pointwise multipliers and the corresponding unconstrained or partially constrained baselines. The parameterized formulation closely matches the pointwise solution in constraint satisfaction and primal objective while preserving classification performance.}
    \label{fig:mnist_primal}
\end{figure}

\paragraph{Primal recovery on FLORA-Bench.}
A similar behavior is observed on FLORA-Bench, as shown in Figure~\ref{fig:flora_primal_app}. Both the pointwise and parameterized dual formulations drive the pointwise constraint violations close to zero and attain the same primal objective value, while maintaining essentially identical test accuracy. In contrast, ERM and feasibility-only training obtain substantially larger objective values and exhibit worse constraint satisfaction. Thus the parameterized formulation preserves the quality of the primal solution. 

\begin{figure}[t]
    \centering
    \includegraphics[width=\linewidth]{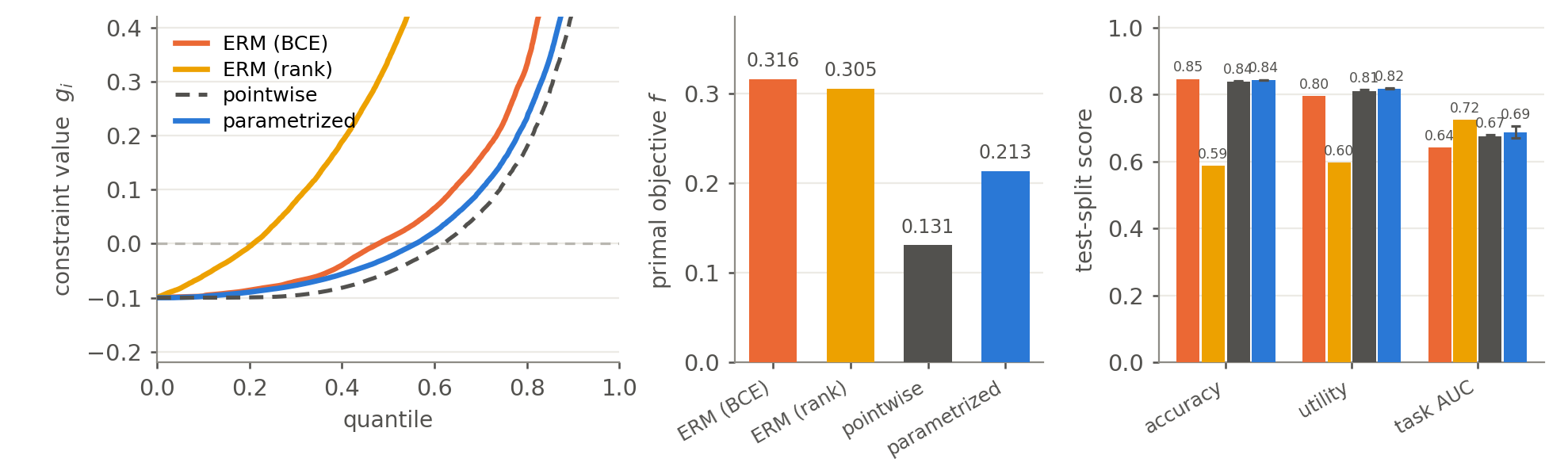}
    \caption{\textbf{Primal recovery on FLORA-Bench.}
    Comparison of ERM, feasibility-only training, independent pointwise multipliers, and parameterized dual multipliers. Pointwise and parameterized duals achieve near-zero constraint violations and comparable objective values while maintaining essentially identical test accuracy.}
    \label{fig:flora_primal_app}
\end{figure}

\paragraph{Constraint violations by type on IEEE 30-bus.}
Table~\ref{tab:ieee30-by-constraint} complements the aggregate IEEE 30-bus results reported in the main text by disaggregating violations according to constraint type. The parameterized formulation maintains small violations across the different physical quantities, despite learning a shared representation for hundreds of heterogeneous constraints. Its error profile differs across constraint classes: the largest deviations arise for reactive generation and branch-flow constraints, whereas active generation and voltage-magnitude violations remain small. This finer-grained analysis reinforces the aggregate result from the main text: the learned dual parameterization remains effective even when simultaneously representing constraints that act on different variables and different parts of the network.

\begin{figure}[t]
\centering
\resizebox{\linewidth}{!}{%
\begin{tabular}{llcccccccccc}
\toprule
& & \multicolumn{2}{c}{$\mathrm{Re}(\mathbf{s}^g)$ (\%)}
  & \multicolumn{2}{c}{$\mathrm{Im}(\mathbf{s}^g)$ (\%)}
  & \multicolumn{2}{c}{$|V|$ (\%)}
  & \multicolumn{2}{c}{$|\mathbf{f}_f|$ (\%)}
  & \multicolumn{2}{c}{$|\mathbf{f}_t|$ (\%)} \\
\cmidrule(lr){3-4} \cmidrule(lr){5-6} \cmidrule(lr){7-8}
\cmidrule(lr){9-10} \cmidrule(lr){11-12}
& & Mean & Max & Mean & Max & Mean & Max & Mean & Max & Mean & Max \\
\midrule
& Param
& 0.32 & 0.53
& 5.68 & 5.68
& 0.12 & 0.12
& 4.90 & 4.90
& 2.49 & 2.49 \\
& Dual-P
& 0.20 & 0.25
& 0.69 & 0.70
& 0.74 & 0.85
& 0.68 & 0.68
& 0.15 & 0.15 \\
& Dual-H
& 0.35 & 0.72
& 0.90 & 0.92
& 3.76 & 5.61
& 0.25 & 0.25
& 0.00 & 0.00 \\
& Dual-S
& 0.55 & 0.86
& 0.14 & 0.14
& 6.00 & 8.21
& 0.00 & 0.00
& 0.00 & 0.00 \\
\bottomrule
\end{tabular}%
}
\caption{\textbf{Constraint violations on IEEE 30-bus, disaggregated by constraint type.}
For each test operating point, the mean and maximum violation are computed separately for active generation $\mathrm{Re}(\mathbf{s}^g)$, reactive generation $\mathrm{Im}(\mathbf{s}^g)$, voltage magnitude $|V|$, and branch-flow magnitudes $|\mathbf{f}_f|$ and $|\mathbf{f}_t|$, and are then averaged across the test set. Violations are expressed as percentages of the corresponding feasible range. Voltage-angle constraints are omitted because no violations are observed. Baseline values are reproduced from~\cite{owerko2025pointwise}, Fig.~11.}
\label{tab:ieee30-by-constraint}
\end{figure}

\subsection{Additional Dual Capacity Results}
\label{app:additional_capacity}

\paragraph{Dual capacity on MNIST.}
We further study how the capacity of the dual model affects recovery of the underlying pointwise multipliers on MNIST. Figure~\ref{fig:mnist_capacity} varies the number of parameters in $\lambda_\phi$ over several orders of magnitude and reports both ranking recovery and the corresponding approximation--generalization decomposition. Increasing dual capacity improves ranking recovery, particularly for the smaller parameterizations, indicating that insufficiently expressive networks are primarily limited by approximation error. As capacity increases further, the gains become progressively smaller and eventually saturate. In contrast to the stronger train--test tradeoff observed in the Flora-Bench experiment, the generalization error on MNIST remains comparatively stable over the range of model sizes considered. This behavior is likely facilitated by the large number of samples available in MNIST, which provides substantially more data for learning increasingly expressive dual functions. This experiment illustrates that the tradeoff between dual expressivity and statistical complexity depends jointly on the capacity of $\Lambda_\Phi$ and on the amount of data available to estimate the multiplier function.

\begin{figure}[t]
    \centering
    \includegraphics[width=\linewidth]{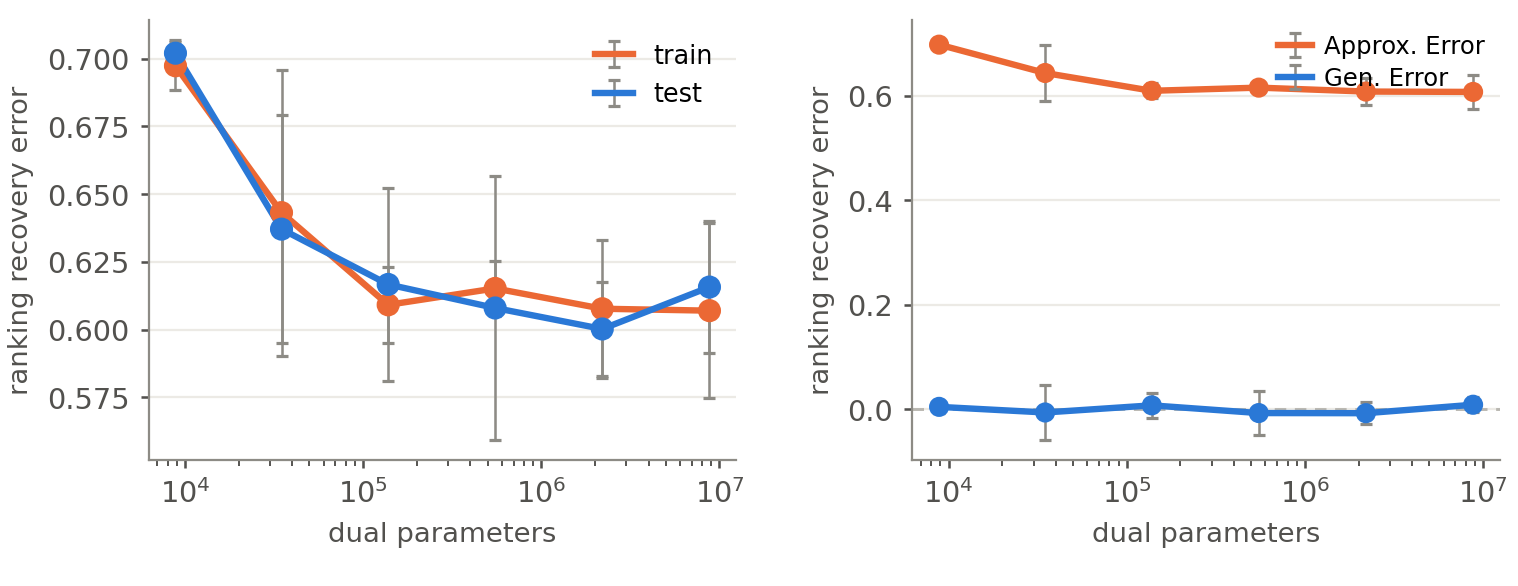}
    \caption{\textbf{Effect of dual parameterization capacity on MNIST.}
    Ranking recovery and error decomposition as the number of parameters in the dual network increases. Increasing capacity primarily reduces approximation error and improves recovery of the pointwise multiplier ranking, while the generalization error remains comparatively stable across model sizes.}
    \label{fig:mnist_capacity}
\end{figure}

\subsection{Additional Sensitivity Results}
\label{app:additional_sensitivity}

\paragraph{Sensitivity on FLORA-Bench.}
Finally, we repeat the direct sensitivity experiment on FLORA-Bench. Figure~\ref{fig:flora_sensitivity} evaluates how the primal objective changes when constraints associated with unseen samples are imposed, grouping the samples according to the magnitude of their predicted multipliers. As the multiplier norm increases, the resulting increase in the optimized objective also tends to grow. The separation is less sharp than in the MNIST experiment reported in the main text, but the overall trend is similar: constraints assigned larger dual values tend to have a greater effect on the primal optimum when enforced. This weaker relationship is consistent with the more heterogeneous structure of FLORA-Bench, where sensitivity can depend jointly on the workflow, the task, and its interaction with the remaining constrained samples. Nevertheless, the observed increase provides direct evidence that the predicted multiplier retains meaningful information about the cost of enforcing previously unseen constraints. Together with the ranking results of Table~\ref{tab:dual_generalization}, this indicates that the learned dual captures both which unseen constraints are important and, to a meaningful extent, how strongly they influence the optimized objective.

\begin{figure}[t]
    \centering
    \includegraphics[width=0.5\linewidth]{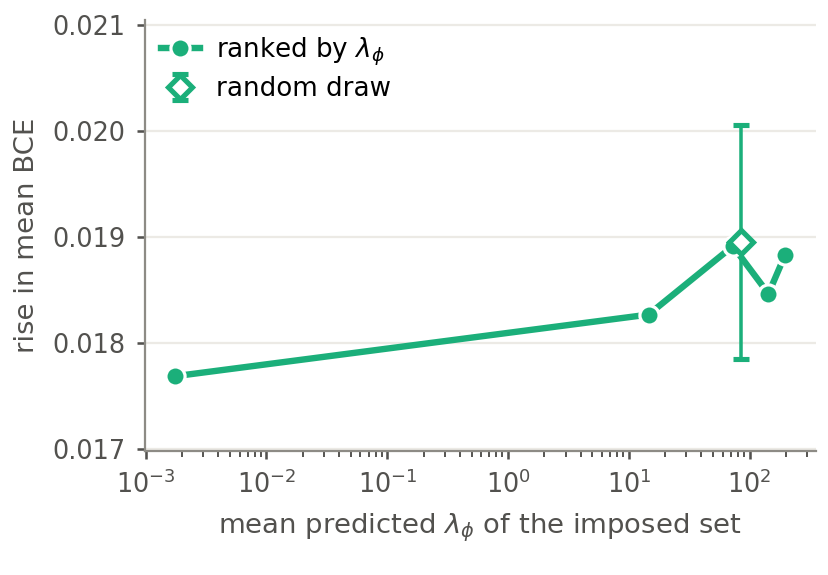}
    \caption{\textbf{Sensitivity estimation on FLORA-Bench.}
    Change in the optimized objective after imposing constraints on unseen samples, grouped according to the magnitude of their predicted multipliers. Sets with larger predicted multiplier values induce larger objective changes, providing an operational validation of the sensitivity information encoded by the learned dual.}
    \label{fig:flora_sensitivity}
\end{figure}

\end{document}